\documentclass[10pt]{article}
\usepackage[preprint]{tmlr}

\usepackage{amsmath,amsfonts,bm}

\def\eqref#1{equation~\ref{#1}}
\def\Eqref#1{Equation~\ref{#1}}

\def\1{\bm{1}}

\DeclareMathAlphabet{\mathsfit}{\encodingdefault}{\sfdefault}{m}{sl}
\SetMathAlphabet{\mathsfit}{bold}{\encodingdefault}{\sfdefault}{bx}{n}

\usepackage{hyperref}
\usepackage{url}
\usepackage{booktabs}
\usepackage{tabularx}
\usepackage[table]{xcolor}
\usepackage{makecell}
\usepackage{amsfonts}
\usepackage{nicefrac}
\usepackage{microtype}
\usepackage{natbib}
\usepackage{amsthm}
\usepackage{amsmath}
\usepackage{mathtools}
\usepackage{amssymb}

\newtheorem{theorem}{Theorem}
\newtheorem{lemma}{Lemma}
\newtheorem{definition}{Definition}
\newtheorem{corollary}{Corollary}
\newtheorem{remark}{Remark}

\newcommand{\esssup}{\operatorname{ess\,sup}}

\newcommand{\rev}[1]{#1}
\newenvironment{revision}{}{}

\title{Generalization Bounds via Vector-Valued $L_p$ Correlation $V_\alpha$}

\author{%
\begin{center}
\begin{tabular}[t]{@{}c@{\hspace{1.5em}}c@{\hspace{1.5em}}c@{}}
Mohammad Hossein & Mohammad Mahdi & Mohammad Hossein \\
Momeni Hamaneh & Mojahedian & Yassaee
\end{tabular}\\[0.65em]
{\normalfont\small
Information Systems and Security Laboratory (ISSL)}\\[0.15em]
{\normalfont\small
Sharif University of Technology, Tehran, Iran}\\[0.45em]
{\normalfont\small
\texttt{momenihamaneh1381@gmail.com}, \texttt{mojahedian@sharif.edu}, \texttt{yassaee@sharif.edu}}
\end{center}%
}

\begin{document}
\maketitle

\begingroup
\renewcommand{\thefootnote}{}
\footnotetext[0]{%
LLM-use disclosure: LLM-based tools were used to assist with manuscript language editing, code development, and debugging. All AI-assisted outputs were reviewed and verified by the authors, who take full responsibility for the scientific content, implementation, and reported results.

\smallskip
Code availability: The source code and experimental implementations are publicly available at \url{https://github.com/momenihamaneh1381/valpha-bounds}.
}
\endgroup

\begin{abstract}
Understanding generalization of randomized learning algorithms remains a central challenge. Mutual-information and total-variation bounds provide complementary ways to quantify data--hypothesis dependence, but they can behave quite differently in high-dimensional regimes. We study a one-parameter family of dependence measures, the vector-valued $L_p$-norm correlation $V_\alpha$, and derive generalization bounds whose dependence terms interpolate between total variation (TV) at $\alpha=1$ and variance and $\chi^2$ related behavior at $\alpha=2$. \rev{For $\alpha>2$, the H\"older-conjugate loss moment is below two, yielding heavy-tailed specializations that can remain finite even when the loss variance is infinite.}
A particularly interpretable specialization occurs at $\alpha=2$, where the bound can be expressed in terms of the variability of the algorithm's output density across datasets, $\int \sqrt{\mathrm{Var}_S[p(w\mid S)]}~ dw$. We show \rev{that the entire $V_\alpha$ family is invariant under smooth bijective reparameterizations of the hypothesis space.} We also introduce an adaptive density stability (ADS) condition that implies explicit control of $V_2$, and we verify it for Gibbs-type posteriors under bounded losses (yielding rates that depend on the inverse temperature and sample size). To address scaling, we extend the framework to a conditional formulation using supersamples, analogous in spirit to conditional mutual information (CMI), and discuss evaluations in the output space as a practical surrogate for estimating dependence. Simulations on Bayesian linear regression illustrate how tuning $\alpha$ can change the tightness of the resulting bounds across regimes. \rev{Additional supersample experiments on noisy MNIST and on a pretrained ResNet-50 for CIFAR-10 show similar behavior of the projected loss-space bounds beyond the original clean-MNIST setting.}
\end{abstract}

\section{Introduction}
\label{sec:introduction}

Understanding why large, over-parameterized models generalize despite their
capacity to memorize training data remains a central challenge in modern
machine learning~\citep{shalev2014understanding}. Information-theoretic
approaches address this question by relating generalization to the dependence
between the training sample \(S\) and the randomized output \(W\) of a learning
algorithm. A representative result of \citet{xu2017information} shows that,
under a subgaussian loss assumption, the expected generalization gap is
controlled by a mutual information (MI) term of the form
\(\sqrt{I(S;W)/n}\).

A large body of subsequent work has refined this basic principle. Conditional
mutual information (CMI) bounds use a supersample construction to avoid
measuring dependence directly between the full sample and the parameter vector
\citep{steinke2020}. Sample-wise and evaluated CMI variants further decompose
the dependence into lower-dimensional quantities
\citep{hellstrom2022new, wang2023sample}, while more general divergence-based
frameworks provide flexible decoupling inequalities
\citep{wang2024generalization}. These developments show that the choice of
dependence measure matters: different information quantities can lead to very
different tightness, dimensional behavior, and estimability.

In this paper, we study a one-parameter family of dependence measures based on
the vector-valued \(L_p\)-norm correlation \(V_\alpha\)
\citep{mojahedian2019correlation}. The parameter $\alpha \ge 1$ provides a tunable interpolation between the $L_1$ endpoint at $\alpha=1$, corresponding to twice the total variation (TV) distance, and the variance and $\chi^2$ related behavior at $\alpha=2$, while larger values of $\alpha$ place greater emphasis on large likelihood-ratio deviations. This produces a family of generalization bounds in which the loss-sensitivity term and the dependence term move in opposite directions as \(\alpha\) varies. \rev{For
\(\alpha>2\), the conjugate exponent \(q=\alpha/(\alpha-1)\) lies in \((1,2)\), so the bound can be paired with lower-order loss moments even when the variance is infinite.} Consequently, the best value of \(\alpha\) is not universal, but can depend on the algorithm, loss, and data distribution.

The case \(\alpha=2\) has a particularly transparent interpretation. In this
regime, the dependence term can be written in terms of the variability of the
algorithm's output density across training samples,
\(\int \sqrt{\mathrm{Var}_S[p(w|S)]}\,dw\). This connects generalization to a
density-level notion of algorithmic stability. We also show that this
\(V_2\)-based quantity is invariant under smooth bijective
reparameterizations of the hypothesis space, so the value of the bound is not
an artifact of the coordinate system used to represent \(W\).

Directly estimating parameter-space quantities such as \(V_\alpha(S;W)\) or
\(V_\alpha(U;W|\tilde S)\) can still be difficult in high-dimensional models.
To address this, we develop a supersample conditional formulation and then
derive projected and loss-space variants. These results do not imply that all
dimension dependence disappears: the conditional dependence term can still be
large if the learned object reveals substantial information about the selector.
Rather, the conditional and projected bounds shift the analysis to quantities
that can be lower-dimensional, more interpretable, and more amenable to
empirical estimation.

\paragraph{Contributions.} In summary, our primary contributions are:
\begin{enumerate}
    \item \textbf{A tunable \(V_\alpha\)-based generalization framework.}
    We derive a family of information-theoretic generalization bounds based on
    \(V_\alpha\). The bounds recover an \(L_1\)/TV-type endpoint at
    \(\alpha=1\), a variance-based specialization at \(\alpha=2\), and \rev{an
    \(n^{-1/2}\) loss-sensitivity prefactor under subgaussian loss assumptions for
    \(\alpha>1\). The full bound has \(n^{-1/2}\)-type decay when the associated
    dependence term remains uniformly bounded in \(n\).} \rev{For \(\alpha>2\), they also admit a heavy-tailed
    specialization requiring only a finite \(q\)-th centered loss moment,
    where \(q=\alpha/(\alpha-1)\in(1,2)\).}

    \item \textbf{A variance-based view of randomized algorithmic stability.}
    We isolate the \(\alpha=2\) case and express the dependence term through
    the variability of the output density \(p(w|S)\) across datasets. \rev{More
    generally, we prove that \(V_\alpha\) is invariant under smooth bijective
    reparameterizations of the hypothesis space for every \(\alpha\ge 1\).}

    \item \textbf{Adaptive Density Stability.}
    We introduce Adaptive Density Stability (ADS), a strong pointwise
    distributional stability condition that gives explicit control of
    \(V_2(S;W)\). For bounded losses, we verify ADS for fixed-temperature Gibbs
    posteriors of the form
    \(p(w|S)\propto \exp(-\beta L_S(w))\pi(w)\), obtaining
    \(\gamma_n=\mathcal{O}(1/n)\) when \(\beta\) and the loss bound are fixed.

    \item \textbf{Conditional, projected, and loss-space \(V_\alpha\) bounds.}
    We extend the framework to a supersample formulation analogous in spirit
    to CMI. The resulting bounds measure dependence between the selector
    \(U\) and the learned object conditional on the supersample \(\tilde S\).
    We further derive projected output-space, loss-difference, loss-pair, and
    binarized variants that can reduce the effective dimension of the
    dependence term when the chosen statistic preserves the relevant
    selected-versus-ghost gap.

    \item \textbf{Empirical evaluation of the resulting estimators.}
    We evaluate the proposed bounds and plug-in estimators on a supersample
    MNIST experiment\rev{, on a pretrained
    ResNet-50 for CIFAR-10,} and on Bayesian linear regression simulations. The
    experiments illustrate how the choice of \(\alpha\), the statistic used for
    projection, and binarization affect the empirical tightness and
    estimability of the resulting \(V_\alpha\)-based quantities.
\end{enumerate}

\section{Preliminaries and Related Work}
\label{sec:preliminaries}

In this section, we establish the foundational concepts from statistical learning theory, contextualize our approach within the recent literature on information-theoretic bounds, and introduce the $V_\alpha$ correlation measure.

\subsection{Statistical Learning Setup}
We follow the standard statistical learning framework. Let $\mathcal{Z} = \mathcal{X} \times \mathcal{Y}$ be the instance space and $\mathcal{W}$ be the hypothesis space. A learning algorithm maps a training sample $S = (Z_1, \dots, Z_n)$, drawn i.i.d.\ from an unknown data-generating distribution $\mu$, to a hypothesis $W \in \mathcal{W}$. This mapping is often randomized and is described by a conditional probability distribution $P_{W|S}$. The joint distribution over hypotheses and samples is $P_{W,S} = \mu^{\otimes n} \otimes P_{W|S}$.

The performance of a hypothesis $w \in \mathcal{W}$ is measured by its population risk with respect to a loss function $\ell(w,z)$: $L_{\mu}(w) \coloneqq \mathbb{E}_{Z \sim \mu}[\ell(w,Z)]$. Because $\mu$ is unknown, the algorithm minimizes the empirical risk $L_{S}(w) \coloneqq \frac{1}{n}\sum_{i=1}^{n}\ell(w,Z_{i})$. The generalization gap is defined as $\mathrm{gen}(W, S) \coloneqq L_{\mu}(W) - L_{S}(W)$. 
We consider the expected generalization error $\overline{\mathrm{gen}}\left(\mu, P_{W|S}\right) \coloneqq \mathbb{E}_{P_{W,S}}\left[L_{\mu}(W) - L_{S}(W)\right],$
and in this work we focus on bounding its absolute value, $\left|\overline{\mathrm{gen}}\left(\mu, P_{W|S}\right)\right|$.
Our analytical decoupling relies on the notion of subgaussianity, which bounds the tail probabilities of random variables.
\begin{definition}[\citet{vershynin2018high}]
	\label{def:subgaussian}
	A zero-mean random variable $X$ is $\sigma^2$-subgaussian if its moment generating function satisfies $\mathbb{E}\left[e^{\lambda X}\right] \le e^{\lambda^2\sigma^2/2}$ for all $\lambda \in \mathbb{R}$. This implies a variance bound $\mathrm{Var}[X] \le \sigma^2$. By Hoeffding's Lemma, any random variable bounded almost surely in $[a,b]$ is $\frac{(b-a)^2}{4}$-subgaussian.
\end{definition}

\begin{definition}
	The $L_\alpha$-norm of a random variable $X$ is defined as $\|X\|_\alpha = \left( \mathbb{E}\bigl[ |X|^\alpha \bigr] \right)^{1/\alpha}$ for $\alpha \ge 1$. The $L_\alpha$-norm is an increasing function of $\alpha$ and is convex in $X$ (and strictly convex for $\alpha>1$).
\end{definition}

\subsection{Information-Theoretic Bounds and Algorithmic Stability}

The concept of algorithmic stability formalizes the intuition that algorithms insensitive to small changes in the training data generalize well~\citep{bousquet2002stability}. Information-theoretic generalization bounds quantify this stability using dependence measures between the data $S$ and the weights $W$. A seminal result by \citet{xu2017information} demonstrated that if $\ell(w,Z)$ is $\sigma^{2}$-subgaussian, then $\left|\overline{\mathrm{gen}}(\mu, P_{W|S})\right| \le \sqrt{\frac{2\sigma^{2}}{n}I(W;S)}$.

\begin{revision}
\paragraph{Relation to PAC-Bayes.}
PAC-Bayes gives a complementary analysis of randomized predictors, including
Gibbs posteriors, through a data-dependent posterior $Q_S$ and a fixed prior
$\pi$~\citep{mcallester,seeger2002pac,catoni2007pac,germain2016pac}. Its
complexity term is typically $\mathrm{KL}(Q_S\Vert\pi)$, whereas our
unconditional dependence quantities compare $P_{W\mid S}$ with the
algorithm-induced marginal $P_W$. These reference choices satisfy
\begin{align}
\mathbb E_S\!\left[\mathrm{KL}(P_{W\mid S}\Vert\pi)\right]
= I(S;W)+\mathrm{KL}(P_W\Vert\pi).
\label{eq:pacbayes_reference_identity}
\end{align}
Thus the two frameworks are complementary rather than exact specializations of
one another: PAC-Bayes typically provides prior-relative, high-probability risk
bounds, while the present results target expected generalization gaps and also
admit conditional and projected dependence measures. A numerical ordering for
Gibbs posteriors cannot be determined from $\beta$ and the loss range alone,
since the PAC-Bayes side additionally depends on the prior-relative KL term.
\end{revision}

\textbf{Scaling to High Dimensions.} A known limitation of the classical MI bound is that $I(S;W)$ scales linearly with the number of parameters, making it vacuous for deep networks. To resolve this, \citet{steinke2020} introduced the CMI bound using a supersample framework, shifting the dependence to the sample size $n$. Recent advancements have further tightened these guarantees using sample-wise evaluations~\citep{hellstrom2022new, wang2023sample} and $f$-divergence frameworks~\citep{wang2024generalization}. In Section~\ref{sec:scaling_high_dim}, we adapt the supersample machinery to our $V_\alpha$ measure, ensuring our framework is similarly equipped for high-dimensional regimes (see Appendix~\ref{app:extended_related_work} for extended discussions).

\textbf{Variance-Based Perspectives.} Our work strongly intersects with recent research connecting generalization to variance. \citet{gastpar2025algorithms} recently proved that an algorithm exhibits tight generalization if and only if the conditional variance of its population loss, $\mathrm{Var}(L_\mu(A(S))|S)$, is small. While their work analyzes variance in the \textit{outcome (loss) space}, our framework analyzes variance directly in the \textit{hypothesis space}. As we will show in Section~\ref{sec:special_cases}, our $\alpha=2$ bound is controlled by $\mathrm{Var}_S[p(w|S)]$. This offers a complementary, distribution-centric definition of stability that is invariant under reparameterization.

\subsection{The \texorpdfstring{$V_\alpha$}{V-alpha} Correlation Measure}
To unify these perspectives, our framework is built upon the vector-valued $L_p$-norm correlation measure $V_\alpha(A;B)$, introduced by \citet{mojahedian2019correlation}. For a joint distribution $p_{AB}$ and marginals $p_A, p_B$, the measure is defined for any $\alpha \ge 1$ as:
\begin{align}
    \label{eq:v_alpha_def}
    V_\alpha(A; B) \coloneqq \mathbb{E}_{b \sim p_B}\left[ \left( \mathbb{E}_{a \sim p_A}\left| \frac{p_{B|A}(b|a)}{p_B(b)} - 1 \right|^\alpha \right)^{1/\alpha} \right].
\end{align}
The measure is non-decreasing in $\alpha$ and provides a tunable bridge between well-known dependence measures. For $\alpha=1$, \rev{it equals the $L_1$ distance, i.e., twice the total variation distance: $V_1(A;B) = \|p_{AB} - p_A p_B\|_1 = 2\,\mathrm{TV}(P_{AB},P_AP_B)$.} For $\alpha > 1$, it acts as an exponential divergence metric closely related to the R\'enyi mutual information of order $\alpha$ \citep{mojahedian2019correlation}.

A key property of $V_\alpha$, which is central to its analytical tractability, is its convexity. It can be expressed as a sum (or integral) of the $L_\alpha$-norms of the conditional density deviations:
\begin{align}
    V_{\alpha}(A; B) = \int_{\mathcal{B}} \left\| p_{B|A}(b|\cdot) - p_B(b) \right\|_{\alpha, p_A} db.
\end{align}
Since the $L_\alpha$-norm is convex for $\alpha \ge 1$, \rev{$V_\alpha(A;B)$ is convex with respect to the conditional distributions.} This property enables optimization-based interpretations and facilitates the decoupling analysis required for generalization bounds.



\section{The \texorpdfstring{$V_{\alpha}$}{V-alpha} Information Framework}
\label{sec:main_result}

Having established the necessary preliminaries, we now present our main
\(V_\alpha\)-based decoupling framework. We first bound the absolute expected
generalization error by (i) a \emph{loss-sensitivity} term and (ii) a
\emph{dependence} term measured by \(V_\alpha\). We then discuss how, under
standard subgaussian assumptions, \rev{the explicit loss-sensitivity prefactor scales as
\(\mathcal O(n^{-1/2})\) for any \(\alpha>1\); the complete bound inherits this rate whenever the corresponding dependence term remains \(\mathcal O(1)\) in \(n\).}

\paragraph{A note on dimensionality.}
The bounds in this section evaluate stability directly in the hypothesis space
\(\mathcal W\). This is conceptually useful, but in over-parameterized models
parameter-space dependence measures can be very large and difficult to estimate.
In Section~\ref{sec:scaling_high_dim} we introduce a scalable \emph{conditional}
supersample formulation that measures dependence in the selector space rather
than directly in \(\mathcal W\).

\subsection{A general bound on the generalization error}
\label{subsection:gen_bound}

Our main theorem gives a one-parameter family of bounds on the absolute expected
generalization error. The proof is a standard change-of-measure plus
H\"older-decoupling argument and is given in Appendix~\ref{proof:th-main}.

\begin{theorem}[General \(V_\alpha\) bound]
\label{th-main}
For any learning algorithm and any measurable loss for which the displayed
quantities are well-defined, for every \(\alpha\ge 1\),
\begin{align}
\Bigl|
\overline{\mathrm{gen}}(\mu,P_{W\mid S})
\Bigr|
\;\le\;
\sup_{w\in\mathcal W}
\bigl\|L_S(w)-L_\mu(w)\bigr\|_{\alpha',\,S}\;
V_\alpha(S;W),
\label{eq:thm1_main}
\end{align}
where \(\alpha'\) is the H\"older conjugate of \(\alpha\) (i.e.,
\(1/\alpha+1/\alpha'=1\), with \(\alpha'=\infty\) when \(\alpha=1\)). Here
\(\|\cdot\|_{\alpha',\,S}\) is the \(L_{\alpha'}\) norm with respect to the law of
\(S\sim\mu^{\otimes n}\).
\end{theorem}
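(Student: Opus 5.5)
Write the expected generalization error as a difference of expectations under the joint law and the product of marginals. Since $\mathbb E_{S}[L_S(w)]=L_\mu(w)$ for every fixed $w$, we have
\begin{align*}
\overline{\mathrm{gen}}(\mu,P_{W\mid S})
&= \mathbb E_{P_{W,S}}\bigl[L_\mu(W)-L_S(W)\bigr]\\
&= \int_{\mathcal W}\mathbb E_{S}\Bigl[\bigl(L_\mu(w)-L_S(w)\bigr)\,p(w\mid S)\Bigr]\,dw .
\end{align*}
The centering is the key observation. Subtracting $p_W(w)$ inside the expectation changes nothing, because $\mathbb E_S[(L_\mu(w)-L_S(w))\,p_W(w)]=p_W(w)\bigl(L_\mu(w)-\mathbb E_S L_S(w)\bigr)=0$. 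This gives the change-of-measure identity
\begin{align*}
\overline{\mathrm{gen}}(\mu,P_{W\mid S})
=\int_{\mathcal W}\mathbb E_{S}\Bigl[\bigl(L_\mu(w)-L_S(w)\bigr)\bigl(p(w\mid S)-p_W(w)\bigr)\Bigr]\,dw .
\end{align*}
Here $p_W(w)=\mathbb E_S[p(w\mid S)]$ is the marginal density of $W$, taken with respect to a common dominating measure. That measure is Lebesgue measure in the notation of the paper; for general $\mathcal W$ I would use any $\sigma$-finite measure dominating all $P_{W\mid S}$.

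Next, take absolute values and move them inside the $w$-integral. For each fixed $w$, apply H\"older's inequality in $S$ with exponents $(\alpha',\alpha)$:
\begin{align*}
\Bigl|\mathbb E_S\bigl[(L_\mu(w)-L_S(w))(p(w\mid S)-p_W(w))\bigr]\Bigr|
\le \bigl\|L_S(w)-L_\mu(w)\bigr\|_{\alpha',S}\,\bigl\|p(w\mid \cdot)-p_W(w)\bigr\|_{\alpha,S}.
\end{align*}
Bound the first factor by its supremum over $w\in\mathcal W$ and pull it out of the integral. What remains is $\int_{\mathcal W}\|p(w\mid\cdot)-p_W(w)\|_{\alpha,P_S}\,dw$. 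By the integral representation of $V_\alpha$ in the preliminaries, with $A=S$ and $B=W$, this equals $V_\alpha(S;W)$; equivalently, write $p(w\mid S)-p_W(w)=p_W(w)\bigl(\tfrac{p(w\mid S)}{p_W(w)}-1\bigr)$ and factor out $p_W(w)$. This completes the chain.

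The endpoint $\alpha=1$, $\alpha'=\infty$ is handled by the same H\"older step with the essential supremum norm. I expect the main obstacle to be technical rather than conceptual. Exchanging $\mathbb E_S$ and $\int dw$ (Fubini), and making the centering term vanish, both require integrability. I would justify them under the standing assumption that ``the displayed quantities are well-defined'': if the right-hand side is finite, then $\int \mathbb E_S\bigl|(L_\mu-L_S)(p(w\mid S)-p_W)\bigr|\,dw<\infty$ by the same H\"older bound applied to absolute values, so Tonelli and then Fubini apply. One further point needs care. The set where $p_W(w)=0$ contributes nothing, because there $p(w\mid S)=0$ for $P_S$-almost every $S$.
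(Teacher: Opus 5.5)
Your proposal is correct and follows the paper's proof. Both arguments center using $\mathbb{E}_S[L_S(w)]=L_\mu(w)$, apply H\"older in $S$ with exponents $(\alpha',\alpha)$ at each fixed $w$, and pull out $\sup_w$ of the loss norm. The only difference is presentational: you work directly with the density deviation $p(w\mid S)-p_W(w)$ and the integral form of $V_\alpha$, whereas the paper changes measure through $p_{S\mid W}/p_S$ and converts it to $p_{W\mid S}/p_W$ by Bayes' rule. Your version avoids needing a density for $S$, and it is more explicit than the paper about the Fubini and integrability conditions.
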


\begin{revision}
Theorem~\ref{th-main} should therefore be viewed as the master
one-parameter decoupling bound: the cases $\alpha=1$ and $\alpha=2$
are particularly interpretable reference regimes within the same family,
rather than separate analyses, while intermediate values trade off the
loss-sensitivity and dependence terms.
\end{revision}

\paragraph{Symmetric form.}
A symmetric variant obtained by swapping the roles of \(S\) and \(W\) is stated
and proved in Appendix~\ref{app:symmetry_bound}.

\subsection{Subgaussian specialization (rate form)}
\label{subsection:subgaussian_rate}

Assume that for all \(w\in\mathcal W\), the single-sample loss \(\ell(w,Z)\) is
\(\sigma^2\)-subgaussian under \(Z\sim\mu\). Then for any \(\alpha>1\), the
empirical risk deviation \(L_S(w)-L_\mu(w)\) is \((\sigma^2/n)\)-subgaussian, and
hence admits moment control of the form
\begin{align}
\sup_{w\in\mathcal W}\bigl\|L_S(w)-L_\mu(w)\bigr\|_{\alpha',\,S}
\;\lesssim\;
\sigma\sqrt{\frac{\alpha'}{n}}.
\label{eq:subg_rate_form}
\end{align}
Substituting \eqref{eq:subg_rate_form} into Theorem~\ref{th-main} yields the rate
form
\begin{align}
\Bigl|\overline{\mathrm{gen}}(\mu,P_{W\mid S})\Bigr|
\;\lesssim\;
\sigma\sqrt{\frac{\alpha'}{n}}\;V_\alpha(S;W),
\qquad \alpha>1.
\label{eq:subg_rate_form_gen}
\end{align}
An explicit constant (involving a Gamma-function expression) is stated and
proved in Appendix~\ref{app:subgaussian_corollary}. As \(\alpha\) varies, the two
factors in \eqref{eq:subg_rate_form_gen} move in opposite directions: \(V_\alpha\)
is non-decreasing in \(\alpha\) \citep{mojahedian2019correlation}, whereas the
loss-sensitivity factor depends on \(\alpha'\) and decreases as \(\alpha\)
increases.

\rev{The \(n^{-1/2}\) factor here is the explicit loss-moment contribution. The
complete right-hand side in \eqref{eq:subg_rate_form_gen} scales as
\(\mathcal O(n^{-1/2})\) only under corresponding control of
\(V_\alpha(S;W)\), for example when it is uniformly \(\mathcal O(1)\) in
\(n\); if the dependence term itself decreases with \(n\), the resulting rate
can be faster.}

\subsection{Recovering \texorpdfstring{$n^{-1/2}$}{n-to-the-minus-one-half} rates and the \texorpdfstring{$\alpha=1$}{alpha=1} regime}
\label{subsection:alpha_1_rates}

A critical requirement for any generalization bound is that it decays with
sample size \(n\), typically at \(\mathcal O(n^{-1/2})\). A direct application
of Theorem~\ref{th-main} at \(\alpha=1\) (so \(\alpha'=\infty\)) yields a
worst-case loss deviation:
\begin{align}
\Bigl|\overline{\mathrm{gen}}(\mu,P_{W\mid S})\Bigr|
&\le
\sup_{w\in\mathcal W}
\Bigl\{
\esssup_{S\sim\mu^{\otimes n}}
\bigl|L_S(w)-L_\mu(w)\bigr|
\Bigr\}
\cdot V_1(S;W).
\label{eq:alpha1_bound}
\end{align}
Here \(V_1(S;W)=\|P_{W,S}-P_WP_S\|_1 = 2\,\mathrm{TV}(P_{W,S},P_WP_S)\). By
Pinsker's inequality,
\(\|P_{W,S}-P_WP_S\|_1 \le \sqrt{2I(S;W)}\), so one may further upper-bound the
dependence term using mutual information. However, the \(L_\infty\) loss term in
\eqref{eq:alpha1_bound} is a worst-case deviation over datasets and is often
\(\mathcal O(1)\), rather than decaying with \(n\).

This lack of an \(\mathcal O(n^{-1/2})\) rate at \(\alpha=1\) is an artifact of
the strict worst-case \(L_\infty\) limit. In contrast, for every \(\alpha>1\),
the conjugate \(\alpha'<\infty\) and subgaussianity yields moment control of
\(L_S(w)-L_\mu(w)\), \rev{producing the \(\mathcal O(n^{-1/2})\) loss-sensitivity
factor in \eqref{eq:subg_rate_form_gen}; the rate of the complete bound also
depends on how \(V_\alpha(S;W)\) varies with \(n\).} Thus, by moving away from the \(L^1/L^\infty\)
endpoint and choosing \(\alpha>1\) (\rev{including} \(\alpha=2\) in
Section~\ref{sec:special_cases}), we retain probability-space dependence control
while recovering fast statistical rates.

\begin{revision}
\paragraph{Relation to canonical mutual information.}
No value of \(\alpha\) in Theorem~\ref{th-main} exactly reproduces the
canonical MI bound. The \(V_\alpha\) family controls an
\(L_\alpha\) norm of the centered likelihood ratio, whereas mutual information
averages a logarithm of the likelihood ratio. At \(\alpha=1\), \(V_1\) is the
\(L_1\) distance (twice total variation), so Pinsker's inequality relates this
endpoint to \(I(S;W)\), but the accompanying loss term remains the
\(L_\infty\) deviation in \eqref{eq:alpha1_bound}. Thus the \(V_\alpha\) and
canonical MI approaches are complementary rather than exact specializations of
one another.

\subsection{Heavy-tailed example: finite lower-order moments}
\label{subsection:heavy_tailed}

The continuum beyond \(\alpha=2\) has a useful interpretation when the loss is
heavy-tailed. Let
\(q:=\alpha'=\alpha/(\alpha-1)\). For \(\alpha>2\), we have
\(q\in(1,2)\), so Theorem~\ref{th-main} can be controlled using a lower-order
moment even when a second moment does not exist.

\begin{corollary}[Heavy-tailed specialization]
\label{cor:heavy_tailed}
Fix \(\alpha>2\) and set
\(q=\alpha/(\alpha-1)\in(1,2)\). Suppose
\begin{align}
M_q
&:=
\sup_{w\in\mathcal W}
\left(
\mathbb E_{Z\sim\mu}
\left|\ell(w,Z)-L_\mu(w)\right|^q
\right)^{1/q}
<\infty.
\label{eq:heavy_tail_mq}
\end{align}
Then
\begin{align}
\left|
\overline{\mathrm{gen}}(\mu,P_{W\mid S})
\right|
\le
2^{1/q}\,n^{-1/\alpha}\,M_q\,V_\alpha(S;W).
\label{eq:heavy_tail_bound}
\end{align}
\end{corollary}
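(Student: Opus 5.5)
We apply Theorem~\ref{th-main} with the given \(\alpha>2\), so that \(\alpha'=q\in(1,2)\). It then suffices to show that for every fixed \(w\in\mathcal W\),
\begin{align*}
\bigl\|L_S(w)-L_\mu(w)\bigr\|_{q,S}\le 2^{1/q}\,n^{-1/\alpha}\,M_q ,
\end{align*}
and to take the supremum over \(w\). Note that \(1-1/q=1/\alpha\), so the target rate \(n^{-1/\alpha}\) equals \(n^{1/q-1}\). This is exactly the von Bahr--Esseen rate for sums of independent centered variables with finite \(q\)-th moment, \(1\le q\le 2\).

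Fix \(w\) and set \(X_i:=\ell(w,Z_i)-L_\mu(w)\). These are i.i.d., mean zero, and satisfy \(\mathbb E|X_i|^q\le M_q^q\). Also \(L_S(w)-L_\mu(w)=\frac1n\sum_i X_i\). The key step is the von Bahr--Esseen inequality for independent centered summands with \(1\le q\le 2\):
\begin{align*}
\mathbb E\Bigl|\sum_{i=1}^n X_i\Bigr|^q\le 2\sum_{i=1}^n\mathbb E|X_i|^q\le 2nM_q^q .
\end{align*}
Dividing by \(n^q\) and taking \(q\)-th roots gives
\begin{align*}
\bigl\|L_S(w)-L_\mu(w)\bigr\|_{q,S}\le 2^{1/q}n^{1/q-1}M_q=2^{1/q}n^{-1/\alpha}M_q .
\end{align*}
Since this bound does not depend on \(w\), the supremum in Theorem~\ref{th-main} is bounded by the same quantity, which yields \eqref{eq:heavy_tail_bound}.

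The main obstacle is justifying the constant \(2\) in the moment inequality. If we do not simply cite von Bahr--Esseen (1965), we would prove it by symmetrization, in two steps:
\begin{enumerate}
\item Let \(X_i'\) be an independent copy of \(X_i\). By Jensen's inequality, conditioning on the \(X_i\) and using \(\mathbb E X_i'=0\), we get \(\mathbb E|\sum X_i|^q\le\mathbb E|\sum (X_i-X_i')|^q\).
\item The differences \(X_i-X_i'\) are symmetric. For symmetric independent summands and \(q\le2\), the \(q\)-th moment of the sum is at most the sum of the \(q\)-th moments; this is the symmetric case of von Bahr--Esseen, provable from the Fourier-integral representation of \(|x|^q\).
\end{enumerate}
Combining these with \(\mathbb E|X_i-X_i'|^q\le 2^{q-1}\cdot 2M_q^q\) gives a constant \(2^q\), not \(2\). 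To reach the stated constant we would therefore cite the sharp von Bahr--Esseen form directly.

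As a fallback giving the same rate, we could use Rosenthal/Burkholder: by Burkholder's inequality and then subadditivity of \(x\mapsto x^{q/2}\) (valid since \(q/2\le1\)),
\begin{align*}
\mathbb E\Bigl|\sum X_i\Bigr|^q\le C_q\,\mathbb E\Bigl(\sum X_i^2\Bigr)^{q/2}\le C_q\sum\mathbb E|X_i|^q .
\end{align*}
This changes only the constant.
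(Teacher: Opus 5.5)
Your proof is correct and follows the paper's own argument: apply Theorem~\ref{th-main} with $\alpha'=q$, bound $\|L_S(w)-L_\mu(w)\|_{q,S}$ using the von Bahr--Esseen inequality $\mathbb E|\sum_i X_i|^q\le 2\sum_i\mathbb E|X_i|^q$, and use $1/q-1=-1/\alpha$. Your side remark is also right: the naive symmetrization route only gives the constant $2^q$, so the stated factor $2^{1/q}$ depends on citing the sharp von Bahr--Esseen bound, as the paper does.
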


The proof, given in Appendix~\ref{app:heavy_tail_proof}, follows from the
von Bahr--Esseen moment inequality. Corollary~\ref{cor:heavy_tailed} remains
informative for losses with infinite variance provided the centered
\(q\)-th moment in \eqref{eq:heavy_tail_mq} is finite. The price for this weaker
moment assumption is the slower generic rate
\(n^{-1/\alpha}=n^{-(1-1/q)}\), instead of the \(n^{-1/2}\) rate associated
with finite-variance or subgaussian control. If either \(M_q\) or
\(V_\alpha(S;W)\) is infinite, the specialization is valid but vacuous.

\paragraph{The endpoint \(\alpha=\infty\).}
As \(\alpha\to\infty\), the conjugate exponent satisfies \(q=\alpha'\downarrow1\).
The corresponding dependence endpoint is
\begin{align}
V_\infty(S;W)
:=
\mathbb E_{W\sim P_W}
\left[
\esssup_{S\sim P_S}
\left|
\frac{p(W\mid S)}{p(W)}-1
\right|
\right].
\label{eq:v_infty_def}
\end{align}
Applying the \(L_1\)--\(L_\infty\) H\"older inequality directly to the same
change-of-measure identity used in Theorem~\ref{th-main} gives
\begin{align}
\left|
\overline{\mathrm{gen}}(\mu,P_{W\mid S})
\right|
\le
\sup_{w\in\mathcal W}
\mathbb E_S\left|L_S(w)-L_\mu(w)\right|
\;V_\infty(S;W).
\label{eq:alpha_infty_bound}
\end{align}
For a discrete hypothesis space,
\begin{align}
V_\infty(S;W)
=
\sum_{w\in\mathcal W}
\esssup_{s\sim P_S}
\left|p(w\mid s)-p(w)\right|.
\end{align}
Thus the ``worst-case'' aspect of the \(\alpha=\infty\) endpoint is the
essential-supremum deviation of the algorithm's conditional output law over
datasets. The loss factor itself is only a mean absolute deviation. With only a
first-moment assumption, no universal polynomial decay rate in \(n\) follows
without additional tail or regularity assumptions.
\end{revision}


\section{The Variance Perspective: \texorpdfstring{$\alpha=2$}{alpha=2}}
\label{sec:special_cases}

An important feature of the $V_\alpha$ framework is its ability to interpolate between different types of generalization guarantees.

For $\alpha=2$, the H\"older conjugate is $\alpha'=2$. The $L_2$-norm of the zero-mean random variable $L_S(w) - L_{\mu}(w)$ is, by definition, its standard deviation. This specialization yields a direct and intuitive connection between generalization and the variance of the learning algorithm's output distribution.

\begin{theorem}
	\label{th:variance_bound}
	Suppose the loss $\ell(w,Z)$ is $\sigma^2$-subgaussian under $Z \sim \mu$ for all $w \in \mathcal{W}$. Then the expected generalization error is bounded as:
	\begin{align}
		\left|\overline{\mathrm{gen}}\left(\mu,P_{W|S}\right)\right| \le \int_{\mathcal{W}} \sqrt{\frac{\sigma^2}{n} \mathrm{Var}_S\Big[p(w|S)\Big]} \, dw.
	\end{align}
    \begin{remark}
        For discrete hypothesis spaces, the integral is replaced by a sum.        
    \end{remark}
\end{theorem}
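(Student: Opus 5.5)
The plan is to specialize the change-of-measure identity behind Theorem~\ref{th-main} to $\alpha=2$, but to apply Cauchy--Schwarz pointwise in $w$ rather than taking a supremum over $w$ first. Write $p(w)=\mathbb E_S[p(w\mid S)]$ for the output marginal and $\Delta_S(w):=L_\mu(w)-L_S(w)$. Using Fubini (legitimate under the integrability the statement implicitly assumes), the expected gap is
\begin{align*}
\overline{\mathrm{gen}}(\mu,P_{W\mid S})
=\int_{\mathcal W}\mathbb E_S\bigl[\Delta_S(w)\,p(w\mid S)\bigr]\,dw
=\int_{\mathcal W}\mathbb E_S\bigl[\Delta_S(w)\,\bigl(p(w\mid S)-p(w)\bigr)\bigr]\,dw .
\end{align*}
The second equality holds because $\mathbb E_S[\Delta_S(w)]=0$ for each fixed $w$, since $S\sim\mu^{\otimes n}$ does not depend on $w$. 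Subtracting $p(w)$ inside the expectation therefore changes nothing, and this is exactly the centering that turns $\mathbb E_S[\cdot]$ into a covariance.

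Next, move the absolute value inside the $w$-integral and apply Cauchy--Schwarz in $S$ for each fixed $w$:
\begin{align*}
\bigl|\mathbb E_S[\Delta_S(w)(p(w\mid S)-p(w))]\bigr|
\le \sqrt{\mathbb E_S[\Delta_S(w)^2]}\,\sqrt{\mathrm{Var}_S[p(w\mid S)]}.
\end{align*}
The second factor is exactly the $\mathrm{Var}_S[p(w\mid S)]$ appearing in the statement. This is the same as using Theorem~\ref{th-main} with $\alpha=2$ and rewriting $V_2(S;W)=\int\|p(w\mid\cdot)-p(w)\|_{2,P_S}\,dw=\int\sqrt{\mathrm{Var}_S[p(w\mid S)]}\,dw$, but keeping the loss factor inside the integral.

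It then remains to bound the first factor. Since the $Z_i$ are i.i.d., $\mathbb E_S[\Delta_S(w)^2]=\mathrm{Var}_S[L_S(w)]=\mathrm{Var}_Z[\ell(w,Z)]/n$. By Definition~\ref{def:subgaussian}, $\sigma^2$-subgaussianity of $\ell(w,Z)-L_\mu(w)$ gives $\mathrm{Var}_Z[\ell(w,Z)]\le\sigma^2$, and this holds uniformly in $w$. Substituting yields the integrand $\sqrt{(\sigma^2/n)\,\mathrm{Var}_S[p(w\mid S)]}$. For discrete $\mathcal W$ the same argument runs with sums in place of integrals, which gives the remark.

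The main obstacle is only technical: justifying the Fubini interchange and the measurability of $w\mapsto\mathrm{Var}_S[p(w\mid S)]$. The densities are taken with respect to a common dominating measure (Lebesgue for the $dw$ notation), and I would assume $\mathbb E|\Delta_S(W)|<\infty$; finite variance already makes each pointwise expectation finite. If the right-hand side is infinite, the bound is vacuous, so no further care is needed in that case.
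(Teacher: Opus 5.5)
Your proposal is correct and is essentially the paper's proof: Theorem~\ref{th-main} at $\alpha=2$ (change of measure, centering, Cauchy--Schwarz), the bound $\mathrm{Var}_S[L_S(w)]\le\sigma^2/n$, and the identity $V_2(S;W)=\int_{\mathcal W}\sqrt{\mathrm{Var}_S[p(w\mid S)]}\,dw$. The differences are minor. You apply Cauchy--Schwarz pointwise in $w$ before bounding the loss factor, which gives the slightly sharper intermediate $\int_{\mathcal W}\sqrt{\mathrm{Var}_S[L_S(w)]\,\mathrm{Var}_S[p(w\mid S)]}\,dw$, and you derive the $V_2$ identity directly rather than citing it. Your extra assumption $\mathbb E|\Delta_S(W)|<\infty$ is also automatic whenever the right-hand side is finite: by Tonelli and the same Cauchy--Schwarz step, it is at most that right-hand side plus $\sqrt{\sigma^2/n}$.
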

\begin{proof}[Proof Sketch]
    For $\alpha=2$, Corollary~\ref{cor:subgaussian} simplifies, giving $\left|\overline{\mathrm{gen}}\right| \le 2\sqrt{\frac{\sigma^2}{n}} V_2(S;W)$. The theorem follows by applying an identity from \citet[Sec.~III]{mojahedian2019correlation}, which demonstrates that $V_2(S;W) = \int_{\mathcal{W}} \sqrt{\mathrm{Var}_S[p(w|S)]} \, dw$. In Appendix~\ref{proof:th3}, we refine this argument and show that the leading constant $2$ can be removed, yielding the tighter bound stated in the theorem.
\end{proof}

The bound in Theorem~\ref{th:variance_bound} reveals a direct link between generalization and algorithmic stability. The term $\mathrm{Var}_S[p(w|S)]$ quantifies the variability of the algorithm's output probability density across different training samples. High variance implies that the algorithm's decision-making is highly sensitive to the specific realization of the training data—a hallmark of overfitting—which penalizes the bound. Conversely, low variance implies robustness, leading to a tighter generalization guarantee.

\paragraph{Relationship to Existing Stability Notions.} 
We emphasize that this variance-based notion of stability is distinct from classical deterministic measures like uniform stability~\citep{bousquet2002stability}. While uniform stability offers a worst-case guarantee on the change in the loss function from altering a single data point, our measure captures the average-case sensitivity of the entire output probability distribution.

Furthermore, our framework provides a timely complement to recent variance-based perspectives on learnability. For instance, \citet{gastpar2025algorithms} recently established that algorithms exhibit tight generalization bounds when the conditional variance of their \textit{population loss} given the sample, $\mathrm{Var}(L_\mu(A(S))|S)$, is small. While their work focuses on variance in the outcome (loss) space, our $V_2$ measure operates directly on the algorithm's output probability distribution, $\mathrm{Var}_S[p(w|S)]$. Thus, our approach measures algorithmic stability intrinsically within the hypothesis space, offering a parallel, distribution-centric view of generalization.

\subsection{A Geometric Advantage: Reparameterization Invariance}
\label{subsec:reparameterization}

A common criticism of bounds evaluated directly in the hypothesis space $\mathcal{W}$ is that they can be arbitrarily manipulated by changing the parameterization of the model (e.g., scaling the weights of a neural network while adjusting the activations to leave the function unchanged).

\rev{The reparameterization property is not specific to the variance identity at $\alpha=2$: the entire $V_\alpha$ family is invariant under any smooth bijective coordinate change that is independent of the training sample. The $\alpha=2$ case remains especially interpretable because it also admits the density-variance representation in Theorem~\ref{th:variance_bound}.}

\begin{theorem}[Reparameterization Invariance of $V_\alpha$]
    \label{th:reparameterization}
    Let $\phi : \mathcal{W} \to \mathcal{V}$ be a bijective differentiable transformation with differentiable inverse, let $V=\phi(W)$, and assume the relevant densities exist. Then, \rev{for every $\alpha\ge 1$,}
    \begin{align}
        V_\alpha(S;W)=V_\alpha(S;V).
    \end{align}
\end{theorem}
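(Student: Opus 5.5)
The plan is to use the change-of-variables formula for densities, together with the fact that the likelihood ratio $p(w\mid s)/p(w)$ is a coordinate-free quantity. Because $\phi$ is fixed and does not depend on the training sample, it is a diffeomorphism applied to $W$ alone. The joint law of $(S,V)$ is therefore the pushforward of the joint law of $(S,W)$ under $(s,w)\mapsto(s,\phi(w))$. In particular $P_S$ is unchanged.

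The first step is to write the transformed densities. With $J_\phi(w)$ the Jacobian determinant of $\phi$, which is nonzero because $\phi^{-1}$ is differentiable, we have for every $s$
\begin{align*}
p_{V\mid S}(v\mid s)=p_{W\mid S}(\phi^{-1}(v)\mid s)\,\bigl|J_{\phi^{-1}}(v)\bigr|,
\end{align*}
and by averaging over $s\sim P_S$,
\begin{align*}
p_V(v)=p_W(\phi^{-1}(v))\,\bigl|J_{\phi^{-1}}(v)\bigr|.
\end{align*}
The Jacobian factor is the same in both expressions and does not depend on $s$. It therefore cancels in the ratio, giving $p_{V\mid S}(v\mid s)/p_V(v)=p_{W\mid S}(w\mid s)/p_W(w)$ with $w=\phi^{-1}(v)$. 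Here we must check that $p_V(v)>0$ exactly when $p_W(w)>0$, so that the ratio is defined on the same set, with the usual null-set conventions.

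The second step plugs this into the definition \eqref{eq:v_alpha_def}. The inner quantity $g(v):=\bigl(\mathbb{E}_{s\sim P_S}\lvert p_{V\mid S}(v\mid s)/p_V(v)-1\rvert^\alpha\bigr)^{1/\alpha}$ equals $h(\phi^{-1}(v))$, where $h(w)$ is the corresponding inner quantity for $W$. This holds because $P_S$ is unchanged and the integrand is pointwise equal. The outer expectation is $\mathbb{E}_{V\sim P_V}[g(V)]=\mathbb{E}_{W\sim P_W}[h(W)]$, since $V=\phi(W)$ in law. This expectation identity does not even require a second Jacobian computation. It gives $V_\alpha(S;V)=V_\alpha(S;W)$ for every finite $\alpha\ge1$. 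As a check, one can alternatively use the integral form $\int\lVert p_{V\mid S}(v\mid\cdot)-p_V(v)\rVert_{\alpha,P_S}\,dv$: the factor $|J_{\phi^{-1}}(v)|$ pulls out of the norm and is absorbed by the substitution $dv=|J_\phi(w)|\,dw$.

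The main obstacle is not algebraic but a matter of measure-theoretic bookkeeping. We need the densities with respect to Lebesgue measure to exist on both sides, which is granted by the hypothesis. We also need the conditional densities to transform with the same Jacobian for $P_S$-almost every $s$, which relies on the independence of $\phi$ from $S$. A cleaner way to sidestep this is to note that the ratio $dP_{W\mid S=s}/dP_W$ is a Radon--Nikodym derivative, and that Radon--Nikodym derivatives are preserved under measurable bijections: $dP_{V\mid S=s}/dP_V=(dP_{W\mid S=s}/dP_W)\circ\phi^{-1}$. With this, the argument extends to the essential-supremum endpoint $V_\infty$ in \eqref{eq:v_infty_def} without change.
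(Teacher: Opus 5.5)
Your proposal is correct and takes essentially the same route as the paper: the Jacobian factor does not depend on $S$, so it cancels in the likelihood ratio; this gives $g_V(\phi(w))=g_W(w)$ for the inner $L_\alpha$ profile, and the outer expectation is preserved because $P_V$ is the pushforward of $P_W$ under $\phi$. Your cross-check via the density-deviation integral form matches the paper's own Jacobian-cancellation remark, and your Radon--Nikodym observation is a mild extension that also covers $V_\infty$.
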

\begin{proof}[Proof Sketch]
    Let $v=\phi(w)$ and write $J_\phi(w)$ for the Jacobian. The conditional and marginal densities transform with the same Jacobian factor:
    \begin{align}
      p_{V\mid S}(v\mid s)&=p_{W\mid S}(w\mid s)|\det J_\phi(w)|^{-1},\\
      p_V(v)&=p_W(w)|\det J_\phi(w)|^{-1}.
    \end{align}
    Hence the likelihood-ratio deviation is unchanged,
    \begin{align}
      \frac{p_{V\mid S}(v\mid s)}{p_V(v)}-1
      =\frac{p_{W\mid S}(w\mid s)}{p_W(w)}-1.
    \end{align}
    Since $P_V$ is the pushforward of $P_W$ under $\phi$, substituting this identity into the definition of $V_\alpha$ gives $V_\alpha(S;V)=V_\alpha(S;W)$. Equivalently, in the density-deviation form, the inverse Jacobian multiplying the density difference cancels the Jacobian in $dv$. The full proof is given in Appendix~\ref{proof:reparameterization}.
\end{proof}

\rev{Thus the dependence term in every member of the $V_\alpha$ family is coordinate-invariant under such reparameterizations. For $\alpha=2$, this general invariance agrees with the direct Jacobian cancellation visible in the density-variance formula.}



\section{Adaptive Density Stability (ADS) \& Practical Rates}
\label{sec:stability}

The variance-based bound in Theorem~\ref{th:variance_bound} provides a powerful conceptual link between stability and generalization. However, a key theoretical question remains: under what formal conditions on a learning algorithm is the variance term $\mathrm{Var}_S[p(w|S)]$ guaranteed to be small enough to yield a non-vacuous, decaying generalization bound? 

To address this, we introduce a novel, strong stability condition. We then show that algorithms satisfying this condition achieve standard $\mathcal{O}(1/n)$ generalization rates within our $V_2$ framework.

\begin{definition}[Adaptive Density Stability]
\label{def:adaptive_stability}
A learning algorithm is said to be $(\gamma_n, n)$-adaptively density stable if for all datasets $S, S'$ of size $n$ that differ in exactly one sample, and for all hypotheses $w \in \mathcal{W}$, the following holds:
\begin{align}
	\left|p(w|S) - p(w|S')\right| \le \gamma_n p(w), \label{eq:adaptive_stability}
\end{align}
where $p(w) = \mathbb{E}_S[p(w|S)]$ is the marginal distribution, and the stability parameter $\gamma_n$ may depend on the sample size $n$.
\end{definition}

This definition formalizes a notion of distributional stability where the allowed change in the output density at a point $w$ is restricted relative to the marginal probability $p(w)$. This adaptively places a stronger stability constraint on the low-probability tails of the output distribution. As shown in Appendix~\ref{app:stability}, this strong, pointwise condition implies classical TV stability and Expected Uniform Stability.

Our main result in this section shows that algorithms satisfying ADS have a strictly controlled $V_2$ measure, yielding a generalization bound.

\begin{theorem}
\label{th:stability_bound}
If a learning algorithm is $(\gamma_n, n)$-adaptively density stable, then its $V_2$ correlation measure is bounded by:
\begin{align}
	V_2(S;W) \le \gamma_n \sqrt{\frac{n}{2}}.
\end{align} 
Consequently, for a $\sigma^2$-subgaussian loss, the generalization error is bounded by:
\begin{align}
	\left|\overline{\mathrm{gen}}\left(\mu,P_{W|S}\right)\right| \le \frac{\sigma \gamma_n}{\sqrt{2}}.
\end{align}
\end{theorem}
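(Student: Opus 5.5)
We plan to use the density-variance identity $V_2(S;W)=\int_{\mathcal W}\sqrt{\mathrm{Var}_S[p(w\mid S)]}\,dw$ that underlies Theorem~\ref{th:variance_bound}, and to bound the pointwise variance with the Efron--Stein inequality. Fix $w\in\mathcal W$ and set $f_w(S):=p(w\mid S)$, a function of the $n$ independent samples $Z_1,\dots,Z_n$. Let $S^{(i)}$ be $S$ with $Z_i$ replaced by an independent copy $Z_i'$. Efron--Stein then gives
\begin{align}
\mathrm{Var}_S[f_w(S)] \le \frac12\sum_{i=1}^n \mathbb E\bigl[(f_w(S)-f_w(S^{(i)}))^2\bigr].
\end{align}

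Since $S$ and $S^{(i)}$ differ in exactly one sample, Definition~\ref{def:adaptive_stability} gives $|f_w(S)-f_w(S^{(i)})|\le\gamma_n p(w)$ almost surely. Each summand is therefore at most $\gamma_n^2p(w)^2$, and
\begin{align}
\mathrm{Var}_S[p(w\mid S)]\le \frac{n}{2}\gamma_n^2 p(w)^2 .
\end{align}
Taking square roots and integrating over $w$, using $\int p(w)\,dw=1$, yields $V_2(S;W)\le\gamma_n\sqrt{n/2}$. For a discrete hypothesis space the same argument applies with sums in place of integrals.

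For the generalization statement, we insert this bound into Theorem~\ref{th:variance_bound}:
\begin{align}
\bigl|\overline{\mathrm{gen}}\bigr|\le\sqrt{\tfrac{\sigma^2}{n}}\int\sqrt{\mathrm{Var}_S[p(w\mid S)]}\,dw\le \sqrt{\tfrac{\sigma^2}{n}}\,\gamma_n\sqrt{\tfrac n2}=\frac{\sigma\gamma_n}{\sqrt2}.
\end{align}
The factors of $n$ cancel, which is why the $\mathcal O(1/n)$ choice of $\gamma_n$ (as for Gibbs posteriors) gives an $\mathcal O(1/n)$ rate.

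The main technical point is the measure-theoretic justification of the Efron--Stein step. The ADS condition must hold for all neighboring pairs, or at least almost surely over the draw of $(S,Z_i')$. In addition, $w\mapsto\sqrt{\mathrm{Var}_S[p(w\mid S)]}$ must be measurable so that we can integrate and apply Tonelli. We will state these as the standing assumption that the densities are jointly measurable. Everything else is a direct substitution.
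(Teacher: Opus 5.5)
Your proposal is correct and follows the paper's proof essentially step for step: Efron--Stein applied to $p(w\mid S)$ with the ADS pointwise bound, integration via the identity $V_2(S;W)=\int\sqrt{\mathrm{Var}_S[p(w\mid S)]}\,dw$, and substitution into Theorem~\ref{th:variance_bound}. Your remarks on measurability and on needing ADS only almost surely are harmless refinements that the paper leaves implicit.
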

\begin{proof}[Proof Sketch]
The proof applies the Efron-Stein inequality to bound the variance of the density ratio, avoiding direct integration over high-probability volumes. The full proof is provided in Appendix~\ref{app:proof_stability}.
\end{proof}

\paragraph{On Non-Vacuousness and Practical Algorithms.} 
At first glance, the bound $\frac{\sigma \gamma_n}{\sqrt{2}}$ appears independent of $n$. The utility of this theorem hinges entirely on whether a learning algorithm possesses a stability parameter $\gamma_n$ that decays with $n$. \rev{As a concrete sufficient example, we verify this property for fixed-temperature Gibbs posteriors under bounded losses.}

\textbf{Example 5.1 (Gibbs Posteriors satisfy ADS).} Consider a learning algorithm that outputs a Gibbs posterior, $p(w|S) \propto \exp(-\beta L_S(w)) \pi(w)$. \rev{This distribution is related to the invariant target of idealized Langevin dynamics under suitable regularity conditions; the ADS calculation below is a statement about the Gibbs posterior itself and does not establish ADS for finite-step or finite-time SGLD iterates.} Suppose the loss function $\ell(w,z)$ is bounded in $[0, M]$.
If we change a single data point $z_i$ to $z_i'$, the empirical risk changes by at most $\Delta L_S = M/n$. Consequently, the ratio of the unnormalized densities is bounded by $\exp(\beta M / n)$. 
This implies the algorithm satisfies $\epsilon$-Differential Privacy (DP) with $\epsilon = 2\beta M / n$, yielding:
\begin{align}
    e^{-\epsilon} \le \frac{p(w|S)}{p(w|S')} \le e^{\epsilon}.
\end{align}

To relate this local guarantee to the marginal distribution \(p(w)\), we invoke a standard group privacy argument: extending the above bound along a path of neighboring datasets yields a uniform bound $p(w|S) \le e^{n\epsilon} p(w)$ for all datasets \(S\). Using this together with the DP inequality, we convert the ratio bound into a difference bound:
\[
|p(w|S) - p(w|S')|
\le (e^\epsilon - 1)\max\{p(w|S), p(w|S')\}
\le (e^\epsilon - 1)e^{2\beta M} p(w).
\]
Thus, the Gibbs posterior is $(\gamma_n, n)$-adaptively density stable with $\gamma_n = \left(e^{\frac{2\beta M}{n}} - 1\right)e^{2\beta M}$. For sufficiently large $n$, a first-order Taylor expansion yields $\gamma_n \approx \frac{2\beta M}{n} e^{2\beta M} = \mathcal{O}(1/n)$. 

\begin{revision}
This $\mathcal O(1/n)$ statement is a fixed-temperature statement. For
$0\le \ell\le M$, Hoeffding's lemma gives $\sigma=M/2$, and
Theorem~\ref{th:stability_bound} yields
\begin{align}
B_{\mathrm{ADS}}
=\frac{M}{2\sqrt{2}}e^{2\beta M}
\left(e^{2\beta M/n}-1\right)
=\frac{\beta M^2 e^{2\beta M}}{\sqrt{2}\,n}+\mathcal O(n^{-2})
\end{align}
when $\beta M$ is fixed. Hence the constant grows rapidly with $\beta M$;
in particular, if $\beta=\Theta(n)$ with fixed $M$, this certificate becomes
exponentially conservative. The fast rate should therefore not be interpreted
as uniform non-vacuousness over inverse temperature.
\end{revision}

\rev{At fixed $\beta M$, substituting this decay into Theorem~\ref{th:stability_bound} gives the corresponding $\mathcal O(1/n)$ generalization rate:}
\begin{align}
	\left|\overline{\mathrm{gen}}\left(\mu, P_{W|S}\right)\right| \le \frac{\sigma}{\sqrt{2}}\cdot \mathcal{O}\left(\frac{1}{n}\right) = \mathcal{O}\left(\frac{1}{n}\right).
\end{align}

\rev{This establishes a concrete pathway from the abstract variance-based $V_2$ bound to finite-sample guarantees for fixed-temperature Gibbs posteriors; extensions to other stochastic algorithms require algorithm-specific ADS arguments.}


\section{Scaling to High Dimensions via Conditional and Projected
\texorpdfstring{$V_\alpha$}{V-alpha} Bounds}
\label{sec:scaling_high_dim}

The bounds in Sections~\ref{sec:main_result} and \ref{sec:special_cases} measure
dependence directly between the training sample \(S\) and the learned hypothesis
\(W\). This parameter-space dependence can be difficult to estimate and can grow
quickly in over-parameterized models. To obtain bounds whose information term is
\emph{sample-size limited} (as in conditional mutual information arguments
\citep{steinke2020}), we move to a supersample formulation and measure how much
the algorithm reveals about the \emph{selector} that chooses which examples are
used for training.

Let
\[
    \tilde S
    =
    \{(\tilde Z_{i,0},\tilde Z_{i,1})\}_{i=1}^n
\]
be a supersample of \(2n\) i.i.d.\ examples from \(\mu\). Let
\[
    U=(U_1,\dots,U_n)\sim \mathrm{Unif}(\{0,1\}^n)
\]
be independent of \(\tilde S\). The selected training sample and ghost sample are
\[
    S_i=\tilde Z_{i,U_i},
    \qquad
    \bar S_i=\tilde Z_{i,1-U_i}.
\]
The learning algorithm is run on \(S\) and outputs \(W\). For fixed \(\tilde S\),
define the selected-versus-ghost gap
\begin{align}
    \hat\Delta_{\tilde S}(w,U)
    \coloneqq
    \frac1n\sum_{i=1}^n
    \left[
        \ell(w,\tilde Z_{i,1-U_i})
        -
        \ell(w,\tilde Z_{i,U_i})
    \right].
    \label{eq:supersample_gap}
\end{align}
The standard supersample identity gives
\[
    \overline{\mathrm{gen}}(\mu,P_{W|S})
    =
    \mathbb E_{\tilde S,U,W}
    \left[
        \hat\Delta_{\tilde S}(W,U)
    \right].
\]

\paragraph{Measure-theoretic convention.}
All density ratios below are interpreted as Radon--Nikodym derivatives with
respect to the relevant marginal law. If the corresponding derivative or
integrability condition fails, the associated \(V_\alpha\) quantity is
interpreted as \(+\infty\).

\paragraph{A single decoupling principle.}
All conditional, projected, sample-wise, and binarized bounds in this section
follow from a centered change-of-measure/Hölder decoupling lemma together with a
conditional data-processing inequality for \(V_\alpha\); both are stated and
proved in Appendix~\ref{app:scalable_bounds_lemmas}.

\subsection{Conditional \texorpdfstring{$V_\alpha$}{V-alpha} bound}
\label{subsec:conditional_valpha_main}

For every fixed \(w\) and \(\tilde S\), the map \(U\mapsto
\hat\Delta_{\tilde S}(w,U)\) is centered under the uniform selector:
\[
    \mathbb E_U\!\left[\hat\Delta_{\tilde S}(w,U)\right]=0.
\]
Applying the master decoupling lemma from Appendix~\ref{app:scalable_bounds_lemmas} with conditioning variable \(C=\tilde S\) yields the following bound.

\begin{theorem}[Conditional \(V_\alpha\) generalization bound]
\label{th:conditional_valpha}
For any learning algorithm and any measurable loss for which the displayed
quantities are well-defined, for every \(\alpha\ge 1\),
\begin{align}
    \left|
        \overline{\mathrm{gen}}(\mu,P_{W|S})
    \right|
    \le
    \mathbb E_{\tilde S}
    \left[
        \esssup_{w\in\mathcal W}
        \left\|
            \hat\Delta_{\tilde S}(w,U)
        \right\|_{\alpha',U}
        \,
        V_\alpha(U;W|\tilde S)
    \right],
    \label{eq:conditional_valpha_bound}
\end{align}
where \(\alpha'\) is the Hölder conjugate of \(\alpha\) and
\begin{align}
    V_\alpha(U;W|\tilde S)
    \coloneqq
    \mathbb E_{W\sim P_{W|\tilde S}}
    \left[
        \left(
            \mathbb E_U
            \left|
                \frac{P_{W|U,\tilde S}}{P_{W|\tilde S}}(W|U,\tilde S)
                -1
            \right|^\alpha
        \right)^{1/\alpha}
    \right].
    \label{eq:conditional_valpha_def}
\end{align}
\end{theorem}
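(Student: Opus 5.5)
We would prove Theorem~\ref{th:conditional_valpha} by working conditionally on a fixed supersample $\tilde S$: we first establish a pointwise-in-$\tilde S$ bound and then average over $\tilde S$. For fixed $\tilde S$, the supersample identity gives
\[
\overline{\mathrm{gen}}(\mu,P_{W|S})=\mathbb E_{\tilde S}\bigl[G(\tilde S)\bigr],
\qquad
G(\tilde S):=\mathbb E_{U,W\mid\tilde S}\bigl[\hat\Delta_{\tilde S}(W,U)\bigr].
\]
By Jensen, $|\overline{\mathrm{gen}}|\le\mathbb E_{\tilde S}|G(\tilde S)|$, so it suffices to show, for $P_{\tilde S}$-almost every $\tilde S$,
\[
|G(\tilde S)|\le \esssup_{w}\|\hat\Delta_{\tilde S}(w,U)\|_{\alpha',U}\,V_\alpha(U;W|\tilde S).
\]

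\textbf{Change of measure.} Since $U$ is independent of $\tilde S$, the joint law of $(U,W)$ given $\tilde S$ is $P_U\otimes P_{W|U,\tilde S}$. Writing $r(w,u):=\frac{dP_{W|U=u,\tilde S}}{dP_{W|\tilde S}}(w)$ (taking the bound to be $+\infty$ if this derivative fails to exist), we get
\[
G(\tilde S)=\mathbb E_{W\sim P_{W|\tilde S}}\,\mathbb E_U\bigl[r(W,U)\,\hat\Delta_{\tilde S}(W,U)\bigr].
\]
The key centering step uses $\mathbb E_U[\hat\Delta_{\tilde S}(w,U)]=0$ for every $w$. This holds because, for each $i$, swapping $U_i$ exchanges the two terms of the $i$-th summand. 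Consequently we may subtract $1$ from $r$ at no cost:
\[
G(\tilde S)=\mathbb E_{W}\,\mathbb E_U\bigl[(r(W,U)-1)\,\hat\Delta_{\tilde S}(W,U)\bigr].
\]
This is exactly the master decoupling lemma instantiated with $C=\tilde S$, so we could alternatively invoke it directly.

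\textbf{H\"older decoupling.} For each fixed $w$, apply H\"older in $U$ with exponents $(\alpha,\alpha')$:
\[
\bigl|\mathbb E_U[(r(w,U)-1)\hat\Delta_{\tilde S}(w,U)]\bigr|\le\|r(w,U)-1\|_{\alpha,U}\,\|\hat\Delta_{\tilde S}(w,U)\|_{\alpha',U}.
\]
Next, bound the second factor by its essential supremum over $w$ with respect to $P_{W|\tilde S}$; this is why an $\esssup$ appears rather than a $\sup$. Pulling that constant out, the remaining $\mathbb E_W\|r(W,U)-1\|_{\alpha,U}$ is precisely $V_\alpha(U;W|\tilde S)$ as defined in \eqref{eq:conditional_valpha_def}. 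Taking $\mathbb E_{\tilde S}$ then finishes the proof. At $\alpha=1$ the same argument goes through with the $L_\infty$ norm in $U$.

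\textbf{Main obstacle.} The delicate point is measure-theoretic rather than analytic. We must justify interchanging $\mathbb E_W$ and $\mathbb E_U$ and splitting $\mathbb E[r\hat\Delta]-\mathbb E[\hat\Delta]$ into a single integral, which requires integrability of $\hat\Delta$ under both $P_{W|\tilde S}\otimes P_U$ and the joint law. We would handle this with the stated convention: if the right-hand side is finite, then H\"older itself provides absolute integrability of $(r-1)\hat\Delta$, and Fubini applies. We also need joint measurability of $r(w,u)$, which holds because $U$ is finite, so $r$ is just a finite collection of Radon--Nikodym derivatives.
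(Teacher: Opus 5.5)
Your proposal is correct and matches the paper's argument: the paper obtains the theorem by instantiating its centered change-of-measure/H\"older lemma (Lemma~\ref{lem:master_holder}) with $C=\tilde S$, $A=U$, $T=W$, $\phi_{\tilde S}=\hat\Delta_{\tilde S}$, using the same centering $\mathbb E_U[\hat\Delta_{\tilde S}(w,U)]=0$, and you have unrolled that lemma's proof for this case. One minor simplification: the Radon--Nikodym derivative $r(\cdot,u)$ always exists here, because $P_{W|U=u,\tilde S}\le 2^n P_{W|\tilde S}$ under the uniform, independent selector, so no $+\infty$ convention is needed at that step.
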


Theorem~\ref{th:conditional_valpha} replaces dependence on the full sample \(S\)
by dependence on the discrete selector \(U\in\{0,1\}^n\). This can mitigate the
dimension-scaling issues of parameter-space quantities, but the dependence term
may still be large if the learned object nearly determines the selector.

\subsection{Projected and estimable variants (overview)}
\label{subsec:projected_consequences}

Estimating \(V_\alpha(U;W|\tilde S)\) can still be difficult when \(W\) is high
dimensional. A natural remedy is to replace \(W\) by a lower-dimensional
statistic that \emph{determines the supersample gap}
\(\hat\Delta_{\tilde S}(W,U)\), e.g.\ supersample predictions, loss differences,
or evaluated losses. A conditional data-processing inequality ensures that such
post-processing cannot increase the \(V_\alpha\) term. We state and prove a
hierarchy of theorem-valid projected, sample-wise, and binarized bounds in
Appendix~\ref{app:scalable_bounds}; Table~\ref{tab:holder_instantiations} summarizes the resulting
options and the statistic used in each case.




\section{Experiments}
\label{sec:experiments}

\rev{We empirically evaluate the proposed sample-wise and binarized \(V_\alpha\) bounds using the BinaryJS experimental pipeline of \citet{dong2025exactly}. Specifically, we retain its binary MNIST \(4\)-vs.-\(9\) CNN and pretrained ResNet-50 CIFAR-10 configurations, and augment the evaluation with the proposed \(V_\alpha\)-based quantities described below.}
For each
\(n\in\{75,250,1000,4000\}\), we draw a supersample
\(\tilde S=\{(\tilde Z_{i,0},\tilde Z_{i,1})\}_{i=1}^n\) and selector
\(U\in\{0,1\}^n\), train a CNN on
\(\{\tilde Z_{i,U_i}\}_{i=1}^n\), and use
\(\{\tilde Z_{i,1-U_i}\}_{i=1}^n\) as the ghost sample. We use \(5\)
independent supersample seeds and \(30\) selector seeds per supersample, train
for \(200\) epochs, and evaluate all quantities at the final checkpoint. Both
\(0\)-\(1\) loss and clipped cross-entropy are evaluated on the full
supersample.

\rev{We focus on conditional, coordinatewise loss-space bounds because the
full-vector quantities are difficult to estimate from only $30$ MNIST (or
$40$ CIFAR-10) selector runs at fixed $\tilde S$. For each coordinate $i$,
we estimate the law of $(U_i,T_i)\mid\tilde S$ separately, evaluate its bound
contribution, and then average over coordinates and supersamples. We compare the loss-difference bound
(Theorem~\ref{th:samplewise_loss_difference_valpha}),
the side-averaged single-loss bound
(Theorem~\ref{th:samplewise_single_loss_valpha}),
the loss-pair bound
(Theorem~\ref{th:samplewise_loss_pair_valpha}),
and their binarized counterparts
(Theorems~\ref{th:samplewise_binarized_single_loss_valpha},
\ref{th:samplewise_binarized_loss_difference_valpha}, and
\ref{th:samplewise_binarized_loss_pair_valpha}),
using the conditional forms in
Appendix~\ref{sec:conditional_samplewise_experimental}.} In the figures and discussion below, T8--T13 denote the corresponding
conditional estimators described above.

\rev{For the fair binary selectors used here, the coordinatewise
quantities are independent of $\alpha$
(Lemma~\ref{lem:fair_binary_valpha}).
We plot the mean, over supersample seeds, of the estimated bound
increment divided by the corresponding absolute empirical
selected-versus-ghost gap. These are descriptive plug-in comparisons
rather than finite-sample certificates.
The Bayesian linear regression (BLR) experiments (Appendix~\ref{app:simulation_blr}) separately
examine $\alpha$ tuning using disjoint Monte Carlo pools for
selection and evaluation.}

\begin{figure}[t]
    \centering
    \includegraphics[width=.88\linewidth]{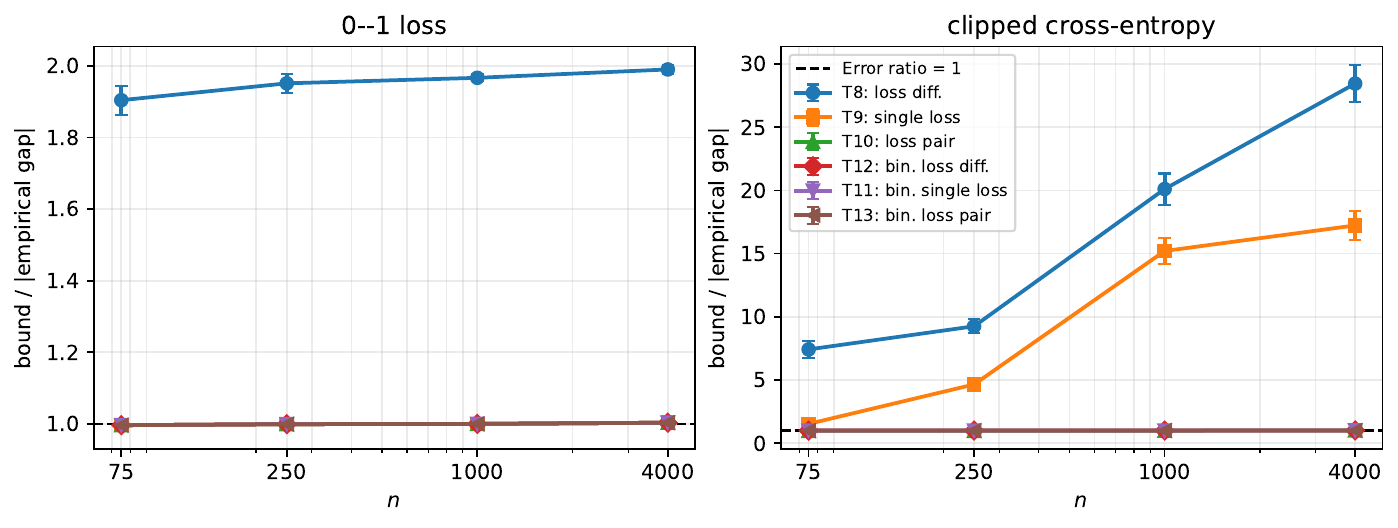}
    \caption{
    Ratio between estimated bound increments and the empirical
    selected-versus-ghost gap on MNIST \(4\) vs.\ \(9\).
    Left: \(0\)-\(1\) loss. Right: clipped cross-entropy. The dashed line
    denotes ratio \(1\). \rev{Error bars show the standard error of the mean
    (SEM) over five supersample seeds.}
    }
    \label{fig:ratio_to_error}
\end{figure}

\rev{Figure~\ref{fig:ratio_to_error} shows that for $0$--$1$ loss,
T9--T13 lie close to the empirical gap, whereas the bounded loss-difference
curve (T8) has ratios of about $1.9$--$2.0$. For clipped cross-entropy,
the loss-pair and binarized estimates (T10--T13) remain close to ratio $1$
across the displayed sample sizes. T8 and T9 are more conservative, with
ratios increasing from about $7.4$ to $28.4$ and $1.5$ to $17.2$,
respectively. T8 uses a worst-case range factor in place of the local
loss-difference magnitude. Thus, these results favor the loss-pair
and binarized statistics for empirical tightness in this setting.}

\rev{For clipped cross-entropy, the same separation between T8/T9 and
T10--T13 is observed under
MNIST label-noise probabilities $0.05$, $0.10$, and $0.15$
(Appendix~\ref{app:additional_deep_experiments}).}

\paragraph{CIFAR-10.}
\begin{revision}
Using the pretrained ResNet-50 configuration from the BinaryJS pipeline
\citep{dong2025exactly}, we evaluate the same conditional, coordinatewise
quantities for $n\in\{1000,5000,20000\}$.
Figure~\ref{fig:cifar10_resnet_ratio} shows that T9--T13 remain close to
ratio $1$ for $0$--$1$ loss, whereas T8 is about $1.8$--$1.9$.
For clipped cross-entropy, T10--T13 remain close to the empirical gap,
while T8 and T9 are more conservative. This reproduces the main qualitative
separation seen on MNIST in a larger model and a ten-class task.
\end{revision}

\begin{figure}[t]
    \centering
    \includegraphics[width=.88\linewidth]{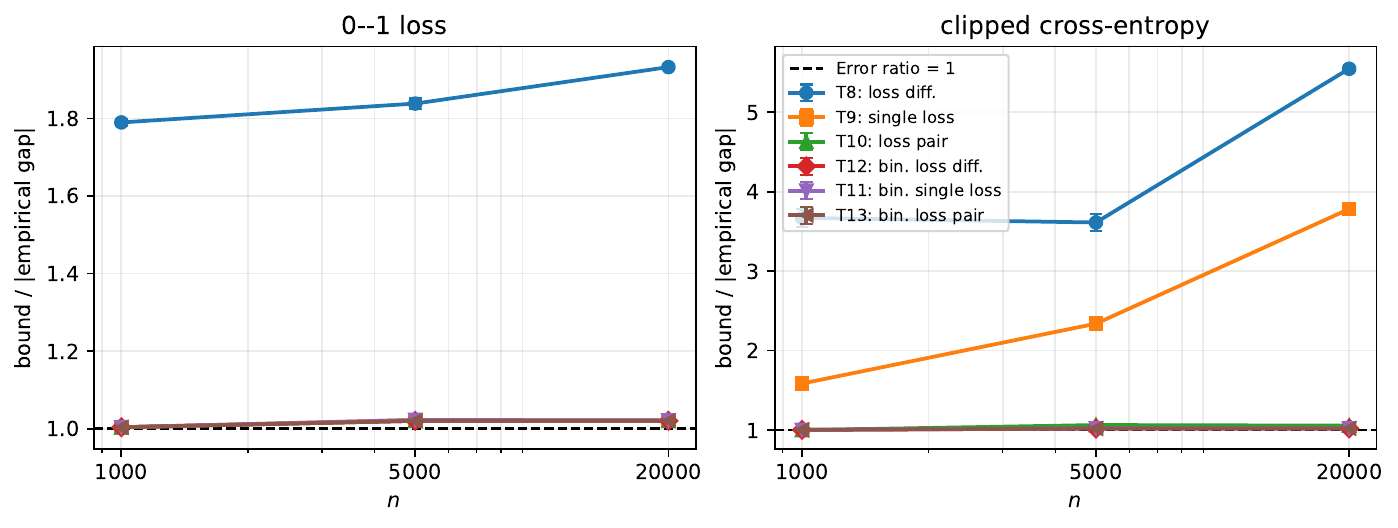}
    \caption{\rev{CIFAR-10 results with a pretrained ResNet-50. The panels show
    the ratio between the estimated bound increment and the empirical
    selected-versus-ghost gap for $0$--$1$ loss (left) and clipped
    cross-entropy (right). The dashed line denotes ratio $1$.
    Error bars show SEM over two supersample seeds.}}
    \label{fig:cifar10_resnet_ratio}
\end{figure}

\paragraph{Bayesian linear regression (BLR).}
To complement the supersample MNIST study (which focuses on estimability of the sample-wise loss-space bounds), we also evaluate the parameter-space and conditional $V_\alpha$ bounds in a conjugate Bayesian linear regression model where $p(w|S)$ is Gaussian and densities can be computed in closed form. The BLR experiments (Appendix~\ref{app:simulation_blr}) illustrate the dimension scaling behavior highlighted in Sections~\ref{sec:main_result}- \ref{sec:scaling_high_dim}: fixed \(\alpha=2\) parameter-space bounds degrade with dimension, while tuning \(\alpha\) can substantially improve tightness, and the conditional supersample formulation is dimension-free but can be numerically conservative in this regime. See Appendix~\ref{app:simulation_blr} (Figures~\ref{fig:blr_extended_conditional}--\ref{fig:blr_extended_conditional_alpha}) for the full setup and results.

\section{Conclusion and Limitations}
\label{sec:conclusion}

We introduced a \(V_\alpha\)-based family of information-theoretic
generalization bounds. The parameter \(\alpha\) separates the loss-sensitivity
term from the dependence term, yielding a tunable interpolation between
TV-like and variance-like regimes. At \(\alpha=2\), the bound admits a
variance-stability interpretation through
\(\mathrm{Var}_S[p(w|S)]\), and this quantity is invariant under smooth
reparameterizations of the hypothesis space. We also introduced adaptive
density stability as a sufficient condition for controlling \(V_2\), and
extended the framework to conditional, projected, sample-wise, and binarized
supersample bounds. Empirically, the finite-state loss-space versions provide
\rev{plug-in estimates that lie close to the observed selected-versus-ghost gap in our MNIST runs. These empirical comparisons are descriptive rather than finite-sample certificates.} \rev{This behavior is also observed under
MNIST label noise and in the CIFAR-10/ResNet-50 experiment.}

\paragraph{Limitations.}
Several limitations remain. First, the base \(V_\alpha(S;W)\) bound is a
parameter-space quantity and can still be large in high-dimensional models.
The conditional and projected bounds avoid direct parameter-space dependence,
but their full-vector versions can be statistically difficult to estimate from
few selector draws. Second, adaptive density stability is a strong pointwise
condition; it is useful for analyzing Gibbs-type randomized algorithms, but it
should not be interpreted as a generic guarantee for deterministic deep
networks without additional smoothing or randomization. Third, \rev{our deep-learning experiments remain limited to a small set of
classification datasets and architectures.} Developing sharper finite-sample estimators for
\(V_\alpha(U;T_{\tilde S}|\tilde S)\) and validating them across larger
architectures and datasets are important directions for future work.

\clearpage
\bibliographystyle{tmlr}
\bibliography{References}
\clearpage

\appendix


\section{Extended Related Work and Modern Baselines}
\label{app:extended_related_work}

The literature on information-theoretic generalization bounds has expanded rapidly since the foundational results connecting MI to expected generalization \citep{russo2019doesdataexplorationoverfit, xu2017information}. While our main text highlights the core narrative, this appendix provides a deeper contextualization of our $V_\alpha$ framework within recent developments in the field.

\subsection{Advancements in Conditional and Sample-Wise Measures}
A fundamental limitation of classical MI bounds is their tendency to become vacuous in high-dimensional or over-parameterized regimes. The introduction of CMI via a supersample framework by \citet{steinke2020} introduced an alternative approach, shifting the dependence from the dimension of the hypothesis space to the size of the training sample.

Building on this, subsequent works have further tightened these guarantees. \citet{hellstrom2022new} introduced a family of generalization bounds using \textit{sample-wise evaluated CMI}, which disaggregates the information measure down to the level of individual data points. Similarly, \citet{wang2023sample} utilized sample-conditioned hypothesis stability, alongside individual mutual information metrics, to provide even sharper, data-dependent bounds. Furthermore, \citet{dong2024towards} extended information-theoretic perspectives to unify generalization bounds beyond standard pointwise learning paradigms.

\textbf{Connection to our work:} The dimensionality challenges identified by these authors apply equally to our base $V_\alpha$ framework (Sections~\ref{sec:main_result} and \ref{sec:special_cases}). By adapting the supersample conditioning technique in Section~\ref{sec:scaling_high_dim}, we extend the $V_\alpha$-based generalization framework to the conditional supersample setting. This conditional extension ensures that the variance-based intuition of $V_2$ can be safely applied to high-capacity models without directly estimating dependence in the full parameter space.

without directly estimating dependence in the full parameter space

\subsection{Divergences, \texorpdfstring{$f$}{f}-Information, and Binary Comparators}
In parallel to conditional frameworks, the community has explored an array of alternative dependency measures to tighten the classical $\sqrt{I(S;W)}$ rate. \citet{harutyunyan2021information} and subsequent works have derived bounds based on Binary KL and Binary Jensen-Shannon (JS) divergences, which often yield tighter empirical estimates for black-box algorithms compared to unbounded continuous measures.

More broadly, \citet{wang2024generalization} recently proposed a highly generalized framework utilizing conditional $f$-information. Because our $V_\alpha$ correlation measure can be viewed as a specific vector-valued $L_p$-norm metric, it is conceptually related to dependence measures based on $f$-divergences.

\textbf{Connection to our work:} While highly abstract frameworks (like $f$-information or Orlicz space unifications~\citep{chu2023unification}) are mathematically encompassing, their generality may make particular algorithmic interpretations less explicit. The distinct advantage of our $V_\alpha$ framework is its tunable parameter ($\alpha$). The explicit H\"older decoupling in Theorem~\ref{th-main} allows us to smoothly transition to the $\alpha=2$ regime, which uniquely collapses into the highly interpretable $\mathrm{Var}_S[p(w|S)]$ expression. Thus, our framework focuses on an explicit family of dependence measures in favor of a clear, actionable link between generalization and algorithmic variance.

\subsection{Variance, Learnability, and Bayesian Stability}
The connection between algorithmic stability and generalization is foundational~\citep{bousquet2002stability, feldman2019high}. Recently, the literature has shifted toward probabilistic and variance-based notions of stability. \citet{moran2023bayesian} cataloged a ``Bayesian stability zoo,'' classifying how different posterior divergences (e.g., KL-stability, TV-stability) govern generalization. 

Most closely related to our $\alpha=2$ result is the recent work by \citet{gastpar2025algorithms}, who demonstrated that an algorithm exhibits tight generalization bounds if and only if the conditional variance of its population loss given the sample is small (i.e., $\mathrm{Var}(L_\mu(A(S)) | S) \le \epsilon$).

\textbf{Connection to our work:} While both our work and \citet{gastpar2025algorithms} isolate \textit{variance} as a fundamental driver of learnability, we operate in complementary spaces. \citet{gastpar2025algorithms} focus on the \textbf{outcome-space variance}—the fluctuation of the final loss metric. In contrast, our $V_2$ bound measures the \textbf{hypothesis-space variance}—the fluctuation of the algorithm's output probability density $\mathrm{Var}_S[p(w|S)]$ over datasets. 

Our approach is agnostic to the loss function during the computation of the stability term (the loss only scales the bound via the subgaussian constant). This makes $V_2$ an intrinsic, geometric property of the learning algorithm itself (as proved by the reparameterization invariance in Theorem~\ref{th:reparameterization}), offering a distinct, model-centric lens on stability.



\section{Proofs for Section~\ref{sec:main_result}: The \texorpdfstring{$V_\alpha$}{V-alpha} Information Framework}
\label{app:proofs_section3}

This section contains the detailed proofs for the general $V_\alpha$ generalization bound, its symmetric property, and the concrete bounds derived under the subgaussian loss assumption.

\subsection{Proof of Theorem~\ref{th-main} (General Generalization Bound)}
\label{proof:th-main}

\begin{proof}
The expected generalization error is defined as the expected difference between the population risk and the empirical risk:
\begin{align}
    \overline{\mathrm{gen}}\left(\mu, P_{W|S}\right) = \mathbb{E}_{P_{W,S}}\left[L_{\mu}(W) - L_{S}(W)\right].
\end{align}
We can rewrite this joint expectation by first conditioning on the hypothesis $W$:
\begin{align}
    \overline{\mathrm{gen}}\left(\mu, P_{W|S}\right) = \mathbb{E}_{W \sim P_W} \left[ \mathbb{E}_{S \sim P_{S|W}} \left[ L_{\mu}(W) - L_{S}(W) \right] \right].
\end{align}
The core of the proof relies on a change-of-measure identity, connecting the conditional expectation over $P_{S|W}$ to an expectation over the marginal data distribution $P_S$:
\begin{align}
    \mathbb{E}_{S \sim P_{S|W}} \left[ L_{\mu}(W) - L_{S}(W) \right] = \mathbb{E}_{S \sim P_S} \left[ \left( L_{\mu}(W) - L_{S}(W) \right) \left( \frac{p_{S|W}(S|W)}{p_S(S)} \right) \right].
    \label{eq:proof_identity}
\end{align}
This change of measure holds because for any measurable function $f(S)$, we have $\mathbb{E}_{S \sim P_{S|W}}[f(S)] = \int f(s) p_{S|W}(s|W) ds = \int f(s) \frac{p_{S|W}(s|W)}{p_S(s)} p_S(s) ds$.

Furthermore, observe that for any fixed hypothesis $W$, the expected empirical risk over freshly drawn data $S \sim P_S$ is simply the population risk:
\begin{align}
    \mathbb{E}_{S \sim \mu^{\otimes n}}\left[L_S(W)\right] &= \mathbb{E}_{S \sim \mu^{\otimes n}}\left[\frac{1}{n}\sum_{i=1}^{n}\ell(W,Z_{i})\right] = \frac{1}{n}\sum_{i=1}^{n}\mathbb{E}_{Z_i \sim \mu}\left[\ell(W,Z_{i})\right] = L_{\mu}(W).
\end{align}
This implies that $\mathbb{E}_{S \sim P_S}[L_\mu(W) - L_S(W)] = 0$. By subtracting this zero-valued term from \eqref{eq:proof_identity}, we can center the density ratio:
\begin{align}
    \mathbb{E}_{S \sim P_{S|W}} \left[ L_{\mu}(W) - L_{S}(W) \right] = \mathbb{E}_{S \sim P_S} \left[ \left( L_{\mu}(W) - L_{S}(W) \right) \left( \frac{p_{S|W}(S|W)}{p_S(S)} - 1 \right) \right].
\end{align}
Substituting this back into the expression for the generalization error and applying the absolute value gives:
\begin{align}
    \left|\overline{\mathrm{gen}}\right| &= \left|\mathbb{E}_{W \sim P_W} \left[ \mathbb{E}_{S \sim P_S} \left[ \left( L_{\mu}(W) - L_{S}(W) \right) \left( \frac{p_{S|W}(S|W)}{p_S(S)} - 1 \right) \right] \right] \right| \\
    &\le \mathbb{E}_{W \sim P_W} \left[ \mathbb{E}_{S \sim P_S} \left[ \left| L_{\mu}(W) - L_{S}(W) \right| \cdot \left| \frac{p_{S|W}(S|W)}{p_S(S)} - 1 \right| \right] \right]. \label{eq:abs_val_step}
\end{align}
We now apply H\"older's inequality to the inner expectation over $S$ with conjugate exponents $\alpha'$ and $\alpha$ (where $1/\alpha + 1/\alpha' = 1$):
\begin{align}
    \left|\overline{\mathrm{gen}}\right| &\le \mathbb{E}_{W \sim P_W} \left[ \left\| L_{\mu}(W) - L_{S}(W) \right\|_{\alpha', P_S} \cdot \left\| \frac{p_{S|W}(S|W)}{p_S(S)} - 1 \right\|_{\alpha, P_S} \right] \\
    &= \mathbb{E}_{W \sim P_W} \left[ \left\| L_{S}(W) - L_{\mu}(W) \right\|_{\alpha'} \cdot \left( \mathbb{E}_{S \sim P_S}\left| \frac{p_{W|S}(W|S) p_S(S)}{p_W(W) p_S(S)} - 1 \right|^\alpha \right)^{1/\alpha} \right], \label{eq:bayes_step}
\end{align}
where in \eqref{eq:bayes_step} we used Bayes' rule, $p_{S|W} = p_{W|S} p_S / p_W$, in the density ratio.

To decouple the data-dependent loss sensitivity from the expectation over $W$, we upper-bound the loss term by taking its supremum over the entire hypothesis space $\mathcal{W}$:
\begin{align}
    \left|\overline{\mathrm{gen}}\right| &\le \sup_{w \in \mathcal{W}} \left\{ \left\| L_S(w) - L_{\mu}(w) \right\|_{\alpha'} \right\} \cdot \mathbb{E}_{W \sim P_W} \left[ \left( \mathbb{E}_{S \sim P_S}\left| \frac{p_{W|S}(W|S)}{p_W(W)} - 1 \right|^\alpha \right)^{1/\alpha} \right] \\
    &= \sup_{w \in \mathcal{W}} \left\{ \left\| L_S(w) - L_{\mu}(w) \right\|_{\alpha'} \right\} \cdot V_{\alpha}(S;W),
\end{align}
where the final step substitutes the formal definition of the correlation measure $V_\alpha(S;W)$. This completes the proof.
\end{proof}

\subsection{Symmetric form of Theorem~\ref{th-main}}
\label{app:symmetry_bound}

\begin{remark}[Symmetry of the bound]
\label{remark:symmetry_app}
The bound of Theorem~\ref{th-main} also admits a symmetric form in terms of
\(V_\alpha(W;S)\):
\begin{align}
\left|\overline{\mathrm{gen}}\left(\mu, P_{W|S}\right)\right|
\le \sup_{s \in \mathcal{Z}^n}
\left\| L_s(W) - L_{\mu}(W) \right\|_{\alpha'} \cdot V_{\alpha}(W;S).
\end{align}
Consequently one may take the minimum of the two forms.
\end{remark}

\begin{proof}
The proof of the alternative, symmetric form of the bound begins from \eqref{eq:abs_val_step} in the proof of Theorem~\ref{th-main}. Instead of resolving the joint expectation by conditioning on $W$ first, we condition on $S$ first:
\begin{align}
    \left|\overline{\mathrm{gen}}\left(\mu, P_{W|S}\right)\right| &\le \mathbb{E}_{P_{W}\otimes P_{S}} \left[ \left| L_{S}(W) - L_{\mu}(W) \right| \cdot \left| \frac{p_{S|W}(S|W)}{p_S(S)} - 1 \right| \right] \\
    &= \mathbb{E}_{S \sim P_S} \left[ \mathbb{E}_{W \sim P_{W}} \left[ \left| L_{S}(W) - L_{\mu}(W) \right| \cdot \left| \frac{p_{S|W}(S|W)}{p_S(S)} - 1 \right| \right] \right].
\end{align}
We then apply H\"older's inequality to the inner expectation over $W$ with conjugate exponents $\alpha'$ and $\alpha$:
\begin{align}
    \left|\overline{\mathrm{gen}}\right| &\le \mathbb{E}_{S \sim P_S} \left[ \left\| L_{S}(W) - L_{\mu}(W) \right\|_{\alpha', P_W} \cdot \left\| \frac{p_{S|W}(S|W)}{p_S(S)} - 1 \right\|_{\alpha, P_W} \right] \\
    &= \mathbb{E}_{S \sim P_S} \left[ \left\| L_{S}(W) - L_{\mu}(W) \right\|_{\alpha'} \cdot \left( \mathbb{E}_{W \sim P_W}\left| \frac{p_{S|W}(S|W)}{p_S(S)} - 1 \right|^\alpha \right)^{1/\alpha} \right].
\end{align}
We pull the supremum of the loss sensitivity term out of the expectation over $S$ (taking the worst-case dataset realization $s \in \mathcal{Z}^n$):
\begin{align}
    \left|\overline{\mathrm{gen}}\right| &\le \sup_{s \in \mathcal{Z}^n} \left\{ \left\| L_s(W) - L_{\mu}(W) \right\|_{\alpha'} \right\} \cdot \mathbb{E}_{S \sim P_S} \left[ \left( \mathbb{E}_{W \sim P_W}\left| \frac{p_{S|W}(S|W)}{p_S(S)} - 1 \right|^\alpha \right)^{1/\alpha} \right] \\
    &= \sup_{s \in \mathcal{Z}^n} \left\{ \left\| L_s(W) - L_{\mu}(W) \right\|_{\alpha'} \right\} \cdot V_{\alpha}(W;S).
\end{align}
Because both Theorem~\ref{th-main} and this derivation provide valid upper bounds, the generalization error is bounded by their minimum.
\end{proof}


\subsection{Explicit subgaussian constant}
\label{app:subgaussian_corollary}

\begin{corollary}[Subgaussian losses]
\label{cor:subgaussian}
Suppose that the loss \(\ell(w,Z)\) is \(\sigma^2\)-subgaussian under \(Z\sim\mu\)
for all \(w\in\mathcal W\). Then for any \(\alpha>1\),
\begin{align}
\left|\overline{\mathrm{gen}}\left(\mu, P_{W|S}\right)\right|
\le
\left(\sqrt{\dfrac{2\sigma^2}{n}}\,
(\alpha')^{1/\alpha'}\,
\Gamma\!\left(\frac{\alpha'}{2}\right)^{1/\alpha'}\right)
V_{\alpha}(S;W).
\label{eq:subgaussian_bound_app}
\end{align}
\end{corollary}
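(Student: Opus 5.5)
We will derive the corollary from Theorem~\ref{th-main}. The only task is to bound the loss-sensitivity factor $\sup_{w}\|L_S(w)-L_\mu(w)\|_{\alpha',S}$ for every $\alpha>1$ (so that $1<\alpha'<\infty$). Fix $w\in\mathcal W$ and set $X:=L_S(w)-L_\mu(w)=\frac1n\sum_{i=1}^n(\ell(w,Z_i)-L_\mu(w))$.

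\textbf{Step 1 (subgaussianity of $X$).} Each summand is centered and $\sigma^2$-subgaussian by assumption, and the summands are independent. Multiplying the moment generating functions gives
\[
\mathbb E[e^{\lambda X}]=\prod_{i=1}^n\mathbb E\bigl[e^{(\lambda/n)(\ell(w,Z_i)-L_\mu(w))}\bigr]\le e^{\lambda^2\sigma^2/(2n)},
\]
so $X$ is $(\sigma^2/n)$-subgaussian. Here we read Definition~\ref{def:subgaussian} as applying to the centered loss.

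\textbf{Step 2 (tail bound and moment integral).} Chernoff's bound, applied to $X$ and $-X$, gives
\[
\Pr(|X|>t)\le 2e^{-nt^2/(2\sigma^2)}.
\]
We then write the moment via the layer-cake formula and substitute $u=nt^2/(2\sigma^2)$:
\[
\mathbb E|X|^{\alpha'}=\int_0^\infty \alpha' t^{\alpha'-1}\Pr(|X|>t)\,dt\le 2\alpha'\int_0^\infty t^{\alpha'-1}e^{-nt^2/(2\sigma^2)}dt=\alpha'\Bigl(\tfrac{2\sigma^2}{n}\Bigr)^{\alpha'/2}\Gamma\!\bigl(\tfrac{\alpha'}{2}\bigr).
\]
The factor $2$ from the two-sided tail cancels the $\tfrac12$ produced by the substitution. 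Taking the $1/\alpha'$ power gives
\[
\|X\|_{\alpha'}\le\sqrt{\tfrac{2\sigma^2}{n}}\,(\alpha')^{1/\alpha'}\,\Gamma(\alpha'/2)^{1/\alpha'}.
\]

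\textbf{Step 3 (conclusion).} The bound in Step 2 is uniform in $w$, so it also bounds the supremum over $w$. Substituting it into \eqref{eq:thm1_main} yields \eqref{eq:subgaussian_bound_app}.

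The only delicate point is the constant bookkeeping in Step 2, namely checking that the two-sided factor and the change of variables combine exactly to $\alpha'\Gamma(\alpha'/2)$ and not $\Gamma(\alpha'/2+1)$ or a version carrying an extra $2$. The rest is routine. As a sanity check, at $\alpha'=2$ the constant evaluates to $\sqrt{2\sigma^2/n}\cdot\sqrt2=2\sigma/\sqrt n$, which matches the factor $2$ quoted in the proof sketch of Theorem~\ref{th:variance_bound}.
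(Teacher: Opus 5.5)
Your proposal is correct and follows the paper's proof. Both arguments apply Theorem~\ref{th-main} after showing that $L_S(w)-L_\mu(w)$ is $(\sigma^2/n)$-subgaussian and bounding its $\alpha'$-th moment uniformly in $w$. The only difference is that you derive the moment bound $\sqrt{2\sigma^2/n}\,(\alpha')^{1/\alpha'}\Gamma(\alpha'/2)^{1/\alpha'}$ yourself from the two-sided Chernoff tail and the layer-cake formula, whereas the paper cites it as Lemma~\ref{lemma:subgaussian_moments}; your constant bookkeeping checks out.
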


\begin{remark}[Trade-off in \(\alpha\)]
\label{remark:tradeoff}
As \(\alpha\) increases, \(V_\alpha(S;W)\) is non-decreasing
\citep{mojahedian2019correlation}, while \(\alpha'\) decreases, shrinking the
loss-sensitivity prefactor. The best \(\alpha\) is problem-dependent.
\end{remark}

\begin{proof}
The proof relies on combining the general bound from Theorem~\ref{th-main} with a standard concentration result that bounds the $L_p$-norm of a subgaussian random variable. We begin by stating the necessary lemma.

\begin{lemma}[Moments of a Subgaussian Random Variable \citep{Rivasplata2012SubgaussianRV}]
	\label{lemma:subgaussian_moments}
	If a random variable $X$ is zero-mean and $\sigma^2$-subgaussian, then for any $p \ge 1$, its $L_p$-norm is bounded as:
	\begin{align}
		\|X\|_{p} = \left(\mathbb{E}\left|X\right|^p\right)^{1/p} \le \sqrt{2\sigma^2} p^{1/p} \Gamma\left(\frac{p}{2}\right)^{1/p},
	\end{align}
	where $\Gamma(\cdot)$ is the Gamma function.
\end{lemma}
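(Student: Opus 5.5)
The statement to prove is Lemma~\ref{lemma:subgaussian_moments}: for zero-mean $\sigma^2$-subgaussian $X$ and $p\ge 1$, $\|X\|_p\le\sqrt{2\sigma^2}\,p^{1/p}\Gamma(p/2)^{1/p}$. I would argue through the layer-cake (tail-integral) representation of moments. First I derive a two-sided tail bound from the MGF condition in Definition~\ref{def:subgaussian} via the Chernoff method. For $t>0$ and any $\lambda>0$, Markov's inequality gives $\Pr(X\ge t)\le e^{-\lambda t}\,\mathbb E[e^{\lambda X}]\le \exp(-\lambda t+\lambda^2\sigma^2/2)$. Optimizing at $\lambda=t/\sigma^2$ yields $\Pr(X\ge t)\le e^{-t^2/(2\sigma^2)}$. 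Applying the same argument to $-X$, which satisfies the same MGF bound because the definition holds for all $\lambda\in\mathbb R$, gives $\Pr(|X|\ge t)\le 2e^{-t^2/(2\sigma^2)}$.

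Next I write $\mathbb E|X|^p=\int_0^\infty p\,t^{p-1}\Pr(|X|\ge t)\,dt$, which follows from Fubini/Tonelli applied to $|X|^p=\int_0^{|X|}p t^{p-1}dt$. Inserting the tail bound gives
\begin{align}
\mathbb E|X|^p\le 2p\int_0^\infty t^{p-1}e^{-t^2/(2\sigma^2)}\,dt .
\end{align}
I then substitute $u=t^2/(2\sigma^2)$, so that $t=\sqrt{2\sigma^2u}$ and $dt=\tfrac12\sqrt{2\sigma^2}\,u^{-1/2}du$. This turns the integral into $\tfrac12(2\sigma^2)^{p/2}\int_0^\infty u^{p/2-1}e^{-u}du=\tfrac12(2\sigma^2)^{p/2}\Gamma(p/2)$. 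Hence $\mathbb E|X|^p\le p\,(2\sigma^2)^{p/2}\Gamma(p/2)$, and taking $p$-th roots gives exactly the claimed $\|X\|_p\le\sqrt{2\sigma^2}\,p^{1/p}\Gamma(p/2)^{1/p}$. The factor $2$ from the two-sided tail cancels exactly against the $\tfrac12$ from the substitution, which is why the constant comes out clean.

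There is no real obstacle here; the only points needing care are the following. The Gamma integral converges because $p/2>0$. Tonelli justifies the layer-cake formula for the nonnegative integrand. The case $p=1$ is included, with $\Gamma(1/2)=\sqrt\pi$, so no separate treatment is required. The zero-mean hypothesis is not used beyond the MGF bound itself, which already forces it.
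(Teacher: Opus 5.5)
Your argument is correct: the Chernoff bound for $X$ and $-X$, the layer-cake formula, and the substitution $u=t^2/(2\sigma^2)$ give exactly $\mathbb{E}|X|^p\le p\,(2\sigma^2)^{p/2}\Gamma(p/2)$, and taking $p$-th roots yields the claimed bound. The paper does not reprove this lemma but cites it from \citet{Rivasplata2012SubgaussianRV}, which proves it by this same tail-integration computation; the only unaddressed case is the degenerate $\sigma=0$, where your substitution is undefined, and it is trivial because then $X=0$ almost surely.
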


Consider the loss sensitivity term in Theorem~\ref{th-main}: $\sup_{w \in \mathcal{W}} \| L_S(w) - L_{\mu}(w) \|_{\alpha'}$.
The term inside the norm, $L_S(w) - L_{\mu}(w)$, is the difference between the empirical mean of $n$ i.i.d.\ random variables and its true population expectation. 

By the assumption of the Corollary, the loss evaluated on a single data point, $\ell(w,Z)$, is $\sigma^2$-subgaussian for any fixed $w \in \mathcal{W}$. A standard property of independent subgaussian random variables is that their sum is also subgaussian, and the variance parameter of the average scales inversely with $n$. Therefore, the empirical risk $L_S(w) = \frac{1}{n} \sum_{i=1}^n \ell(w,Z_i)$ is a $\frac{\sigma^2}{n}$-subgaussian random variable.

This implies that the zero-mean random variable $X_w = L_S(w) - L_{\mu}(w)$ is $\frac{\sigma^2}{n}$-subgaussian. We can now apply Lemma~\ref{lemma:subgaussian_moments} with $p=\alpha'$ and the adjusted variance parameter $\frac{\sigma^2}{n}$ to bound its $L_{\alpha'}$-norm:
\begin{align}
    \left\| L_S(w) - L_{\mu}(w) \right\|_{\alpha'} \le \sqrt{2 \left( \frac{\sigma^2}{n} \right)} (\alpha')^{1/\alpha'} \Gamma\left(\frac{\alpha'}{2}\right)^{1/\alpha'}.
\end{align}
Crucially, because the $\sigma^2$-subgaussian assumption on the loss holds uniformly for all $w \in \mathcal{W}$, this bound also holds strictly for the supremum over $\mathcal{W}$.

Finally, substituting this uniform inequality back into the main bound from Theorem~\ref{th-main} yields the desired result:
\begin{align}
	\left|\overline{\mathrm{gen}}\left(\mu,P_{W|S}\right)\right| &\le \sup_{w\in \mathcal{W}} \left\{\| L_S(w) - L_{\mu}(w) \|_{\alpha'}\right\} \cdot V_{\alpha}(S;W) \\
	&\le \left(\sqrt{\dfrac{2\sigma^2}{n}} (\alpha')^{1/\alpha'} \Gamma\left(\frac{\alpha'}{2}\right)^{1/\alpha'}\right) V_{\alpha}(S;W).
\end{align}
This completes the proof.
\end{proof}

\begin{revision}
\subsection{Proof of Corollary~\ref{cor:heavy_tailed}}
\label{app:heavy_tail_proof}

\begin{proof}
Fix \(w\in\mathcal W\) and define
\begin{align}
X_i(w):=\ell(w,Z_i)-L_\mu(w),
\qquad i=1,\ldots,n.
\end{align}
The variables \(X_1(w),\ldots,X_n(w)\) are independent, have mean zero, and
have finite \(q\)-th moments under \eqref{eq:heavy_tail_mq}. Moreover,
\begin{align}
L_S(w)-L_\mu(w)
=
\frac1n\sum_{i=1}^n X_i(w).
\end{align}
For independent mean-zero random variables and \(1\le q\le2\), the
von Bahr--Esseen inequality \citep{vonbahr1965inequalities} gives the convenient
bound
\begin{align}
\mathbb E\left|\sum_{i=1}^n X_i(w)\right|^q
\le
2\sum_{i=1}^n \mathbb E|X_i(w)|^q.
\label{eq:vbe_heavy_tail}
\end{align}
Therefore,
\begin{align}
\mathbb E_S\left|L_S(w)-L_\mu(w)\right|^q
&=
\frac1{n^q}
\mathbb E\left|\sum_{i=1}^n X_i(w)\right|^q \\
&\le
2n^{1-q}
\mathbb E_Z\left|\ell(w,Z)-L_\mu(w)\right|^q.
\end{align}
Taking the \(q\)-th root gives
\begin{align}
\left\|L_S(w)-L_\mu(w)\right\|_q
\le
2^{1/q}n^{1/q-1}
\left\|\ell(w,Z)-L_\mu(w)\right\|_q.
\end{align}
Since \(q=\alpha/(\alpha-1)\), we have
\(1/q-1=-1/\alpha\). Taking the supremum over \(w\), using the definition of
\(M_q\), and applying Theorem~\ref{th-main} yields
\begin{align}
\left|
\overline{\mathrm{gen}}(\mu,P_{W\mid S})
\right|
\le
2^{1/q}n^{-1/\alpha}M_qV_\alpha(S;W),
\end{align}
which proves \eqref{eq:heavy_tail_bound}.
\end{proof}
\end{revision}


\section{Proofs for Section~\ref{sec:special_cases}: The Variance Perspective (\texorpdfstring{$\alpha=2$}{alpha=2})}
\label{app:proofs_section4}

This section details the proofs for the specific case of $\alpha=2$, which yields the variance-based generalization bound, and establishes the strict geometric invariance of this bound to reparameterizations of the hypothesis space.

\subsection{Proof of Theorem~\ref{th:variance_bound} (The Variance-Based Bound)}
\label{proof:th3}

\begin{proof}
The variance-based bound is a direct and insightful specialization of our main result (Theorem~\ref{th-main}) coupled with the subgaussian loss properties (Corollary~\ref{cor:subgaussian}). The proof proceeds in three distinct steps.

\paragraph{Step 1: Specialize the general bound to $\alpha=2$.}
We begin with the general bound from Theorem~\ref{th-main}:
\begin{align}
    \left|\overline{\mathrm{gen}}\left(\mu, P_{W|S}\right)\right| &\le \sup_{w \in \mathcal{W}} \left\| L_S(w) - L_{\mu}(w) \right\|_{\alpha'} \cdot V_{\alpha}(S;W).
\end{align}
For $\alpha=2$, the H\"older conjugate is $\alpha'=2$. The $L_2$-norm of the zero-mean random variable $L_S(w) - L_{\mu}(w)$ over the dataset $S$ is, by definition, its standard deviation:
\begin{align}
    \left\| L_S(w) - L_{\mu}(w) \right\|_{2} &= \left( \mathbb{E}_S\left[ (L_S(w) - L_\mu(w))^2 \right] \right)^{1/2} \\
    &= \sqrt{\mathrm{Var}_S[L_S(w)]}.
\end{align}
Substituting $\alpha=2$ and this identity into the general bound gives an intermediate formulation:
\begin{align}
    \left|\overline{\mathrm{gen}}\left(\mu, P_{W|S}\right)\right| \le \sup_{w \in \mathcal{W}} \left\{ \sqrt{\mathrm{Var}_S[L_S(w)]} \right\} \cdot V_2(S;W). \label{eq:proof_var_interim}
\end{align}

\paragraph{Step 2: Apply the subgaussian assumption.}
The theorem assumes that the loss $\ell(w,Z)$ evaluated on a single sample is $\sigma^2$-subgaussian for all $w$. As detailed in the proof of Corollary~\ref{cor:subgaussian} (Appendix~\ref{app:subgaussian_corollary}), the independence of the $n$ samples means the empirical risk $L_S(w)$ is a $\frac{\sigma^2}{n}$-subgaussian random variable. A defining property of subgaussian variables (see Definition~\ref{def:subgaussian}) is that their variance is strictly bounded by their subgaussian parameter. Therefore, we have:
\begin{align}
    \mathrm{Var}_S[L_S(w)] \le \frac{\sigma^2}{n} \quad \text{for all } w \in \mathcal{W}.
\end{align}
Because this bound holds uniformly across the hypothesis space, we can substitute it into the supremum in \eqref{eq:proof_var_interim} to obtain:
\begin{align}
    \left|\overline{\mathrm{gen}}\left(\mu, P_{W|S}\right)\right| \le \sqrt{\frac{\sigma^2}{n}} \cdot V_2(S;W). \label{eq:proof_v2_bound}
\end{align}

\paragraph{Step 3: Express $V_2(S;W)$ in terms of variance.}
The final step translates the abstract correlation measure into a tangible probability-space variance. We utilize an analytical identity for the $V_2$ measure derived by \citet[Sec.~III]{mojahedian2019correlation}. For a continuous hypothesis space $\mathcal{W}$ with probability density $p(w|S)$, the $V_2$ correlation measure can be rewritten as the integral of the standard deviation of the density:
\begin{align}
    V_2(S;W) = \int_{\mathcal{W}} \sqrt{\mathrm{Var}_S\Big[p(w|S)\Big]} \, dw. \label{eq:proof_v2_identity}
\end{align}
Substituting this identity into \eqref{eq:proof_v2_bound} and bringing the constant $\sqrt{\sigma^2/n}$ inside the integral yields the final result:
\begin{align}
    \left|\overline{\mathrm{gen}}\left(\mu, P_{W|S}\right)\right| \le \int_{\mathcal{W}} \sqrt{\frac{\sigma^2}{n} \mathrm{Var}_S\Big[p(w|S)\Big]} \, dw.
\end{align}
(Note: For a discrete hypothesis space, the integral naturally reverts to a sum over $w \in \mathcal{W}$). This completes the proof.
\end{proof}

\subsection{Proof of Theorem~\ref{th:reparameterization} (Reparameterization Invariance)}
\label{proof:reparameterization}

\begin{proof}
Let $\phi:\mathcal W\to\mathcal V$ be a bijective differentiable mapping with differentiable inverse, let $V=\phi(W)$, and let $J_\phi(w)$ denote its Jacobian. For $v=\phi(w)$, the change-of-variables formula gives
\begin{align}
 p_{V\mid S}(v\mid s)
 &=p_{W\mid S}(w\mid s)|\det J_\phi(w)|^{-1},\\
 p_V(v)
 &=p_W(w)|\det J_\phi(w)|^{-1}.
\end{align}
The Jacobian depends on the coordinate map and $w$, but not on the random realization of $S$. Therefore
\begin{align}
\frac{p_{V\mid S}(v\mid s)}{p_V(v)}-1
=
\frac{p_{W\mid S}(w\mid s)}{p_W(w)}-1.
\label{eq:valpha_ratio_reparam}
\end{align}
For any $\alpha\ge1$, define
\begin{align}
 g_W(w)
 =\left(
 \mathbb E_{S\sim P_S}
 \left|
 \frac{p_{W\mid S}(w\mid S)}{p_W(w)}-1
 \right|^\alpha
 \right)^{1/\alpha},
\end{align}
and define $g_V(v)$ analogously. \Eqref{eq:valpha_ratio_reparam} implies $g_V(\phi(w))=g_W(w)$. Since $P_V$ is the pushforward of $P_W$ under $\phi$,
\begin{align}
V_\alpha(S;V)
&=\mathbb E_{V\sim P_V}[g_V(V)]\\
&=\mathbb E_{W\sim P_W}[g_V(\phi(W))]\\
&=\mathbb E_{W\sim P_W}[g_W(W)]\\
&=V_\alpha(S;W).
\end{align}
Thus the reparameterization invariance holds for the full $V_\alpha$ family. At $\alpha=2$, the same result can alternatively be read directly from the density-variance identity: the inverse-Jacobian factor in the standard deviation of the transformed density cancels the Jacobian in the volume element.
\end{proof}
\section{Deeper Analysis of Adaptive Density Stability}
\label{app:stability}

In this section, we provide a more detailed analysis of the Adaptive Density Stability condition introduced in Definition~\ref{def:adaptive_stability}. We demonstrate that despite its novelty, our definition is well-grounded within the broader literature on algorithmic stability by formally connecting it to several established concepts. These connections show that our condition is a strong and meaningful notion of distributional stability.

\subsection{Connection to Total Variation (TV) Stability}
Our stability notion is a strong, pointwise condition that directly implies the standard integrated notion of TV stability. An algorithm is considered TV-stable if the TV distance between its output distributions on any two neighboring datasets $S$ and $S'$ is bounded. We show that our condition provides such a bound.

By integrating our stability condition from Definition~\ref{def:adaptive_stability} over the hypothesis space $\mathcal{W}$, we can bound the TV distance:
\begin{align}
    \|p(w|S)-p(w|S')\|_\mathrm{TV} &= \frac{1}{2} \int_{\mathcal{W}} \left|p(w|S) - p(w|S')\right| \, dw \\
    &\leq \frac{1}{2} \int_{\mathcal{W}} \gamma_n p(w) \, dw \quad \text{(by Definition~\ref{def:adaptive_stability})} \\
    &= \frac{\gamma_n}{2} \int_{\mathcal{W}} p(w) \, dw = \frac{\gamma_n}{2},
\end{align}
since $p(w)$ is a probability density. This proves that an algorithm that is $(\gamma_n, n)$-adaptively density stable is also $\frac{\gamma_n}{2}$-TV stable. Our pointwise criterion is therefore strictly stronger than the integrated TV stability condition.

\subsection{Connection to Expected Uniform Stability}
Furthermore, our distributional stability implies the classical notion of stability for the expected loss.
Expected Uniform Stability for randomized algorithms extends the classical notion of uniform stability introduced by \citet{bousquet2002stability} to stochastic learners by taking expectation over the algorithm’s randomness, as formalized in \citet{elisseeff2005stability} and widely used in later works such as \citet{hardt2016train}.
Assuming a loss function $\ell(w, z)$ is bounded by a constant $M$, an algorithm is $\beta$-stable if:
\begin{equation}
    \sup_{z \in \mathcal{Z}} \left| \mathbb{E}_{W \sim p_{W|S}}[\ell(W, z)] - \mathbb{E}_{W' \sim p_{W|S'}}[\ell(W', z)] \right| \leq \beta.
\end{equation}
\begin{revision}
For a loss satisfying $0\le \ell\le M$, the TV implication above gives the
sharper quantitative relation
\begin{align}
\sup_{z\in\mathcal Z}
\left| \mathbb{E}_{W \sim p_{W|S}}[\ell(W, z)]
      -\mathbb{E}_{W' \sim p_{W|S'}}[\ell(W', z)] \right|
&\le M\,\|P_{W\mid S}-P_{W\mid S'}\|_{\mathrm{TV}} \\
&\le \frac{M\gamma_n}{2}.
\end{align}
Thus ADS implies Expected Uniform Stability with parameter $M\gamma_n/2$ for
bounded nonnegative losses. The converse need not hold: Expected Uniform
Stability only detects changes in the output distribution through the loss,
whereas ADS controls the full output law pointwise.
\end{revision}

\subsection{Comparison with Differential Privacy}
It is also instructive to compare our stability notion to DP, another powerful framework for ensuring distributional stability. An algorithm satisfying pure $\epsilon$-DP adheres to:
$$ p(w|S) \leq e^{\epsilon} p(w|S') \quad \forall w, S, S'. $$
This multiplicative, worst-case guarantee is different from our additive, marginal-relative guarantee. However, both enforce strong constraints on the output distribution. A known property of DP (often called group privacy) allows it to be extended to a bound relative to the marginal $p(w)$, yielding $\frac{p(w|S)}{p(w)} \leq e^{n\epsilon}$. This has a direct consequence on the term inside our $V_2$ measure, implying an upper bound on $V_2$ itself: $V_2(S;W) \leq e^{n\epsilon} - 1 \approx n\epsilon$ for small $\epsilon$. This suggests a deep connection between the stability required for privacy and the stability required for generalization in our framework.

\subsection{Comparison with Bayesian Stability}
The framework of Bayesian stability, as surveyed in \citet{moran2023bayesian}, typically analyzes stability by measuring the dissimilarity $d(A(S), \mathcal{P})$ between the posterior $A(S)$ and a reference distribution $\mathcal{P}$ (often a prior), for example using the KL divergence. Our approach offers a complementary perspective. It is a \textbf{pairwise, leave-one-out style constraint}, akin to Differential Privacy, rather than a direct comparison to a fixed reference distribution.

The primary distinction is that our notion is a \textbf{pointwise} constraint on the absolute difference of the densities, whereas a measure like KL-stability is an \textbf{integrated} constraint. This makes our condition formally stronger in certain respects; it guarantees that for \textit{every single hypothesis} $w$, the change in its probability density is controlled, forbidding large local deviations that an integrated measure like KL divergence might permit. This pointwise structure is also tailored for direct use with the variance-based bounds in our paper, as the term $|p(w|S) - p(w|S')|$ appears naturally in the Efron-Stein inequality analysis. In essence, our work proposes a stability definition that is not strictly ``Bayesian'' in the sense of~\citet{moran2023bayesian} but is a strong, distributional, and well-suited tool for our specific analytical pathway.


\section{Proofs for Section~\ref{sec:stability}: Adaptive Density Stability \& Practical Rates}
\label{app:proofs_section5}

This section contains the formal proofs demonstrating how ADS yields strictly bounded $V_2$ measures, and provides the rigorous step-by-step derivation showing that Gibbs posteriors naturally satisfy this condition.

\subsection{Proof of Theorem~\ref{th:stability_bound} (Generalization via ADS)}
\label{app:proof_stability}

\begin{proof}
We are tasked with bounding the $V_2(S;W)$ correlation measure under the assumption that the learning algorithm satisfies $(\gamma_n, n)$-adaptive density stability. Recall from Definition~\ref{def:adaptive_stability} that for any two datasets $S, S'$ differing in exactly one sample, we have:
\begin{align}
    \left| p(w|S) - p(w|S') \right| \le \gamma_n p(w),
\end{align}
where $p(w) = \mathbb{E}_S[p(w|S)]$.

To bound the variance of the output density over the randomness of the dataset $S$, we apply the Efron-Stein inequality~\citep{boucheron2013concentration}. Let $S^{(i)}$ denote the dataset $S$ where the $i$-th observation $Z_i$ is replaced by an independent copy $Z_i'$ drawn from $\mu$. The Efron-Stein inequality bounds the variance of a function of independent random variables by the expected sum of squared differences upon resampling:
\begin{align}
    \mathrm{Var}_S\Big[ p(w|S) \Big] \le \frac{1}{2} \sum_{i=1}^n \mathbb{E}_{S, Z_i'}\left[ \left( p(w|S) - p(w|S^{(i)}) \right)^2 \right].
\end{align}
Because $S$ and $S^{(i)}$ are neighboring datasets (differing only in the $i$-th index), our ADS condition guarantees that $|p(w|S) - p(w|S^{(i)})| \le \gamma_n p(w)$. Substituting this pointwise bound into the inequality yields:
\begin{align}
    \mathrm{Var}_S\Big[ p(w|S) \Big] &\le \frac{1}{2} \sum_{i=1}^n \mathbb{E}_{S, Z_i'}\left[ \left( \gamma_n p(w) \right)^2 \right] \\
    &= \frac{n}{2} \gamma_n^2 p(w)^2.
\end{align}
Taking the square root of both sides, we isolate the standard deviation:
\begin{align}
    \sqrt{\mathrm{Var}_S\Big[ p(w|S) \Big]} \le \gamma_n p(w) \sqrt{\frac{n}{2}}.
\end{align}
We now substitute this bound into the continuous identity for $V_2(S;W)$ (from Equation~\ref{eq:proof_v2_identity} in Appendix~\ref{proof:th3}):
\begin{align}
    V_2(S;W) &= \int_{\mathcal{W}} \sqrt{\mathrm{Var}_S\Big[p(w|S)\Big]} \, dw \\
    &\le \int_{\mathcal{W}} \gamma_n p(w) \sqrt{\frac{n}{2}} \, dw \\
    &= \gamma_n \sqrt{\frac{n}{2}} \int_{\mathcal{W}} p(w) \, dw \\
    &= \gamma_n \sqrt{\frac{n}{2}}.
\end{align}

Finally, applying the variance-based generalization bound from Theorem~\ref{th:variance_bound}:
\begin{align}
    \left|\overline{\mathrm{gen}}\left(\mu, P_{W|S}\right)\right| \le \sqrt{\frac{\sigma^2}{n}} \cdot V_2(S;W) \le \sqrt{\frac{\sigma^2}{n}} \left( \gamma_n \sqrt{\frac{n}{2}} \right) = \frac{\sigma \gamma_n}{\sqrt{2}}. \label{eq:fast_rate_derivation}
\end{align}
This completes the proof. 

\textbf{Remark (Rate Implications).} 
The derivation in \eqref{eq:fast_rate_derivation} shows that the generalization error scales linearly with the stability parameter:
\begin{align}
    \left|\overline{\mathrm{gen}}\left(\mu, P_{W|S}\right)\right| = \mathcal{O}(\gamma_n).
\end{align}
Consequently, if an algorithm satisfies $\gamma_n = \mathcal{O}(1/n)$, our framework yields a generalization bound of order $\mathcal{O}(1/n)$.
\end{proof}

\subsection{Rigorous Proof of Example 5.1 (Gibbs Posteriors satisfy ADS)}
\label{app:proof_gibbs}

\begin{proof}
We formally demonstrate that a Gibbs posterior algorithm satisfies the Adaptive Density Stability condition. Consider a learning algorithm that outputs a probability distribution over hypotheses proportional to the exponentiated negative empirical risk:
\begin{align}
    p(w|S) = \frac{1}{Z(S)} \exp\left(-\beta L_S(w)\right) \pi(w),
\end{align}
where $L_S(w) = \frac{1}{n} \sum_{i=1}^n \ell(w, Z_i)$ is the empirical risk, $\pi(w)$ is a prior distribution, $\beta > 0$ is an inverse temperature parameter, and $Z(S) = \int \exp(-\beta L_S(w)) \pi(w) dw$ is the normalizing partition function. 

Assume the loss function is strictly bounded such that $0 \le \ell(w, z) \le M$ for all $w \in \mathcal{W}$ and $z \in \mathcal{Z}$. Let $S$ and $S'$ be any two neighboring datasets that differ in exactly one observation (e.g., the $k$-th data point is $Z_k$ in $S$ and $Z_k'$ in $S'$). 

The maximum change in the empirical risk for any hypothesis $w$ is bounded by:
\begin{align}
    \left| L_S(w) - L_{S'}(w) \right| = \frac{1}{n} \left| \ell(w, Z_k) - \ell(w, Z_k') \right| \le \frac{M}{n}.
\end{align}
We first bound the ratio of the unnormalized densities:
\begin{align}
    \exp\left( -\beta \frac{M}{n} \right) \le \frac{\exp(-\beta L_S(w)) \pi(w)}{\exp(-\beta L_{S'}(w)) \pi(w)} \le \exp\left( \beta \frac{M}{n} \right).
\end{align}
Because this inequality holds for all $w$, integrating over the hypothesis space $\mathcal{W}$ guarantees that the partition functions are similarly bounded:
\begin{align}
    \exp\left( -\beta \frac{M}{n} \right) \le \frac{Z(S)}{Z(S')} \le \exp\left( \beta \frac{M}{n} \right).
\end{align}
Combining these bounds allows us to bound the ratio of the normalized posterior densities:
\begin{align}
    \frac{p(w|S)}{p(w|S')} &= \left( \frac{\exp(-\beta L_S(w))}{\exp(-\beta L_{S'}(w))} \right) \left( \frac{Z(S')}{Z(S)} \right) \\
    &\le \exp\left( \beta \frac{M}{n} \right) \exp\left( \beta \frac{M}{n} \right) = \exp\left( \frac{2\beta M}{n} \right).
\end{align}
By symmetry, the lower bound is $\exp(-2\beta M/n)$. Let $\epsilon = \frac{2\beta M}{n}$. We have established that the algorithm satisfies pure $\epsilon$-Differential Privacy:
\begin{align}
    e^{-\epsilon} \le \frac{p(w|S)}{p(w|S')} \le e^{\epsilon}, \label{eq:gibbs_dp}
\end{align}
for every pair of neighboring datasets $S,S'$.

Next, we extend this local bound to arbitrary pairs of datasets using the standard group privacy (composition) argument. Let $S_1,S_2 \in \mathcal{Z}^n$ be arbitrary datasets. One can connect $S_1$ to $S_2$ by a sequence of at most $n$ neighboring datasets, each differing in one coordinate. Applying \eqref{eq:gibbs_dp} successively along this path gives:
\begin{align}
    \frac{p(w|S_1)}{p(w|S_2)}
    \le e^{n\epsilon}
    = e^{2\beta M},
\end{align}
and similarly,
\begin{align}
    \frac{p(w|S_1)}{p(w|S_2)}
    \ge e^{-n\epsilon}
    = e^{-2\beta M}.
\end{align}
Hence, for every fixed dataset $S$ and every $w$,
\begin{align}
    p(w|S) \le e^{2\beta M} p(w|S_2)
\end{align}
for all datasets $S_2$. Taking expectation over $S_2 \sim \mu^{\otimes n}$ yields:
\begin{align}
    p(w|S)
    \le
    e^{2\beta M}\,\mathbb{E}_{S_2}\big[p(w|S_2)\big]
    =
    e^{2\beta M} p(w).
\end{align}
The same argument also implies:
\begin{align}
    p(w|S') \le e^{2\beta M} p(w).
\end{align}
Therefore,
\begin{align}
    \max\{p(w|S),p(w|S')\} \le e^{2\beta M} p(w).
\end{align}

Now return to the original neighboring pair $S,S'$. From \eqref{eq:gibbs_dp}, if $p(w|S)\ge p(w|S')$, then:
\begin{align}
    p(w|S)-p(w|S')
    \le
    (e^\epsilon-1)p(w|S').
\end{align}
If instead $p(w|S')\ge p(w|S)$, then:
\begin{align}
    p(w|S')-p(w|S)
    \le
    (e^\epsilon-1)p(w|S).
\end{align}
In either case, we obtain:
\begin{align}
    |p(w|S)-p(w|S')|
    \le
    (e^\epsilon-1)\max\{p(w|S),p(w|S')\}.
\end{align}
Using the marginal bound above gives:
\begin{align}
    |p(w|S)-p(w|S')|
    \le
    (e^\epsilon-1)e^{2\beta M}p(w).
\end{align}
Recalling that $\epsilon = 2\beta M/n$, this becomes:
\begin{align}
    |p(w|S)-p(w|S')|
    \le
    \left(\exp\left(\frac{2\beta M}{n}\right)-1\right)e^{2\beta M}p(w).
\end{align}
Thus, the Gibbs posterior algorithm is $(\gamma_n,n)$-adaptively density stable with parameter:
\begin{align}
    \gamma_n
    =
    \left(\exp\left(\frac{2\beta M}{n}\right)-1\right)e^{2\beta M}.
\end{align}

Finally, since the factor $e^{2\beta M}$ is a constant independent of $n$, a first-order Taylor expansion of the exponential function implies
\begin{align}
    \gamma_n
    =
    e^{2\beta M}\left(\exp\left(\frac{2\beta M}{n}\right)-1\right)
    \approx
    e^{2\beta M}\frac{2\beta M}{n}
    =
    \mathcal{O}\left(\frac{1}{n}\right)
    \qquad \text{as } n\to\infty.
\end{align}
This establishes that Gibbs posteriors satisfy Adaptive Density Stability with the desired $\mathcal{O}(1/n)$ decay rate.
\end{proof}


\section{Master decoupling lemmas for the scalable bounds}
\label{app:scalable_bounds_lemmas}

As in Section~\ref{sec:scaling_high_dim}, all displayed density ratios are
interpreted as Radon--Nikodym derivatives with respect to the relevant marginal
law. If the corresponding absolute-continuity or integrability condition fails,
the associated \(V_\alpha\) quantity is interpreted as \(+\infty\).

\subsection{A master \texorpdfstring{$V_\alpha$}{V-alpha}--Hölder decoupling lemma}
\label{subsec:master_holder}

The following lemma is the single analytic step behind all conditional,
output-space, loss-space, sample-wise, and binarized bounds.

\begin{lemma}[Centered \(V_\alpha\)--Hölder decoupling]
\label{lem:master_holder}
Let \((A,T,C)\) be random variables on standard Borel spaces. For
\(P_C\)-almost every \(c\), let \(P_{A|C=c}\) and \(P_{T|C=c}\) denote the
conditional marginal laws. Let \(\phi_c(t,a)\) be measurable and suppose that,
for \(P_C\)-almost every \(c\),
\begin{align}
    \mathbb E_{A\sim P_{A|C=c}}
    \left[
        \phi_c(t,A)
    \right]
    =
    0
    \qquad
    \text{for every } t .
    \label{eq:master_centering}
\end{align}
For \(\alpha\ge 1\), let \(\alpha'\) be the Hölder conjugate, with
\(\alpha'=\infty\) when \(\alpha=1\). Then
\begin{align}
    \left|
        \mathbb E[\phi_C(T,A)]
    \right|
    \le
    \mathbb E_{C,T}
    \left[
        \left\|
            \phi_C(T,\cdot)
        \right\|_{L_{\alpha'}(P_{A|C})}
        \left(
            \mathbb E_{A\sim P_{A|C}}
            \left|
                \frac{P_{T|A,C}}{P_{T|C}}(T|A,C)
                -1
            \right|^\alpha
        \right)^{1/\alpha}
    \right].
    \label{eq:master_holder_weighted}
\end{align}
Consequently,
\begin{align}
    \left|
        \mathbb E[\phi_C(T,A)]
    \right|
    \le
    \mathbb E_C
    \left[
        \esssup_t
        \left\|
            \phi_C(t,\cdot)
        \right\|_{L_{\alpha'}(P_{A|C})}
        \,
        V_\alpha(A;T|C)
    \right],
    \label{eq:master_holder_sup}
\end{align}
where
\begin{align}
    V_\alpha(A;T|C=c)
    \coloneqq
    \mathbb E_{T\sim P_{T|C=c}}
    \left[
        \left(
            \mathbb E_{A\sim P_{A|C=c}}
            \left|
                \frac{P_{T|A,C}}{P_{T|C}}(T|A,C=c)
                -1
            \right|^\alpha
        \right)^{1/\alpha}
    \right].
    \label{eq:conditional_valpha_general_def}
\end{align}
\end{lemma}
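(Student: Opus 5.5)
The plan mirrors the proof of Theorem~\ref{th-main}, carried out conditionally on \(C\). Fix \(c\) outside a \(P_C\)-null set where the disintegrations and the centering condition \eqref{eq:master_centering} hold. By the tower property, \(\mathbb E[\phi_C(T,A)]=\mathbb E_C\bigl[\mathbb E[\phi_c(T,A)\mid C=c]\bigr]\). The first step rewrites the inner expectation under the product law \(P_{A|C=c}\otimes P_{T|C=c}\) via a change of measure. On standard Borel spaces the regular conditional laws \(P_{T|A,C}\) exist. If \(P_{T|A=a,C=c}\ll P_{T|C=c}\) for \(P_{A|C=c}\)-a.e.\ \(a\), the Radon--Nikodym derivative \(r_c(t,a)=\frac{dP_{T|A,C}}{dP_{T|C}}(t|a,c)\) is also the density of \(P_{AT|C=c}\) with respect to the product law. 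If this fails, the right-hand side is \(+\infty\) by the convention and there is nothing to prove. Hence
\[
\mathbb E[\phi_c(T,A)\mid C=c]=\mathbb E_{T\sim P_{T|C=c}}\mathbb E_{A\sim P_{A|C=c}}\bigl[\phi_c(T,A)\,r_c(T,A)\bigr].
\]

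The second step centers the ratio. By \eqref{eq:master_centering}, \(\mathbb E_{A}[\phi_c(t,A)]=0\) for every \(t\), so subtracting \(\mathbb E_T\mathbb E_A[\phi_c(T,A)]=0\) replaces \(r_c\) by \(r_c-1\). Moving the absolute value inside both expectations (and inside \(\mathbb E_C\)) and applying H\"older's inequality in \(a\) for each fixed \(t\), with exponents \(\alpha'\) and \(\alpha\), gives exactly \eqref{eq:master_holder_weighted}. The case \(\alpha=1\) is the \(L_\infty\)--\(L_1\) pairing. For \eqref{eq:master_holder_sup}, bound \(\|\phi_c(T,\cdot)\|_{L_{\alpha'}(P_{A|C=c})}\) by its \(P_{T|C=c}\)-essential supremum, which holds a.s.\ under \(P_{T|C=c}\). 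The remaining \(T\)-expectation is then \(V_\alpha(A;T|C=c)\) by \eqref{eq:conditional_valpha_general_def}.

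The main obstacle is measure-theoretic bookkeeping, not the inequality itself. There are three things to check:
\begin{itemize}
\item the joint measurability of \((t,a,c)\mapsto r_c(t,a)\), so that Fubini/Tonelli and the iterated expectations are legitimate;
\item that the centering step is valid, which needs \(\phi_c(T,A)\) to be integrable under both the joint law and the product law;
\item that the essential supremum is taken with respect to \(P_{T|C=c}\).
\end{itemize}
For the first point I would take \(r\) as a jointly measurable version of the density \(dP_{ATC}/d(P_{A|C}\otimes P_{T|C}\otimes P_C)\), which exists on standard Borel spaces. For the second, I would first treat the case where the right-hand side is finite. Finiteness combined with H\"older gives \(\mathbb E_{\text{prod}}|\phi|\,|r-1|<\infty\). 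Since the centering hypothesis is stated for every \(t\), \(\mathbb E_{\text{prod}}|\phi|<\infty\) is implicitly assumed, and this yields integrability under the joint law.
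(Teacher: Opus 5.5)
Your proposal is correct and follows the same route as the paper's proof: you change measure to $P_{A|C=c}\otimes P_{T|C=c}$ via the density $r_c$, center using the hypothesis, apply H\"older in $a$ for each fixed $t$, average over $C$, and then pull out the essential supremum in $t$; your measure-theoretic bookkeeping is more careful than the paper's, which simply invokes the $+\infty$ convention. One small correction: pointwise-in-$t$ centering does not imply $\mathbb E_{\mathrm{prod}}|\phi|<\infty$, but you do not need it---if you subtract $\int\phi_c(t,a)\,P_{A|C=c}(da)=0$ inside the inner $a$-integral for each fixed $t$, as the paper's iterated form does, you only use $\phi_c(t,\cdot)\in L_1(P_{A|C=c})$ together with integrability of $\phi_C(T,A)$ under the joint law, which is needed for the left-hand side to be defined anyway.
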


\begin{revision}
\begin{proof}
Fix $c$ for which the stated conditions hold, and write
$r_c(t,a)=\frac{dP_{A,T\mid c}}{d(P_{A\mid c}\otimes P_{T\mid c})}(a,t)$.
By centering and change of measure,
\[
 \mathbb E[\phi_c(T,A)\mid C=c]
 =\int P_{T\mid c}(dt)\int P_{A\mid c}(da)\,
       \phi_c(t,a)\bigl(r_c(t,a)-1\bigr).
\]
Taking absolute values and applying H\"older's inequality to the inner
integral, then averaging over $C$, gives
\eqref{eq:master_holder_weighted}. Bounding the loss norm by its essential
supremum in $t$ gives \eqref{eq:master_holder_sup}. Cases with a divergent
right-hand side follow the convention stated above.
\end{proof}
\end{revision}

\begin{lemma}[Conditional data processing for \(V_\alpha\)]
\label{lem:valpha_dpi}
If \(A\to T\to R\) is a Markov chain conditionally on \(C\), then for every
\(\alpha\ge 1\),
\begin{align}
    V_\alpha(A;R|C)
    \le
    V_\alpha(A;T|C)
    \qquad
    (P_C\text{-a.s.}).
    \label{eq:valpha_dpi}
\end{align}
\end{lemma}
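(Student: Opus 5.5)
Fix $c$ throughout; all statements are then about a fixed $c$, and the conditional version follows by applying the argument for $P_C$-almost every $c$. The plan is to write the likelihood ratio for $R$ as a conditional expectation of the likelihood ratio for $T$, and then use Jensen's inequality twice. First for the $\alpha$-power (equivalently, convexity of the $L_\alpha(P_{A|c})$ norm), and then to pass from the outer expectation over $T$ to the outer expectation over $R$.

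\textbf{Step 1 (ratio identity).} Let $r_T(t,a)=\frac{dP_{T|A=a,c}}{dP_{T|c}}(t)$. The Markov property says $P_{R|T,A,c}=P_{R|T,c}=:K(dr\mid t)$. Hence
\[
P_{R|A=a,c}(dr)=\int K(dr\mid t)\,r_T(t,a)\,P_{T|c}(dt)
\quad\text{and}\quad
P_{R|c}=\int K(\cdot\mid t)\,P_{T|c}(dt).
\]
Consider the joint law $Q(dt,dr)=P_{T|c}(dt)K(dr\mid t)$. Its $R$-marginal is $P_{R|c}$. It follows that
\[
r_R(r,a):=\frac{dP_{R|A=a,c}}{dP_{R|c}}(r)=\mathbb E_Q\bigl[r_T(T,a)\mid R=r\bigr]
\]
for $Q$-a.e.\ $r$ and every $a$. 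If $r_T$ does not exist, then $V_\alpha(A;T|c)=+\infty$ by the paper's convention and there is nothing to prove. If $r_T$ exists, absolute continuity for $R$ is inherited from it, so $r_R$ exists as well. Consequently $r_R(r,a)-1=\mathbb E_Q[r_T(T,a)-1\mid R=r]$.

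\textbf{Step 2 (inner Jensen / Minkowski).} Define $g_T(t)=\|r_T(t,\cdot)-1\|_{L_\alpha(P_{A|c})}$ and $g_R$ analogously. Because the norm is convex, Minkowski's integral inequality (Jensen for the norm) gives
\[
g_R(r)=\bigl\|\mathbb E_Q[r_T(T,\cdot)-1\mid R=r]\bigr\|_{\alpha}\le \mathbb E_Q[g_T(T)\mid R=r].
\]
This is the step I expect to need the most care. The conditional expectation must be taken jointly measurably in $(r,a)$, which is available because the spaces are standard Borel and regular conditional laws exist. Nonnegativity ensures Fubini/Tonelli applies, so the $\alpha$-norm can be moved inside. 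Concretely, one applies Jensen to $|\cdot|^\alpha$ pointwise in $a$ and integrates over $a$, which gives $\|\cdot\|_\alpha$ of the conditional mean $\le$ conditional mean of the norm via the duality/Minkowski form. For $\alpha=1$ the same reasoning applies with the triangle inequality.

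\textbf{Step 3 (outer average).} Take expectation over $R\sim P_{R|c}$. By the tower property under $Q$,
\[
V_\alpha(A;R|c)=\mathbb E_{P_{R|c}}[g_R(R)]\le \mathbb E_Q[g_T(T)]=\mathbb E_{P_{T|c}}[g_T(T)]=V_\alpha(A;T|c).
\]
Applying this for $P_C$-a.e.\ $c$ yields \eqref{eq:valpha_dpi}.
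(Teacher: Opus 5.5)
Your argument is correct, but it takes a different route from the paper's proof. The paper does only your Step~1 reduction: for $P_C$-a.e.\ $c$, $R$ is obtained from $T$ by a kernel that leaves $A$ untouched. It then cites the unconditional monotonicity of $V_\alpha$ under local post-processing \citep[Theorem~8]{mojahedian2019correlation}. You prove that monotonicity yourself, via three steps:
\begin{itemize}
\item the likelihood-ratio identity $r_R(r,a)=\mathbb E_Q[r_T(T,a)\mid R=r]$;
\item convexity of the $L_\alpha(P_{A|c})$ norm, in the form of Minkowski's integral inequality;
\item the tower property under $Q$.
\end{itemize}
The paper's route is shorter. However, it leaves the reader to check that the cited classical result covers general standard Borel spaces, Radon--Nikodym derivatives, and the $+\infty$ convention. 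Your route is self-contained and shows explicitly that absolute continuity passes from $T$ to $R$, so the $+\infty$ convention is compatible with the inequality. It also identifies the mechanism as the norm-convexity property of $V_\alpha$ highlighted in the preliminaries.

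One sentence in your Step~2 should be dropped, because it describes an argument that does not work. Applying Jensen to $|\cdot|^\alpha$ pointwise in $a$ and then integrating over $a$ gives only $g_R(r)\le\bigl(\mathbb E_Q[g_T(T)^\alpha\mid R=r]\bigr)^{1/\alpha}$. After the outer average, this controls $V_\alpha(A;R|c)$ only by $\bigl(\mathbb E_{P_{T|c}}[g_T(T)^\alpha]\bigr)^{1/\alpha}$, which can exceed $V_\alpha(A;T|c)=\mathbb E_{P_{T|c}}[g_T(T)]$. The inequality you actually need is $\bigl\|\mathbb E_Q[r_T(T,\cdot)-1\mid R=r]\bigr\|_{\alpha}\le\mathbb E_Q[g_T(T)\mid R=r]$. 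It follows from Minkowski's integral inequality, applied with the regular conditional law $Q_{T\mid R}(\cdot\mid r)$ to the jointly measurable $r_T$. Equivalently, it follows from the duality $\|h\|_\alpha=\sup_{\|\psi\|_{\alpha'}\le 1}\int h\,\psi\,dP_{A|c}$ with the supremum taken after the conditional expectation. You do name this tool, so the proof stands once that sentence is removed.
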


\begin{proof}[Proof sketch]
Fix \(c\) outside a \(P_C\)-null set. Under the conditional law \(P_{\cdot\,|\,C=c}\),
the Markov property implies that \(R\) is obtained from \(T\) by a stochastic
kernel acting only on the \(T\)-coordinate (keeping \(A\) unchanged). Therefore,
by monotonicity of \(V_\alpha\) under local post-processing in the classical
setting \citep[Theorem~8]{mojahedian2019correlation}, we have
\(V_\alpha(A;R\,|\,C=c)\le V_\alpha(A;T\,|\,C=c)\). This holds for \(P_C\)-almost
every \(c\), proving \eqref{eq:valpha_dpi}.
\end{proof}



\section{Formal projected and sample-wise \texorpdfstring{$V_\alpha$}{V-alpha} bounds}
\label{app:scalable_bounds}

This appendix collects theorem-valid projected, sample-wise, and binarized
consequences of the conditional supersample bound in
Section~\ref{sec:scaling_high_dim}. All density ratios are interpreted as
Radon--Nikodym derivatives with respect to the relevant marginal law; if the
corresponding absolute-continuity or integrability condition fails, the
associated \(V_\alpha\) quantity is interpreted as \(+\infty\).

\paragraph{Local likelihood-ratio deviation.}
For random variables \(A,R,C\) and \(\alpha\ge 1\), define the local
\(V_\alpha\) likelihood-ratio deviation
\begin{align}
\lambda_\alpha(r;A\mid C=c)
\coloneqq
\left(
\mathbb E_{A\sim P_{A\mid C=c}}
\left|
\frac{P_{R\mid A,C}}{P_{R\mid C}}(r\mid A,C=c)-1
\right|^\alpha
\right)^{1/\alpha}.
\label{eq:lambda_alpha_def}
\end{align}
Then, by definition,
\begin{align}
V_\alpha(A;R\mid C=c)
=
\mathbb E_{R\sim P_{R\mid C=c}}
\bigl[
\lambda_\alpha(R;A\mid C=c)
\bigr].
\label{eq:valpha_via_lambda}
\end{align}
When there is no conditioning variable, we write \(\lambda_\alpha(r;A)\) and
\(V_\alpha(A;R)\). Throughout, \(\alpha'\) denotes the Hölder conjugate of
\(\alpha\), with \(\alpha'=\infty\) when \(\alpha=1\).

\paragraph{Supersample notation.}
Recall the supersample construction from Section~\ref{sec:scaling_high_dim}:
\[
\tilde S=\{(\tilde Z_{i,0},\tilde Z_{i,1})\}_{i=1}^n,
\qquad
U=(U_1,\dots,U_n)\sim \mathrm{Unif}(\{0,1\}^n),
\]
with \(U\) independent of \(\tilde S\), and
\(S_i=\tilde Z_{i,U_i}\), \(\bar S_i=\tilde Z_{i,1-U_i}\).
The selected-versus-ghost gap is
\begin{align}
\hat\Delta_{\tilde S}(w,U)
\coloneqq
\frac1n\sum_{i=1}^n
\Bigl[
\ell(w,\tilde Z_{i,1-U_i})
-
\ell(w,\tilde Z_{i,U_i})
\Bigr].
\label{eq:delta_hat_appendix}
\end{align}

\paragraph{Projected statistics.}
Write \(\tilde Z_{i,b}=(\tilde X_{i,b},\tilde Y_{i,b})\). When the loss factors
through predictions, i.e.,
\(\ell(w,(x,y))=\rho(f(w,x),y)\),
we use the supersample prediction vector
\begin{align}
T_{\tilde S}
\coloneqq
\bigl(f(W,\tilde X_{i,b})\bigr)_{i\in[n],\,b\in\{0,1\}}.
\label{eq:T_tildeS_def}
\end{align}
For loss-space reductions, define evaluated losses and their derived statistics:
\begin{align}
L_i^b \coloneqq \ell(W,\tilde Z_{i,b}),
\qquad
D_i \coloneqq L_i^1-L_i^0,
\qquad
P_i \coloneqq (L_i^0,L_i^1).
\label{eq:loss_space_defs}
\end{align}
We also use the loss-difference \emph{vector}
\(D_{\tilde S}\coloneqq (D_1,\dots,D_n)\).
All conditional laws such as \(P_{D_i\mid U_i,\tilde S}\) are understood after
marginalizing over \(U_{-i}\) and any algorithmic randomness.

\paragraph{Binarized (finite-state) losses.}
Assume first that \(0\le \ell(w,z)\le 1\). For each \(i\) and \(b\in\{0,1\}\),
draw \(R_i^b\sim \mathrm{Unif}[0,1]\) independently conditional on
\((L_i^0,L_i^1)\) and independent of \((U,\tilde S)\), and define
\begin{align}
\bar L_i^b \coloneqq \mathbf 1\{R_i^b \le L_i^b\},
\qquad
\bar D_i \coloneqq \bar L_i^1-\bar L_i^0\in\{-1,0,1\},
\qquad
\bar P_i \coloneqq (\bar L_i^0,\bar L_i^1)\in\{0,1\}^2.
\label{eq:binarized_defs}
\end{align}
Then \(\bar L_i^b\mid L_i^b \sim \mathrm{Bern}(L_i^b)\) and
\(\mathbb E[\bar L_i^b\mid L_i^b]=L_i^b\).
For losses bounded in \([0,M]\), we apply the same construction to
\(\tilde L_i^b\coloneqq L_i^b/M\in[0,1]\) and multiply the final bounds by \(M\).

\paragraph{Roadmap.}
We now state and prove Theorems~\ref{th:conditional_output_valpha}--\ref{th:samplewise_binarized_loss_pair_valpha} in the order summarized in
Tables~\ref{tab:holder_instantiations}, \ref{tab:holder_instantiations_samplewise}, and \ref{tab:holder_instantiations_binarized}.

\begin{revision}
These results are not separate bounding principles: they are instantiations
of the same centered $V_\alpha$--H\"older decoupling argument. They differ in
the statistic to which the argument is applied and, where appropriate, in
the use of projection/data processing, sample-wise decomposition, or
binarization to obtain lower-dimensional dependence quantities.
\end{revision}
\newcommand{\laa}{\lambda_\alpha}

\begin{table}[t]
\centering
\rowcolors{2}{gray!10}{white}
\begin{tabularx}{\textwidth}{@{}p{0.22\textwidth}p{0.23\textwidth}X@{}}
\toprule
\textbf{Result} & \textbf{Statistic} & \textbf{Certified bound on $|\overline{\mathrm{gen}}|$} \\
\midrule
Thm.~\ref{th:conditional_output_valpha} &
Output vector $T_{\tilde S}$ &
$\displaystyle \mathbb E_{\tilde S}\!\left[\esssup_t\|\hat\Delta_{\tilde S}(t,U)\|_{\alpha',U}\,V_\alpha(U;T_{\tilde S}\mid\tilde S)\right]$ \\
Thm.~\ref{th:conditional_loss_difference_valpha} &
Loss-difference vector $D_{\tilde S}$ &
$\displaystyle \mathbb E_{\tilde S}\!\left[\esssup_d\left\|\frac1n\sum_i(1-2U_i)d_i\right\|_{\alpha',U}V_\alpha(U;D_{\tilde S}\mid\tilde S)\right]$ \\
\bottomrule
\end{tabularx}
\caption{\rev{Full-vector conditional $V_\alpha$ bounds. These are direct instantiations of the centered $V_\alpha$--H\"older decoupling principle.}}
\label{tab:holder_instantiations}
\end{table}

\begin{table}[t]
\centering
\rowcolors{2}{gray!10}{white}
\begin{tabularx}{\textwidth}{@{}p{0.22\textwidth}p{0.23\textwidth}X@{}}
\toprule
\textbf{Result} & \textbf{Statistic} & \textbf{Certified bound on $|\overline{\mathrm{gen}}|$} \\
\midrule
Thm.~\ref{th:samplewise_loss_difference_valpha} &
$D_i=L_i^1-L_i^0$ &
$\displaystyle \frac1n\sum_i\mathbb E_{\tilde S}\mathbb E_{D_i\mid\tilde S}[|D_i|\laa(D_i;U_i\mid\tilde S)]$; if $0\le\ell\le M$, at most $\displaystyle \frac{M}{n}\sum_i\mathbb E_{\tilde S}V_\alpha(U_i;D_i\mid\tilde S)$ \\
Thm.~\ref{th:samplewise_single_loss_valpha} &
Single loss $L_i^0$ &
$\displaystyle \frac{2}{n}\sum_i\mathbb E_{L_i^0}[|L_i^0-c_i|\laa(L_i^0;U_i)]$; if $0\le\ell\le M$ and $c_i=M/2$, at most $\displaystyle \frac{M}{n}\sum_iV_\alpha(U_i;L_i^0)$ \\
Thm.~\ref{th:samplewise_loss_pair_valpha} &
$P_i=(L_i^0,L_i^1)$ &
$\displaystyle \frac1n\sum_i\mathbb E_{\tilde S}\mathbb E_{P_i\mid\tilde S}[|L_i^1-L_i^0|\laa(P_i;U_i\mid\tilde S)]$; if $0\le\ell\le M$, at most $\displaystyle \frac{M}{n}\sum_i\mathbb E_{\tilde S}V_\alpha(U_i;P_i\mid\tilde S)$ \\
\bottomrule
\end{tabularx}
\caption{\rev{Continuous sample-wise loss-space \(V_\alpha\) generalization bounds. For each result, the table gives the projected statistic and the corresponding certified bound on \(\left|\overline{\mathrm{gen}}\right|\). The weighted expressions retain the local loss magnitude, while the simplified expressions apply when the loss is bounded as \(0\le \ell\le M\).}}
\label{tab:holder_instantiations_samplewise}
\end{table}

\begin{table}[t]
\centering
\rowcolors{2}{gray!10}{white}
\begin{tabularx}{\textwidth}{@{}p{0.22\textwidth}p{0.23\textwidth}X@{}}
\toprule
\textbf{Result} & \textbf{Statistic} & \textbf{Certified bound on $|\overline{\mathrm{gen}}|$ for $0\le\ell\le1$} \\
\midrule
Thm.~\ref{th:samplewise_binarized_single_loss_valpha} &
$\bar L_i^0\in\{0,1\}$ &
$\displaystyle \frac1n\sum_i V_\alpha(U_i;\bar L_i^0)$ \\
Thm.~\ref{th:samplewise_binarized_loss_difference_valpha} &
$\bar D_i\in\{-1,0,1\}$ &
$\displaystyle \frac1n\sum_i\mathbb E_{\bar D_i}[|\bar D_i|\laa(\bar D_i;U_i)]\le\frac1n\sum_iV_\alpha(U_i;\bar D_i)$ \\
Thm.~\ref{th:samplewise_binarized_loss_pair_valpha} &
$\bar P_i\in\{0,1\}^2$ &
$\displaystyle \frac1n\sum_i\mathbb E_{\bar P_i}[|\bar L_i^1-\bar L_i^0|\laa(\bar P_i;U_i)]\le\frac1n\sum_iV_\alpha(U_i;\bar P_i)$ \\
\bottomrule
\end{tabularx}
\caption{\rev{Binarized sample-wise $V_\alpha$ bounds. For losses in $[0,M]$, multiply the displayed right-hand sides by $M$.}}
\label{tab:holder_instantiations_binarized}
\end{table}
\noindent\textbf{Remark (data processing).}
If \(R=g(W,\tilde S)\) is any statistic computed from \((W,\tilde S)\), then
conditionally on \(\tilde S\) we have \(U\to W\to R\), and the conditional
data-processing inequality implies
\(V_\alpha(U;R\mid \tilde S)\le V_\alpha(U;W\mid \tilde S)\) (Appendix~\ref{app:scalable_bounds_lemmas}).

\subsection{Conditional output-space bound}

Write \(\tilde Z_{i,b}=(\tilde X_{i,b},\tilde Y_{i,b})\). Suppose the loss
factors through predictions: there exist measurable maps
\(f:\mathcal W\times\mathcal X\to\mathcal A\) and
\(\rho:\mathcal A\times\mathcal Y\to\mathbb R\) such that
\[
    \ell(w,(x,y))=\rho(f(w,x),y).
\]
For fixed \(\tilde S\), define the supersample prediction vector
\begin{align}
    T_{\tilde S}
    \coloneqq
    \big(f(W,\tilde X_{i,b})\big)_{i\in[n],\,b\in\{0,1\}}.
    \label{eq:app_prediction_vector}
\end{align}
For \(t=(t_{i,b})_{i,b}\), define
\begin{align}
    \hat\Delta_{\tilde S}(t,U)
    \coloneqq
    \frac1n\sum_{i=1}^n
    \left[
        \rho(t_{i,1-U_i},\tilde Y_{i,1-U_i})
        -
        \rho(t_{i,U_i},\tilde Y_{i,U_i})
    \right].
    \label{eq:app_output_gap}
\end{align}
Then
\[
    \hat\Delta_{\tilde S}(W,U)
    =
    \hat\Delta_{\tilde S}(T_{\tilde S},U).
\]

\begin{theorem}[Conditional output-space \(V_\alpha\) bound]
\label{th:conditional_output_valpha}
Suppose the loss factors through predictions as above. For any learning
algorithm and any measurable loss for which the displayed quantities are
well-defined, for every \(\alpha\ge 1\),
\begin{align}
    \left|
        \overline{\mathrm{gen}}(\mu,P_{W|S})
    \right|
    \le
    \mathbb E_{\tilde S}
    \left[
        \esssup_{t\in\mathcal T_{\tilde S}}
        \left\|
            \hat\Delta_{\tilde S}(t,U)
        \right\|_{\alpha',U}
        \,
        V_\alpha(U;T_{\tilde S}|\tilde S)
    \right],
    \label{eq:conditional_output_valpha_bound}
\end{align}
where \(\mathcal T_{\tilde S}\) is the range of
\(w\mapsto(f(w,\tilde X_{i,b}))_{i,b}\), and
\begin{align}
    V_\alpha(U;T_{\tilde S}|\tilde S)
    =
    \mathbb E_{T\sim P_{T|\tilde S}}
    \left[
        \left(
            \mathbb E_U
            \left|
                \frac{P_{T|U,\tilde S}}{P_{T|\tilde S}}
                (T|U,\tilde S)
                -1
            \right|^\alpha
        \right)^{1/\alpha}
    \right].
    \label{eq:conditional_output_valpha_def}
\end{align}
Moreover, conditionally on \(\tilde S\),
\[
    U\to W\to T_{\tilde S}
\]
is a Markov chain, and therefore
\begin{align}
    V_\alpha(U;T_{\tilde S}|\tilde S)
    \le
    V_\alpha(U;W|\tilde S).
    \label{eq:conditional_output_valpha_dpi}
\end{align}
\end{theorem}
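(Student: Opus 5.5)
The plan is to instantiate Lemma~\ref{lem:master_holder} with conditioning variable \(C=\tilde S\), selector \(A=U\), statistic \(T=T_{\tilde S}\), and test function \(\phi_{\tilde S}(t,u)\coloneqq\hat\Delta_{\tilde S}(t,u)\) as defined in \eqref{eq:app_output_gap}. The bound then follows from three verifications: the generalization identity, the centering hypothesis, and the data-processing consequence.

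\emph{Generalization identity.} By the standard supersample identity from Section~\ref{sec:scaling_high_dim}, \(\overline{\mathrm{gen}}(\mu,P_{W|S})=\mathbb E[\hat\Delta_{\tilde S}(W,U)]\). Since the loss factors through predictions, \(\hat\Delta_{\tilde S}(W,U)=\hat\Delta_{\tilde S}(T_{\tilde S},U)\) pointwise, so
\[
\overline{\mathrm{gen}}=\mathbb E\bigl[\phi_{\tilde S}(T_{\tilde S},U)\bigr].
\]
Measurability of \(\phi\) in \((t,u)\) follows from measurability of \(\rho\).

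\emph{Centering hypothesis \eqref{eq:master_centering}.} We need \(\mathbb E_{U\sim P_{U|\tilde S}}[\phi_{\tilde S}(t,U)]=0\) for every fixed \(t\). Because \(U\) is independent of \(\tilde S\), \(P_{U|\tilde S}\) is uniform on \(\{0,1\}^n\). For each \(i\), \(U_i\) is a fair bit, so the summand
\[
\rho(t_{i,1-U_i},\tilde Y_{i,1-U_i})-\rho(t_{i,U_i},\tilde Y_{i,U_i})
\]
is antisymmetric under \(U_i\mapsto1-U_i\). Its average over \(U_i\) is therefore zero. This is exactly where \(t\) must be held fixed and not coupled to \(U\), which is what the lemma requires.

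\emph{Applying the lemma.} Inequality \eqref{eq:master_holder_sup} then gives
\[
\bigl|\overline{\mathrm{gen}}\bigr|\le\mathbb E_{\tilde S}\Bigl[\esssup_t\|\hat\Delta_{\tilde S}(t,U)\|_{\alpha',U}\,V_\alpha(U;T_{\tilde S}|\tilde S)\Bigr].
\]
The conditional \(V_\alpha\) in \eqref{eq:conditional_valpha_general_def} coincides with \eqref{eq:conditional_output_valpha_def}. The essential supremum is taken with respect to \(P_{T|\tilde S}\), which is supported on \(\mathcal T_{\tilde S}\), so the essential supremum over \(t\in\mathcal T_{\tilde S}\) is at least the one appearing in the lemma. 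Hence \eqref{eq:conditional_output_valpha_bound} holds.

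\emph{Data-processing claim.} Conditionally on \(\tilde S\), \(T_{\tilde S}\) is a deterministic measurable function of \(W\) and the fixed \(\tilde X_{i,b}\). So given \((W,\tilde S)\), the statistic \(T_{\tilde S}\) is independent of \(U\), i.e.\ \(U\to W\to T_{\tilde S}\) is a Markov chain given \(\tilde S\). Lemma~\ref{lem:valpha_dpi} then yields \eqref{eq:conditional_output_valpha_dpi}.

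The main obstacle is measure-theoretic rather than algebraic. It consists of ensuring the Radon--Nikodym derivatives \(P_{T|U,\tilde S}/P_{T|\tilde S}\) exist for a.e.\ \(\tilde S\), which holds because \(U\) is finite and each \(P_{T|U=u,\tilde S}\) is dominated by the mixture \(P_{T|\tilde S}\) with density at most \(2^n\). The absolute-value step also needs to be justified when quantities are infinite. I would handle the latter by the \(+\infty\) convention stated in the section.
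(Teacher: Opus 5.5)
Your proposal is correct and follows the paper's proof. As there, you apply Lemma~\ref{lem:master_holder} with $C=\tilde S$, $A=U$, $T=T_{\tilde S}$ and $\phi_{\tilde S}=\hat\Delta_{\tilde S}$, verify centering via the sign flip $U_i\mapsto 1-U_i$, and obtain the data-processing claim from Lemma~\ref{lem:valpha_dpi} because $T_{\tilde S}$ is a measurable function of $(W,\tilde S)$. Your additional remarks are valid refinements the paper leaves implicit: the domination $P_{T\mid U=u,\tilde S}\le 2^n P_{T\mid\tilde S}$ guaranteeing the Radon--Nikodym derivatives, and the support of the essential supremum on $\mathcal T_{\tilde S}$.
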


\begin{proof}
Apply Lemma~\ref{lem:master_holder} with
\[
    C=\tilde S,\qquad
    A=U,\qquad
    T=T_{\tilde S},\qquad
    \phi_{\tilde S}(t,U)=\hat\Delta_{\tilde S}(t,U).
\]
For every fixed \(t\) and \(\tilde S\), the map
\(U\mapsto\hat\Delta_{\tilde S}(t,U)\) is centered under the uniform selector:
each pair \((\tilde Z_{i,0},\tilde Z_{i,1})\) contributes opposite signs under
\(U_i=0\) and \(U_i=1\). Hence
\[
    \mathbb E_U[\hat\Delta_{\tilde S}(t,U)]=0.
\]
The supremum form of Lemma~\ref{lem:master_holder} gives
\eqref{eq:conditional_output_valpha_bound}. Finally,
\(T_{\tilde S}\) is a measurable function of \((W,\tilde S)\), so the
conditional Markov chain \(U\to W\to T_{\tilde S}\) holds given \(\tilde S\).
Lemma~\ref{lem:valpha_dpi} gives
\eqref{eq:conditional_output_valpha_dpi}.
\end{proof}

\subsection{Conditional loss-difference vector bound}

For fixed \(\tilde S\), define the loss-difference vector
\begin{align}
    D_{\tilde S}
    =
    (D_1,\dots,D_n),
    \qquad
    D_i
    =
    \ell(W,\tilde Z_{i,1})-\ell(W,\tilde Z_{i,0}).
    \label{eq:app_loss_difference_vector}
\end{align}
For \(d=(d_1,\dots,d_n)\in\mathbb R^n\), define
\begin{align}
    \hat\Delta_{\tilde S}(d,U)
    =
    \frac1n\sum_{i=1}^n(1-2U_i)d_i.
    \label{eq:app_loss_difference_gap}
\end{align}
Then
\[
    \hat\Delta_{\tilde S}(W,U)
    =
    \hat\Delta_{\tilde S}(D_{\tilde S},U).
\]

\begin{theorem}[Conditional loss-difference \(V_\alpha\) bound]
\label{th:conditional_loss_difference_valpha}
For any learning algorithm and any measurable loss for which the displayed
quantities are well-defined, for every \(\alpha\ge 1\),
\begin{align}
    \left|
        \overline{\mathrm{gen}}(\mu,P_{W|S})
    \right|
    \le
    \mathbb E_{\tilde S}
    \left[
        \esssup_{d\in\mathcal D_{\tilde S}}
        \left\|
            \hat\Delta_{\tilde S}(d,U)
        \right\|_{\alpha',U}
        \,
        V_\alpha(U;D_{\tilde S}|\tilde S)
    \right],
    \label{eq:conditional_loss_difference_valpha_bound}
\end{align}
where \(\mathcal D_{\tilde S}\) is the range of
\[
    w\mapsto
    \big(\ell(w,\tilde Z_{i,1})-\ell(w,\tilde Z_{i,0})\big)_{i=1}^n,
\]
and
\begin{align}
    V_\alpha(U;D_{\tilde S}|\tilde S)
    =
    \mathbb E_{D\sim P_{D|\tilde S}}
    \left[
        \left(
            \mathbb E_U
            \left|
                \frac{P_{D|U,\tilde S}}{P_{D|\tilde S}}
                (D|U,\tilde S)
                -1
            \right|^\alpha
        \right)^{1/\alpha}
    \right].
    \label{eq:conditional_loss_difference_valpha_def}
\end{align}
If the loss factors through predictions, then \(D_{\tilde S}\) is a measurable
function of \((T_{\tilde S},\tilde S)\), and consequently
\begin{align}
    V_\alpha(U;D_{\tilde S}|\tilde S)
    \le
    V_\alpha(U;T_{\tilde S}|\tilde S).
    \label{eq:loss_difference_data_processing}
\end{align}
\end{theorem}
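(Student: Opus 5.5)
The plan is to obtain Theorem~\ref{th:conditional_loss_difference_valpha} as a direct instantiation of Lemma~\ref{lem:master_holder}, mirroring the proof of Theorem~\ref{th:conditional_output_valpha}. We take
\[
C=\tilde S,\qquad A=U,\qquad T=D_{\tilde S},\qquad \phi_{\tilde S}(d,U)=\hat\Delta_{\tilde S}(d,U)=\frac1n\sum_{i=1}^n(1-2U_i)d_i .
\]
First we check that $\mathbb E[\phi_{\tilde S}(D_{\tilde S},U)]$ equals the expected generalization error. By definition, $\ell(W,\tilde Z_{i,1-U_i})-\ell(W,\tilde Z_{i,U_i})$ equals $D_i$ when $U_i=0$ and $-D_i$ when $U_i=1$, i.e.\ $(1-2U_i)D_i$. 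Hence $\hat\Delta_{\tilde S}(W,U)=\hat\Delta_{\tilde S}(D_{\tilde S},U)$ pointwise, and the supersample identity of Section~\ref{sec:scaling_high_dim} gives $\overline{\mathrm{gen}}=\mathbb E[\hat\Delta_{\tilde S}(D_{\tilde S},U)]$.

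Next we verify the centering hypothesis~\eqref{eq:master_centering}. For fixed $\tilde S$ and every fixed $d\in\mathbb R^n$, $U$ is uniform on $\{0,1\}^n$ and independent of $\tilde S$, so $P_{U\mid\tilde S}$ is uniform and $\mathbb E_U[1-2U_i]=0$. Thus $\mathbb E_U[\hat\Delta_{\tilde S}(d,U)]=0$ for all $d$. The supremum form~\eqref{eq:master_holder_sup} then yields~\eqref{eq:conditional_loss_difference_valpha_bound}, with $V_\alpha(U;D_{\tilde S}\mid\tilde S)$ exactly as in~\eqref{eq:conditional_loss_difference_valpha_def}.

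The essential supremum is taken with respect to $P_{D\mid\tilde S}$, which is supported in the closure of the range $\mathcal D_{\tilde S}$. Bounding it by the supremum over $\mathcal D_{\tilde S}$ is therefore harmless, since $d\mapsto\|\hat\Delta_{\tilde S}(d,U)\|_{\alpha',U}$ is continuous, being a norm of a linear map. The measurability of $D_{\tilde S}$ and the Radon--Nikodym interpretation follow the stated convention, and the bound is read as $+\infty$ when absolute continuity fails.

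For the data-processing claim~\eqref{eq:loss_difference_data_processing}, assume $\ell(w,(x,y))=\rho(f(w,x),y)$. Then $D_i=\rho(T_{i,1},\tilde Y_{i,1})-\rho(T_{i,0},\tilde Y_{i,0})$, so $D_{\tilde S}=g(T_{\tilde S},\tilde S)$ for a measurable $g$. Conditionally on $\tilde S$, the map $g(\cdot,\tilde S)$ is a deterministic kernel acting on $T_{\tilde S}$ alone, so $U\to T_{\tilde S}\to D_{\tilde S}$ is a Markov chain given $\tilde S$. Lemma~\ref{lem:valpha_dpi} then gives the inequality $P_{\tilde S}$-a.s.

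The only potentially delicate step is this conditional Markov chain and the measurability of $g$, i.e.\ that $\rho$ is jointly measurable so the composition is measurable on standard Borel spaces. The rest is bookkeeping, so no step should be a serious obstacle.
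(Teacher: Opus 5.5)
Your proposal is correct and matches the paper's proof essentially step for step. You instantiate Lemma~\ref{lem:master_holder} with $C=\tilde S$, $A=U$, $T=D_{\tilde S}$ and $\phi_{\tilde S}(d,U)=\frac1n\sum_i(1-2U_i)d_i$, check centering via $\mathbb E[1-2U_i]=0$, take the supremum form, and obtain the data-processing inequality from Lemma~\ref{lem:valpha_dpi} through the conditional Markov chain $U\to T_{\tilde S}\to D_{\tilde S}$; your remarks on the essential supremum over the range and on measurability are harmless refinements that the paper leaves implicit.
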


\begin{proof}
Apply Lemma~\ref{lem:master_holder} with
\[
    C=\tilde S,\qquad
    A=U,\qquad
    T=D_{\tilde S},\qquad
    \phi_{\tilde S}(d,U)=\frac1n\sum_{i=1}^n(1-2U_i)d_i.
\]
For fixed \(d\) and \(\tilde S\),
\[
    \mathbb E_U
    \left[
        \frac1n\sum_{i=1}^n(1-2U_i)d_i
    \right]
    =
    \frac1n\sum_{i=1}^n d_i\,\mathbb E[1-2U_i]
    =
    0.
\]
Thus the centering condition of Lemma~\ref{lem:master_holder} holds, and its
supremum form gives \eqref{eq:conditional_loss_difference_valpha_bound}. If the
loss factors through predictions, then \(D_{\tilde S}\) is a measurable
post-processing of \(T_{\tilde S}\) given \(\tilde S\), so
\eqref{eq:loss_difference_data_processing} follows from
Lemma~\ref{lem:valpha_dpi}.
\end{proof}

\subsection{Sample-wise loss-difference bound}

For each \(i\), define
\begin{align}
    D_i
    =
    \ell(W,\tilde Z_{i,1})-\ell(W,\tilde Z_{i,0}).
    \label{eq:app_samplewise_loss_difference}
\end{align}
The conditional laws of \(D_i\) given \((U_i,\tilde S)\) are understood after
marginalizing over \(U_{-i}\) and any algorithmic randomness.

\begin{theorem}[Sample-wise loss-difference \(V_\alpha\) bound]
\label{th:samplewise_loss_difference_valpha}
For any learning algorithm and any measurable loss for which the displayed
quantities are well-defined, for every \(\alpha\ge 1\),
\begin{align}
    \left|
        \overline{\mathrm{gen}}(\mu,P_{W|S})
    \right|
    \le
    \frac1n\sum_{i=1}^n
    \mathbb E_{\tilde S}
    \mathbb E_{D_i|\tilde S}
    \left[
        |D_i|\,
        \lambda_\alpha(D_i;U_i|\tilde S)
    \right].
    \label{eq:samplewise_loss_difference_valpha_bound}
\end{align}
Equivalently,
\begin{align}
    \lambda_\alpha(D_i;U_i|\tilde S)
    =
    \left(
        \mathbb E_{U_i}
        \left|
            \frac{P_{D_i|U_i,\tilde S}}{P_{D_i|\tilde S}}
            (D_i|U_i,\tilde S)
            -1
        \right|^\alpha
    \right)^{1/\alpha}.
\end{align}
In particular, if \(0\le \ell(w,z)\le M\), then
\begin{align}
    \left|
        \overline{\mathrm{gen}}(\mu,P_{W|S})
    \right|
    \le
    \frac{M}{n}\sum_{i=1}^n
    \mathbb E_{\tilde S}
    \left[
        V_\alpha(U_i;D_i|\tilde S)
    \right].
    \label{eq:samplewise_loss_difference_valpha_bounded}
\end{align}
\end{theorem}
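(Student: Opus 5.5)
We decompose the generalization gap coordinatewise and apply the master decoupling lemma (Lemma~\ref{lem:master_holder}) once per coordinate. By the supersample identity,
\[
\overline{\mathrm{gen}}(\mu,P_{W|S})=\mathbb E\bigl[\hat\Delta_{\tilde S}(W,U)\bigr]=\frac1n\sum_{i=1}^n\mathbb E\bigl[(1-2U_i)D_i\bigr],
\]
since $\ell(W,\tilde Z_{i,1-U_i})-\ell(W,\tilde Z_{i,U_i})=(1-2U_i)D_i$. The triangle inequality then gives $|\overline{\mathrm{gen}}|\le\frac1n\sum_i|\mathbb E[(1-2U_i)D_i]|$. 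Each summand depends on $(W,U_{-i})$ and the algorithmic randomness only through the pair $(U_i,D_i)$. We may therefore work with the joint law of $(U_i,D_i)$ given $\tilde S$, obtained by marginalizing $U_{-i}$ and the algorithm, exactly as the statement stipulates.

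For fixed $i$ we invoke Lemma~\ref{lem:master_holder} in its weighted form \eqref{eq:master_holder_weighted}, with $C=\tilde S$, $A=U_i$, $T=D_i$ and $\phi_{\tilde S}(d,u)=(1-2u)d$. The centering condition holds because $U_i$ is independent of $\tilde S$ and $\mathrm{Unif}\{0,1\}$, so $\mathbb E_{U_i}[(1-2U_i)d]=0$ for every $d$. The lemma bounds the $i$-th term by $\mathbb E_{\tilde S}\mathbb E_{D_i|\tilde S}\bigl[\|\phi_{\tilde S}(D_i,\cdot)\|_{L_{\alpha'}(P_{U_i})}\,\lambda_\alpha(D_i;U_i|\tilde S)\bigr]$. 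The key simplification is that $|1-2U_i|=1$ identically, so $\|(1-2U_i)d\|_{\alpha',U_i}=|d|$ for every $\alpha'\in[1,\infty]$. Substituting and averaging over $i$ yields \eqref{eq:samplewise_loss_difference_valpha_bound}. The displayed formula for $\lambda_\alpha$ is just \eqref{eq:lambda_alpha_def} with $A=U_i$ and $R=D_i$.

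For the bounded case, $0\le\ell\le M$ implies $|D_i|\le M$ almost surely. Pulling out this factor leaves $\mathbb E_{D_i|\tilde S}[\lambda_\alpha(D_i;U_i|\tilde S)]$, which equals $V_\alpha(U_i;D_i|\tilde S)$ by \eqref{eq:valpha_via_lambda}. This gives \eqref{eq:samplewise_loss_difference_valpha_bounded}.

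The main obstacle is justifying the reduction to the pair law. We must check that $\mathbb E[(1-2U_i)D_i]$ equals the expectation under $P_{\tilde S}\otimes P_{U_i,D_i|\tilde S}$, and that the Radon--Nikodym derivative $dP_{D_i|U_i,\tilde S}/dP_{D_i|\tilde S}$ exists. The first is a tower-property computation. For the second, $P_{D_i|\tilde S}=\tfrac12\sum_u P_{D_i|U_i=u,\tilde S}$ dominates each conditional law, so the derivative exists and is at most $2$. Any integrability failures are covered by the $+\infty$ convention.
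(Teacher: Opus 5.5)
Your proposal is correct and follows essentially the same route as the paper. The paper writes the gap as $\frac1n\sum_i(1-2U_i)D_i$, applies the triangle inequality, and then uses the weighted form of Lemma~\ref{lem:master_holder} with $C=\tilde S$, $A=U_i$, $T=D_i$, $\phi(d,U_i)=(1-2U_i)d$, together with $\|(1-2U_i)d\|_{\alpha',U_i}=|d|$ and $|D_i|\le M$. Your added observation that $P_{D_i|\tilde S}=\tfrac12\sum_u P_{D_i|U_i=u,\tilde S}$ dominates each conditional law, so the likelihood ratio exists and is at most $2$, is a harmless extra justification that the paper leaves to its Radon--Nikodym convention.
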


\begin{proof}
The supersample gap decomposes as
\[
    \hat\Delta_{\tilde S}(W,U)
    =
    \frac1n\sum_{i=1}^n(1-2U_i)D_i.
\]
Therefore, by the triangle inequality,
\[
    \left|\overline{\mathrm{gen}}(\mu,P_{W|S})\right|
    \le
    \frac1n\sum_{i=1}^n
    \left|
        \mathbb E[(1-2U_i)D_i]
    \right|.
\]
For each \(i\), apply Lemma~\ref{lem:master_holder} with
\[
    C=\tilde S,\qquad
    A=U_i,\qquad
    T=D_i,\qquad
    \phi_{\tilde S}(d,U_i)=(1-2U_i)d.
\]
For fixed \(d\), the centering condition holds because
\[
    \mathbb E_{U_i}[(1-2U_i)d]=0.
\]
Moreover,
\[
    \|(1-2U_i)d\|_{\alpha',U_i}=|d|.
\]
The weighted form of Lemma~\ref{lem:master_holder} gives
\eqref{eq:samplewise_loss_difference_valpha_bound}. If
\(0\le \ell\le M\), then \(|D_i|\le M\), yielding
\eqref{eq:samplewise_loss_difference_valpha_bounded}.
\end{proof}

\subsection{Sample-wise single-loss bound}

Define
\begin{align}
    L_i^b
    =
    \ell(W,\tilde Z_{i,b}),
    \qquad b\in\{0,1\}.
    \label{eq:app_samplewise_single_loss_Lib}
\end{align}

\begin{theorem}[Sample-wise single-loss \(V_\alpha\) bound]
\label{th:samplewise_single_loss_valpha}
For any learning algorithm and any measurable loss for which the displayed
quantities are well-defined, for every \(\alpha\ge 1\) and every choice of
constants \(c_1,\dots,c_n\in\mathbb R\),
\begin{align}
    \left|
        \overline{\mathrm{gen}}(\mu,P_{W|S})
    \right|
    \le
    \frac{2}{n}\sum_{i=1}^n
    \mathbb E_{L_i^0}
    \left[
        |L_i^0-c_i|\,
        \lambda_\alpha(L_i^0;U_i)
    \right],
    \label{eq:samplewise_single_loss_valpha_weighted}
\end{align}
where
\begin{align}
    \lambda_\alpha(L_i^0;U_i)
    =
    \left(
        \mathbb E_{U_i}
        \left|
            \frac{P_{L_i^0|U_i}}{P_{L_i^0}}
            (L_i^0|U_i)
            -1
        \right|^\alpha
    \right)^{1/\alpha}.
\end{align}
In particular, if \(0\le \ell(w,z)\le M\), then choosing \(c_i=M/2\) gives
\begin{align}
    \left|
        \overline{\mathrm{gen}}(\mu,P_{W|S})
    \right|
    \le
    \frac{M}{n}\sum_{i=1}^n
    V_\alpha(U_i;L_i^0).
    \label{eq:samplewise_single_loss_valpha_bounded}
\end{align}
\end{theorem}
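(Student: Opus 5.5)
The plan is to reduce the full gap to per-coordinate terms, use the exchange symmetry of the supersample to rewrite each term through the single evaluated loss \(L_i^0\), and then apply Lemma~\ref{lem:master_holder} with a trivial conditioning variable.

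\textbf{Step 1: per-coordinate decomposition.} From the supersample identity,
\[
\overline{\mathrm{gen}}(\mu,P_{W|S})=\mathbb E[\hat\Delta_{\tilde S}(W,U)]=\frac1n\sum_{i=1}^n\mathbb E[(1-2U_i)(L_i^1-L_i^0)].
\]
This holds because \(\ell(W,\tilde Z_{i,1-U_i})-\ell(W,\tilde Z_{i,U_i})=(1-2U_i)D_i\).

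\textbf{Step 2: exchange symmetry (the main obstacle).} I will show
\[
\mathbb E[(1-2U_i)L_i^1]=-\mathbb E[(1-2U_i)L_i^0].
\]
Consider the transformation that swaps \(\tilde Z_{i,0}\leftrightarrow\tilde Z_{i,1}\) and replaces \(U_i\) by \(1-U_i\), leaving all other coordinates unchanged. Its key properties are the following.
\begin{itemize}
\item It preserves the law of \((\tilde S,U)\), because the \(2n\) supersample entries are i.i.d.\ and \(U_i\) is a fair bit independent of \(\tilde S\).
\item It leaves the training sample \(S\) unchanged, since \(S_i=\tilde Z_{i,U_i}\) is invariant. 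Hence the conditional law of \(W\) is unchanged, because the algorithm sees only \(S\).
\item It maps the pair \((U_i,L_i^0)\) to \((1-U_i,L_i^1)\).
\end{itemize}
Therefore \((U_i,L_i^1)\) has the same joint law as \((1-U_i,L_i^0)\), which gives the displayed identity. Consequently
\[
\mathbb E[(1-2U_i)D_i]=-2\,\mathbb E[(1-2U_i)L_i^0]=-2\,\mathbb E[(1-2U_i)(L_i^0-c_i)],
\]
where the last equality uses \(\mathbb E[1-2U_i]=0\) with \(c_i\) a constant. This step is where care is needed: the argument relies on \(W\) depending on the selected data only through \(S\) (ordered by index), so that the swap genuinely preserves the joint law of \((\tilde S,U,W)\).

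\textbf{Step 3: decoupling.} For each \(i\), apply the weighted form \eqref{eq:master_holder_weighted} of Lemma~\ref{lem:master_holder} with the following choices:
\begin{itemize}
\item \(C\) trivial;
\item \(A=U_i\) and \(T=L_i^0\);
\item \(\phi(t,u)=(1-2u)(t-c_i)\).
\end{itemize}
Centering holds since \(\mathbb E_{U_i}[1-2U_i]=0\) for every \(t\). Because \(|1-2U_i|=1\), we have \(\|\phi(t,\cdot)\|_{L_{\alpha'}(P_{U_i})}=|t-c_i|\) for every \(\alpha'\in[1,\infty]\). The lemma then yields
\[
\bigl|\mathbb E[(1-2U_i)(L_i^0-c_i)]\bigr|\le\mathbb E_{L_i^0}\bigl[|L_i^0-c_i|\,\lambda_\alpha(L_i^0;U_i)\bigr].
\]
Combining this with the triangle inequality over \(i\) and the factor \(2\) from Step 2 gives \eqref{eq:samplewise_single_loss_valpha_weighted}.

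\textbf{Step 4: bounded case.} If \(0\le\ell\le M\) and we take \(c_i=M/2\), then \(|L_i^0-c_i|\le M/2\). By \eqref{eq:valpha_via_lambda}, the weighted term is then at most \((M/2)\,V_\alpha(U_i;L_i^0)\). Multiplying by \(2/n\) and summing over \(i\) gives \eqref{eq:samplewise_single_loss_valpha_bounded}.
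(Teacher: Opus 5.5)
Your proposal is correct and follows essentially the same route as the paper: exchange symmetry rewrites each coordinate's contribution as $2\,\mathbb E[(2U_i-1)(L_i^0-c_i)]$, which you justify more explicitly via the swap $\tilde Z_{i,0}\leftrightarrow\tilde Z_{i,1}$, $U_i\mapsto 1-U_i$. This is followed by Lemma~\ref{lem:master_holder} with trivial conditioning, $A=U_i$, $T=L_i^0$, and the estimate $2|L_i^0-M/2|\le M$ in the bounded case, the only cosmetic difference being that you keep the factor $2$ outside $\phi$ rather than inside it.
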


\begin{proof}
For each \(i\), define the expected \(i\)-th supersample contribution
\[
    G_i
    =
    \mathbb E[L_i^{1-U_i}-L_i^{U_i}].
\]
By the exchangeability of
\((\tilde Z_{i,0},\tilde Z_{i,1})\) and the symmetry of \(U_i\),
\[
    G_i
    =
    \mathbb E[L_i^0|U_i=1]
    -
    \mathbb E[L_i^0|U_i=0].
\]
Since \(U_i\sim\mathrm{Bern}(1/2)\), for any constant \(c_i\),
\begin{align}
    G_i
    &=
    2\mathbb E[(2U_i-1)(L_i^0-c_i)].
    \label{eq:single_loss_symmetry_identity}
\end{align}
Apply Lemma~\ref{lem:master_holder} without a conditioning variable, with
\[
    A=U_i,\qquad
    T=L_i^0,\qquad
    \phi(\ell_0,U_i)=2(2U_i-1)(\ell_0-c_i).
\]
For fixed \(\ell_0\),
\[
    \mathbb E_{U_i}[2(2U_i-1)(\ell_0-c_i)]=0,
\]
and
\[
    \|2(2U_i-1)(\ell_0-c_i)\|_{\alpha',U_i}
    =
    2|\ell_0-c_i|.
\]
Summing over \(i\) proves
\eqref{eq:samplewise_single_loss_valpha_weighted}. If \(0\le L_i^0\le M\) and
\(c_i=M/2\), then \(2|L_i^0-M/2|\le M\), which proves
\eqref{eq:samplewise_single_loss_valpha_bounded}.
\end{proof}

\subsection{Sample-wise loss-pair bound}

Define the evaluated loss pair
\begin{align}
    P_i
    =
    (L_i^0,L_i^1)
    =
    \big(\ell(W,\tilde Z_{i,0}),\ell(W,\tilde Z_{i,1})\big).
    \label{eq:app_loss_pair}
\end{align}
For \(p=(a,b)\), define the local selected-versus-ghost gap
\begin{align}
    \delta_i(p,U_i)
    \coloneqq
    a\mathbf 1\{U_i=1\}
    +b\mathbf 1\{U_i=0\}
    -a\mathbf 1\{U_i=0\}
    -b\mathbf 1\{U_i=1\}.
    \label{eq:app_delta_pair}
\end{align}
Equivalently,
\[
    \delta_i((a,b),U_i)=(1-2U_i)(b-a).
\]

\begin{theorem}[Sample-wise loss-pair \(V_\alpha\) bound]
\label{th:samplewise_loss_pair_valpha}
For any learning algorithm and any measurable loss for which the displayed
quantities are well-defined, for every \(\alpha\ge 1\),
\begin{align}
    \left|
        \overline{\mathrm{gen}}(\mu,P_{W|S})
    \right|
    \le
    \frac1n\sum_{i=1}^n
    \mathbb E_{\tilde S}
    \mathbb E_{P_i|\tilde S}
    \left[
        \|\delta_i(P_i,U_i)\|_{\alpha',U_i}
        \,
        \lambda_\alpha(P_i;U_i|\tilde S)
    \right].
    \label{eq:samplewise_loss_pair_valpha_weighted}
\end{align}
Since
\[
    \|\delta_i(P_i,U_i)\|_{\alpha',U_i}
    =
    |L_i^1-L_i^0|,
\]
this is equivalently
\begin{align}
    \left|
        \overline{\mathrm{gen}}(\mu,P_{W|S})
    \right|
    \le
    \frac1n\sum_{i=1}^n
    \mathbb E_{\tilde S}
    \mathbb E_{P_i|\tilde S}
    \left[
        |L_i^1-L_i^0|
        \,
        \lambda_\alpha(P_i;U_i|\tilde S)
    \right].
    \label{eq:samplewise_loss_pair_valpha_simplified}
\end{align}
In particular, if \(0\le \ell(w,z)\le M\), then
\begin{align}
    \left|
        \overline{\mathrm{gen}}(\mu,P_{W|S})
    \right|
    \le
    \frac{M}{n}\sum_{i=1}^n
    \mathbb E_{\tilde S}
    \left[
        V_\alpha(U_i;P_i|\tilde S)
    \right].
    \label{eq:samplewise_loss_pair_valpha_bounded}
\end{align}
\end{theorem}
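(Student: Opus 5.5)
The argument follows the template of Theorem~\ref{th:samplewise_loss_difference_valpha}, with the scalar difference $D_i$ replaced by the pair $P_i$. The first step is to decompose the supersample gap coordinatewise. By the supersample identity,
\[
\overline{\mathrm{gen}}(\mu,P_{W|S})=\mathbb E\bigl[\hat\Delta_{\tilde S}(W,U)\bigr],
\qquad
\hat\Delta_{\tilde S}(W,U)=\frac1n\sum_{i=1}^n\delta_i(P_i,U_i).
\]
This holds because $\ell(W,\tilde Z_{i,1-U_i})-\ell(W,\tilde Z_{i,U_i})=(1-2U_i)(L_i^1-L_i^0)$. The triangle inequality then gives $|\overline{\mathrm{gen}}|\le \frac1n\sum_i|\mathbb E[\delta_i(P_i,U_i)]|$. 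Each term depends on the joint law of $(\tilde S,U_i,P_i)$ only, after marginalizing $U_{-i}$ and the algorithmic randomness.

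The second step handles each $i$ by applying Lemma~\ref{lem:master_holder}, in its weighted form \eqref{eq:master_holder_weighted}, with
\[
C=\tilde S,\qquad A=U_i,\qquad T=P_i,\qquad \phi_{\tilde S}(p,U_i)=\delta_i(p,U_i).
\]
The hypotheses need checking. First, $U_i$ is independent of $\tilde S$, so $P_{U_i|\tilde S}=\mathrm{Bern}(1/2)$. Second, for every fixed $p=(a,b)$ we have $\mathbb E_{U_i}[(1-2U_i)(b-a)]=0$, so the centering condition \eqref{eq:master_centering} holds for every $t=p$. The lemma then bounds $|\mathbb E[\delta_i(P_i,U_i)]|$ by
\[
\mathbb E_{\tilde S}\mathbb E_{P_i|\tilde S}\Bigl[\|\delta_i(P_i,\cdot)\|_{\alpha',U_i}\,\lambda_\alpha(P_i;U_i|\tilde S)\Bigr],
\]
with the inner factor identified as $\lambda_\alpha$ from \eqref{eq:lambda_alpha_def}. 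Summing over $i$ and dividing by $n$ yields \eqref{eq:samplewise_loss_pair_valpha_weighted}.

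The third step is the norm computation. Since $|\delta_i((a,b),U_i)|=|b-a|$ for both values of $U_i$, the $L_{\alpha'}(U_i)$ norm equals $|b-a|$ for every $\alpha'\in[1,\infty]$, including the essential-supremum case $\alpha=1$. This gives \eqref{eq:samplewise_loss_pair_valpha_simplified}. For bounded losses $0\le\ell\le M$, we bound $|L_i^1-L_i^0|\le M$ and use \eqref{eq:valpha_via_lambda} to recognize $\mathbb E_{P_i|\tilde S}[\lambda_\alpha(P_i;U_i|\tilde S)]=V_\alpha(U_i;P_i|\tilde S)$. This gives \eqref{eq:samplewise_loss_pair_valpha_bounded}.

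The main obstacle is measure-theoretic rather than computational. We must make sure that the conditional law $P_{P_i|U_i,\tilde S}$, obtained after integrating out $U_{-i}$ and the algorithm's internal randomness, is the one entering both sides of the lemma. The change-of-measure step inside Lemma~\ref{lem:master_holder} is applied to the triple $(\tilde S,U_i,P_i)$ alone, which is legitimate since $\delta_i$ depends only on $(P_i,U_i)$. Absolute continuity $P_{P_i|U_i,\tilde S}\ll P_{P_i|\tilde S}$ holds automatically because $P_{P_i|\tilde S}$ is the two-point mixture $\tfrac12\sum_{u}P_{P_i|U_i=u,\tilde S}$, so each component has Radon--Nikodym derivative bounded by $2$. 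The density ratio is therefore well-defined and bounded, and $\lambda_\alpha$ is finite.
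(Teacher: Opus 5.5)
Your proposal is correct and follows the paper's proof essentially step for step. You decompose the supersample gap coordinatewise via $\delta_i(P_i,U_i)=(1-2U_i)(L_i^1-L_i^0)$, apply the weighted form of Lemma~\ref{lem:master_holder} with $C=\tilde S$, $A=U_i$, $T=P_i$, and use $|\delta_i|=|L_i^1-L_i^0|$ for both selector values. Your extra remark that $P_{P_i|U_i=u,\tilde S}\le 2P_{P_i|\tilde S}$ makes the absolute-continuity requirement automatic, whereas the paper covers that case through its $+\infty$ convention.
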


\begin{proof}
The \(i\)-th supersample contribution satisfies
\[
    L_i^{1-U_i}-L_i^{U_i}
    =
    \delta_i(P_i,U_i).
\]
For fixed \(p=(a,b)\),
\[
    \mathbb E_{U_i}[\delta_i(p,U_i)]
    =
    \frac12(b-a)+\frac12(a-b)
    =
    0.
\]
Apply Lemma~\ref{lem:master_holder} with
\[
    C=\tilde S,\qquad
    A=U_i,\qquad
    T=P_i,\qquad
    \phi_{\tilde S}(p,U_i)=\delta_i(p,U_i).
\]
This gives \eqref{eq:samplewise_loss_pair_valpha_weighted}. Since
\[
    |\delta_i((a,b),0)|=|\delta_i((a,b),1)|=|b-a|,
\]
we have
\[
    \|\delta_i(P_i,U_i)\|_{\alpha',U_i}=|L_i^1-L_i^0|,
\]
which gives \eqref{eq:samplewise_loss_pair_valpha_simplified}. If
\(0\le\ell\le M\), then \(|L_i^1-L_i^0|\le M\), yielding
\eqref{eq:samplewise_loss_pair_valpha_bounded}.
\end{proof}

\subsection{Binarized single-loss bound}

Assume first that \(0\le \ell(w,z)\le 1\). For each \(i\), let
\[
    L_i^0=\ell(W,\tilde Z_{i,0}).
\]
Draw \(R_i^0\sim\mathrm{Unif}[0,1]\) independently conditional on \(L_i^0\), and
define
\begin{align}
    \bar L_i^0
    =
    \mathbf 1\{R_i^0\le L_i^0\}.
    \label{eq:app_binarized_single_loss}
\end{align}
Then
\[
    \bar L_i^0|L_i^0\sim\mathrm{Bern}(L_i^0),
    \qquad
    \mathbb E[\bar L_i^0|L_i^0]=L_i^0.
\]

\begin{theorem}[Sample-wise binarized single-loss \(V_\alpha\) bound]
\label{th:samplewise_binarized_single_loss_valpha}
Suppose \(0\le \ell(w,z)\le 1\). For any learning algorithm and any
\(\alpha\ge 1\),
\begin{align}
    \left|
        \overline{\mathrm{gen}}(\mu,P_{W|S})
    \right|
    \le
    \frac1n\sum_{i=1}^n
    V_\alpha(U_i;\bar L_i^0).
    \label{eq:samplewise_binarized_single_loss_valpha}
\end{align}
More generally, if \(0\le \ell(w,z)\le M\), then binarizing \(L_i^0/M\) gives
\begin{align}
    \left|
        \overline{\mathrm{gen}}(\mu,P_{W|S})
    \right|
    \le
    \frac{M}{n}\sum_{i=1}^n
    V_\alpha(U_i;\bar L_i^0).
    \label{eq:samplewise_binarized_single_loss_valpha_M}
\end{align}
\end{theorem}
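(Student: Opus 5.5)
The plan mirrors the proof of Theorem~\ref{th:samplewise_single_loss_valpha}, with the continuous loss $L_i^0$ replaced by its Bernoulli randomization $\bar L_i^0$. It proceeds in four steps.

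\textbf{Step 1: a symmetry identity for each coordinate.} For each $i$, let $G_i=\mathbb E[L_i^{1-U_i}-L_i^{U_i}]$, so that $\overline{\mathrm{gen}}=\frac1n\sum_i G_i$. The identity \eqref{eq:single_loss_symmetry_identity} gives $G_i=\mathbb E[L_i^0\mid U_i=1]-\mathbb E[L_i^0\mid U_i=0]$. Since $R_i^0$ is drawn independently of $U_i$ given $L_i^0$, we have $\mathbb E[\bar L_i^0\mid U_i=u]=\mathbb E[L_i^0\mid U_i=u]$. Therefore
\begin{align}
G_i=\mathbb E[\bar L_i^0\mid U_i=1]-\mathbb E[\bar L_i^0\mid U_i=0]=2\,\mathbb E\bigl[(2U_i-1)(\bar L_i^0-\tfrac12)\bigr].
\end{align}
The last equality uses that $U_i\sim\mathrm{Bern}(1/2)$ and that the constant $1/2$ contributes nothing because $\mathbb E[2U_i-1]=0$.

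\textbf{Step 2: apply the decoupling lemma.} Invoke Lemma~\ref{lem:master_holder} with no conditioning variable, taking $A=U_i$, $T=\bar L_i^0$ and $\phi(\bar\ell,U_i)=2(2U_i-1)(\bar\ell-\tfrac12)$. The centering condition \eqref{eq:master_centering} holds for every $\bar\ell$ because $\mathbb E[2U_i-1]=0$. Moreover,
\begin{align}
\|\phi(\bar\ell,\cdot)\|_{\alpha',U_i}=2|\bar\ell-\tfrac12|=1 \qquad \text{for every } \bar\ell\in\{0,1\}.
\end{align}
The supremum form \eqref{eq:master_holder_sup} then gives $|G_i|\le V_\alpha(U_i;\bar L_i^0)$. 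This is the key point: after binarization the loss-sensitivity factor is exactly $1$, with no slack.

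\textbf{Step 3: combine coordinates and rescale.} Summing over $i$ with the triangle inequality and dividing by $n$ yields \eqref{eq:samplewise_binarized_single_loss_valpha}. For a loss in $[0,M]$, apply the same argument to $\ell/M$. The generalization gap scales linearly, so multiplying by $M$ gives \eqref{eq:samplewise_binarized_single_loss_valpha_M}.

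\textbf{Main obstacle.} The only delicate point is Step 1. We must justify that replacing $L_i^0$ by $\bar L_i^0$ preserves the conditional means given $U_i$. This requires that the auxiliary randomness $R_i^0$ is independent of $(U,\tilde S)$ given $L_i^0$, which holds by construction. We must also check that the exchangeability argument behind \eqref{eq:single_loss_symmetry_identity} is unaffected by this extra randomization, which again holds because $R_i^0$ enters only through $L_i^0$. All the relevant laws live on a finite space, so the Radon--Nikodym derivatives in $V_\alpha(U_i;\bar L_i^0)$ exist whenever they are defined; any remaining degenerate cases are covered by the stated $+\infty$ convention.
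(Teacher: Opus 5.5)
Your proposal is correct and follows essentially the same route as the paper's proof: the symmetry identity with center $c_i=1/2$, transfer to $\bar L_i^0$ via $\mathbb E[\bar L_i^0\mid L_i^0]=L_i^0$ and the conditional independence of the binarization noise from $U_i$, then Lemma~\ref{lem:master_holder} with $\phi(\bar\ell,U_i)=2(2U_i-1)(\bar\ell-\tfrac12)$ (whose $L_{\alpha'}$ norm is exactly $1$), followed by the triangle inequality and rescaling by $M$. The only difference is cosmetic: you replace $L_i^0$ by $\bar L_i^0$ in the conditional means given $U_i$ before forming the centered product, whereas the paper makes this replacement after writing $G_i=2\,\mathbb E[(2U_i-1)(L_i^0-\tfrac12)]$.
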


\begin{proof}
By the single-loss symmetry identity
\eqref{eq:single_loss_symmetry_identity}, with \(c_i=1/2\),
\[
    \mathbb E[L_i^{1-U_i}-L_i^{U_i}]
    =
    2\mathbb E[(2U_i-1)(L_i^0-1/2)].
\]
Because \(\mathbb E[\bar L_i^0|L_i^0]=L_i^0\) and the binarization noise is
independent of \(U_i\) conditional on \(L_i^0\),
\[
    \mathbb E[L_i^{1-U_i}-L_i^{U_i}]
    =
    2\mathbb E[(2U_i-1)(\bar L_i^0-1/2)].
\]
Apply Lemma~\ref{lem:master_holder} without conditioning, with
\[
    A=U_i,\qquad
    T=\bar L_i^0,\qquad
    \phi(\bar \ell,U_i)=2(2U_i-1)(\bar \ell-1/2).
\]
For fixed \(\bar\ell\in\{0,1\}\),
\[
    \mathbb E_{U_i}[\phi(\bar\ell,U_i)]=0,
    \qquad
    \|\phi(\bar\ell,U_i)\|_{\alpha',U_i}=1.
\]
Thus the \(i\)-th contribution is bounded by
\(V_\alpha(U_i;\bar L_i^0)\). Summing over \(i\) gives
\eqref{eq:samplewise_binarized_single_loss_valpha}. For losses in \([0,M]\),
apply the same argument to \(L_i^0/M\) and multiply the final bound by \(M\).
\end{proof}

Since \(\bar L_i^0\) is obtained from \(L_i^0\) by independent post-processing,
the Markov chain
\[
    U_i\to L_i^0\to \bar L_i^0
\]
implies, by Lemma~\ref{lem:valpha_dpi},
\begin{align}
    V_\alpha(U_i;\bar L_i^0)
    \le
    V_\alpha(U_i;L_i^0).
    \label{eq:binarized_single_loss_dpi}
\end{align}

\subsection{Binarized loss-difference bound}

Assume first that \(0\le \ell(w,z)\le 1\). For \(b\in\{0,1\}\), define
\[
    L_i^b=\ell(W,\tilde Z_{i,b}).
\]
Draw \(R_i^b\sim\mathrm{Unif}[0,1]\) independently conditional on
\((L_i^0,L_i^1)\), and define
\begin{align}
    \bar L_i^b
    =
    \mathbf 1\{R_i^b\le L_i^b\},
    \qquad
    \bar D_i
    =
    \bar L_i^1-\bar L_i^0.
    \label{eq:app_binarized_loss_difference}
\end{align}
Then
\[
    \mathbb E[\bar L_i^b|L_i^b]=L_i^b,
    \qquad
    \bar D_i\in\{-1,0,1\}.
\]

\begin{theorem}[Sample-wise binarized loss-difference \(V_\alpha\) bound]
\label{th:samplewise_binarized_loss_difference_valpha}
Suppose \(0\le \ell(w,z)\le 1\). For any learning algorithm and any
\(\alpha\ge 1\),
\begin{align}
    \left|
        \overline{\mathrm{gen}}(\mu,P_{W|S})
    \right|
    \le
    \frac1n\sum_{i=1}^n
    \mathbb E_{\bar D_i}
    \left[
        |\bar D_i|
        \,
        \lambda_\alpha(\bar D_i;U_i)
    \right].
    \label{eq:samplewise_binarized_loss_difference_weighted}
\end{align}
In particular, since \(|\bar D_i|\le 1\),
\begin{align}
    \left|
        \overline{\mathrm{gen}}(\mu,P_{W|S})
    \right|
    \le
    \frac1n\sum_{i=1}^n
    V_\alpha(U_i;\bar D_i).
    \label{eq:samplewise_binarized_loss_difference_valpha}
\end{align}
More generally, if \(0\le \ell(w,z)\le M\), then binarizing \(L_i^b/M\) gives
\begin{align}
    \left|
        \overline{\mathrm{gen}}(\mu,P_{W|S})
    \right|
    \le
    \frac{M}{n}\sum_{i=1}^n
    V_\alpha(U_i;\bar D_i).
    \label{eq:samplewise_binarized_loss_difference_valpha_M}
\end{align}
\end{theorem}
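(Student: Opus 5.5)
The argument mirrors the proofs of Theorems~\ref{th:samplewise_loss_difference_valpha} and \ref{th:samplewise_binarized_single_loss_valpha}. First decompose the supersample gap coordinatewise, so that by the triangle inequality
\[
\left|\overline{\mathrm{gen}}(\mu,P_{W|S})\right|\le\frac1n\sum_{i=1}^n\left|\mathbb E[(1-2U_i)D_i]\right|,
\qquad D_i=L_i^1-L_i^0 .
\]
Next, replace $D_i$ by its binarized version. By construction, $(R_i^0,R_i^1)$ are independent of $(U,\tilde S)$ and of the algorithmic randomness conditional on $(L_i^0,L_i^1)$, and $\mathbb E[\bar L_i^b\mid L_i^0,L_i^1]=L_i^b$. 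Hence $\mathbb E[\bar D_i\mid U_i,L_i^0,L_i^1]=D_i$, and the tower property gives
\[
\mathbb E[(1-2U_i)D_i]=\mathbb E[(1-2U_i)\bar D_i].
\]
This uses the conditional independence of the binarization noise from $U_i$ given $(L_i^0,L_i^1)$, which is exactly the modeling assumption in \eqref{eq:binarized_defs}.

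Then apply Lemma~\ref{lem:master_holder} without a conditioning variable, taking $A=U_i$, $T=\bar D_i$, and $\phi(d,U_i)=(1-2U_i)d$ for $d\in\{-1,0,1\}$. Centering holds because $\mathbb E_{U_i}[1-2U_i]=0$ with $U_i\sim\mathrm{Bern}(1/2)$; the unconditional marginal of $U_i$ is uniform since $U$ is independent of $\tilde S$. Since $|1-2U_i|=1$, we have $\|\phi(d,\cdot)\|_{\alpha',U_i}=|d|$ for every $\alpha'\in[1,\infty]$. The weighted form \eqref{eq:master_holder_weighted} then bounds the $i$-th term by $\mathbb E_{\bar D_i}\bigl[|\bar D_i|\,\lambda_\alpha(\bar D_i;U_i)\bigr]$, with $\lambda_\alpha$ as in \eqref{eq:lambda_alpha_def}. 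Summing over $i$ and dividing by $n$ gives \eqref{eq:samplewise_binarized_loss_difference_weighted}.

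Bounding $|\bar D_i|\le1$ and using \eqref{eq:valpha_via_lambda} yields \eqref{eq:samplewise_binarized_loss_difference_valpha}. For losses in $[0,M]$, rescale to $\tilde L_i^b=L_i^b/M$. The gap scales linearly, so the bound acquires the factor $M$, giving \eqref{eq:samplewise_binarized_loss_difference_valpha_M}.

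The only delicate step is justifying the replacement of $D_i$ by $\bar D_i$. The key point is that $\bar D_i$ enters the Hölder step as a finite-state statistic whose law given $U_i$ is obtained by marginalizing over $\tilde S$, $U_{-i}$ and the randomness of both the algorithm and the binarization. This requires no absolute-continuity issues, since $\bar D_i$ takes only finitely many values, so the density ratios in $\lambda_\alpha$ are ordinary probability ratios.
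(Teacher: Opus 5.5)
Your proposal is correct and follows the paper's proof essentially step for step. Both arguments use the coordinatewise form $(1-2U_i)D_i$ of the gap, replace $D_i$ by $\bar D_i$ via unbiased binarization whose noise is conditionally independent of $U_i$, and apply Lemma~\ref{lem:master_holder} without conditioning with $A=U_i$, $T=\bar D_i$, $\phi(d,U_i)=(1-2U_i)d$, finishing with $|\bar D_i|\le 1$ and rescaling by $M$. You also spell out the triangle-inequality and tower-property steps, which the paper leaves implicit.
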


\begin{proof}
Let \(D_i=L_i^1-L_i^0\). The \(i\)-th supersample contribution is
\[
    L_i^{1-U_i}-L_i^{U_i}
    =
    (1-2U_i)D_i.
\]
By construction,
\[
    \mathbb E[\bar D_i|L_i^0,L_i^1]
    =
    \mathbb E[\bar L_i^1-\bar L_i^0|L_i^0,L_i^1]
    =
    L_i^1-L_i^0
    =
    D_i.
\]
Since the binarization noise is independent of \(U_i\) conditional on
\((L_i^0,L_i^1)\),
\[
    \mathbb E[(1-2U_i)D_i]
    =
    \mathbb E[(1-2U_i)\bar D_i].
\]
Apply Lemma~\ref{lem:master_holder} without conditioning, with
\[
    A=U_i,\qquad
    T=\bar D_i,\qquad
    \phi(d,U_i)=(1-2U_i)d.
\]
For fixed \(d\in\{-1,0,1\}\),
\[
    \mathbb E_{U_i}[(1-2U_i)d]=0,
    \qquad
    \|(1-2U_i)d\|_{\alpha',U_i}=|d|.
\]
This gives \eqref{eq:samplewise_binarized_loss_difference_weighted}. The
simplified bound \eqref{eq:samplewise_binarized_loss_difference_valpha} follows
from \(|\bar D_i|\le1\). The case \(0\le\ell\le M\) follows by applying the
same argument to \(L_i^b/M\) and multiplying by \(M\).
\end{proof}

Let
\[
    \bar P_i=(\bar L_i^0,\bar L_i^1),
    \qquad
    P_i=(L_i^0,L_i^1).
\]
Since \(\bar D_i=\bar L_i^1-\bar L_i^0\) is a deterministic function of
\(\bar P_i\), and \(\bar P_i\) is an independent post-processing of \(P_i\),
Lemma~\ref{lem:valpha_dpi} gives
\begin{align}
    V_\alpha(U_i;\bar D_i)
    \le
    V_\alpha(U_i;\bar P_i)
    \le
    V_\alpha(U_i;P_i).
    \label{eq:binarized_difference_dpi_chain}
\end{align}

\subsection{Binarized loss-pair bound}

Assume first that \(0\le \ell(w,z)\le 1\). For \(b\in\{0,1\}\), define
\[
    L_i^b=\ell(W,\tilde Z_{i,b}).
\]
Draw \(R_i^b\sim\mathrm{Unif}[0,1]\) independently conditional on
\((L_i^0,L_i^1)\), and define
\begin{align}
    \bar L_i^b
    =
    \mathbf 1\{R_i^b\le L_i^b\},
    \qquad
    \bar P_i
    =
    (\bar L_i^0,\bar L_i^1).
    \label{eq:app_binarized_loss_pair}
\end{align}
Then \(\bar P_i\in\{0,1\}^2\). For
\(\bar p=(a,b)\in\{0,1\}^2\), define
\begin{align}
    \bar\delta_i(\bar p,U_i)
    \coloneqq
    a\mathbf 1\{U_i=1\}
    +b\mathbf 1\{U_i=0\}
    -a\mathbf 1\{U_i=0\}
    -b\mathbf 1\{U_i=1\}.
    \label{eq:app_binarized_delta_pair}
\end{align}
Equivalently,
\[
    \bar\delta_i((a,b),U_i)=(1-2U_i)(b-a).
\]

\begin{theorem}[Sample-wise binarized loss-pair \(V_\alpha\) bound]
\label{th:samplewise_binarized_loss_pair_valpha}
Suppose \(0\le \ell(w,z)\le 1\). For any learning algorithm and any
\(\alpha\ge 1\),
\begin{align}
    \left|
        \overline{\mathrm{gen}}(\mu,P_{W|S})
    \right|
    \le
    \frac1n\sum_{i=1}^n
    \mathbb E_{\bar P_i}
    \left[
        \|\bar\delta_i(\bar P_i,U_i)\|_{\alpha',U_i}
        \,
        \lambda_\alpha(\bar P_i;U_i)
    \right].
    \label{eq:samplewise_binarized_loss_pair_weighted}
\end{align}
Since
\[
    \|\bar\delta_i(\bar P_i,U_i)\|_{\alpha',U_i}
    =
    |\bar L_i^1-\bar L_i^0|,
\]
this is equivalently
\begin{align}
    \left|
        \overline{\mathrm{gen}}(\mu,P_{W|S})
    \right|
    \le
    \frac1n\sum_{i=1}^n
    \mathbb E_{\bar P_i}
    \left[
        |\bar L_i^1-\bar L_i^0|
        \,
        \lambda_\alpha(\bar P_i;U_i)
    \right].
    \label{eq:samplewise_binarized_loss_pair_simplified}
\end{align}
In particular, since \(|\bar L_i^1-\bar L_i^0|\le1\),
\begin{align}
    \left|
        \overline{\mathrm{gen}}(\mu,P_{W|S})
    \right|
    \le
    \frac1n\sum_{i=1}^n
    V_\alpha(U_i;\bar P_i).
    \label{eq:samplewise_binarized_loss_pair_valpha}
\end{align}
More generally, if \(0\le \ell(w,z)\le M\), then binarizing \(L_i^b/M\) gives
\begin{align}
    \left|
        \overline{\mathrm{gen}}(\mu,P_{W|S})
    \right|
    \le
    \frac{M}{n}\sum_{i=1}^n
    V_\alpha(U_i;\bar P_i).
    \label{eq:samplewise_binarized_loss_pair_valpha_M}
\end{align}
\end{theorem}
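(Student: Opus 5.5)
The plan mirrors the proofs of Theorem~\ref{th:samplewise_loss_pair_valpha} and Theorem~\ref{th:samplewise_binarized_loss_difference_valpha}. There are three steps: reduce the generalization error to a sum of per-coordinate contributions, replace the continuous losses by their Bernoulli surrogates, and then apply Lemma~\ref{lem:master_holder} to the pair \((U_i,\bar P_i)\).

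\textbf{Decomposition.} First write \(\overline{\mathrm{gen}}=\mathbb E[\hat\Delta_{\tilde S}(W,U)]=\frac1n\sum_i\mathbb E[L_i^{1-U_i}-L_i^{U_i}]\). The \(i\)-th contribution equals \(\mathbb E[\delta_i(P_i,U_i)]\) with \(\delta_i((a,b),U_i)=(1-2U_i)(b-a)\). By the triangle inequality it then suffices to bound each \(|\mathbb E[\delta_i(P_i,U_i)]|\) separately.

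\textbf{Binarization step.} Since \(R_i^0,R_i^1\) are drawn independently of \((U,\tilde S)\) conditional on \(P_i\), and \(\mathbb E[\bar L_i^b\mid L_i^0,L_i^1]=L_i^b\), conditioning on \((U_i,P_i)\) gives
\[
\mathbb E\bigl[\bar\delta_i(\bar P_i,U_i)\mid U_i,P_i\bigr]=(1-2U_i)(L_i^1-L_i^0)=\delta_i(P_i,U_i).
\]
Hence \(\mathbb E[\delta_i(P_i,U_i)]=\mathbb E[\bar\delta_i(\bar P_i,U_i)]\). This is the one place where care is needed. Linearity of \(\bar\delta_i\) in \((a,b)\) for fixed \(U_i\) is what allows the conditional expectation to pass inside. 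The conditional independence of the binarization noise from \(U_i\) given \(P_i\) must be stated explicitly, as in the binarized loss-difference proof.

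\textbf{Decoupling.} Next apply Lemma~\ref{lem:master_holder} with no conditioning variable (or with a trivial \(C\)), taking \(A=U_i\sim\mathrm{Bern}(1/2)\), \(T=\bar P_i\) and \(\phi(\bar p,U_i)=\bar\delta_i(\bar p,U_i)\). Centering holds because \(\mathbb E_{U_i}[(1-2U_i)(b-a)]=0\) for every fixed \(\bar p=(a,b)\). The weighted form \eqref{eq:master_holder_weighted} then yields \eqref{eq:samplewise_binarized_loss_pair_weighted} after summing over \(i\) and dividing by \(n\).

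\textbf{Simplifications.} Because \(|1-2U_i|=1\), the norm is \(\|\bar\delta_i(\bar p,U_i)\|_{\alpha',U_i}=|b-a|\) for every \(\alpha'\in[1,\infty]\), including \(\alpha'=\infty\); this gives \eqref{eq:samplewise_binarized_loss_pair_simplified}. Bounding \(|\bar L_i^1-\bar L_i^0|\le1\) and using \eqref{eq:valpha_via_lambda} gives \eqref{eq:samplewise_binarized_loss_pair_valpha}. For losses in \([0,M]\), apply the whole argument to \(L_i^b/M\in[0,1]\). The generalization error scales linearly, so multiplying by \(M\) gives \eqref{eq:samplewise_binarized_loss_pair_valpha_M}.

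The main obstacle is only bookkeeping. The unconditional laws \(P_{\bar P_i\mid U_i}\) marginalize over \(\tilde S\), \(U_{-i}\), the algorithm and the binarization noise, so Lemma~\ref{lem:master_holder} must be applied with the joint law of \((U_i,\bar P_i)\) after all of these are integrated out. Since \(U_i\) remains \(\mathrm{Bern}(1/2)\) marginally, the centering condition is unaffected.
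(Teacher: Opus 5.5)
Your proposal is correct and follows essentially the same route as the paper. Like the paper, you replace the coordinate gap by $\mathbb E[\bar\delta_i(\bar P_i,U_i)]$ using the conditional independence of the binarization noise from $U_i$ given $(L_i^0,L_i^1)$, apply the unconditional weighted form of Lemma~\ref{lem:master_holder} with $A=U_i$, $T=\bar P_i$, and simplify the norm via $|1-2U_i|=1$; your explicit triangle-inequality decomposition over coordinates, and the note that $U_i$ stays $\mathrm{Bern}(1/2)$ after marginalizing over $\tilde S$ and $U_{-i}$, spell out steps the paper leaves implicit.
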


\begin{proof}
By construction,
\[
    \mathbb E[\bar L_i^b|L_i^b]=L_i^b,
    \qquad b\in\{0,1\}.
\]
Hence
\[
    \mathbb E[\bar L_i^1-\bar L_i^0|L_i^0,L_i^1]
    =
    L_i^1-L_i^0.
\]
Since the binarization noise is independent of \(U_i\) conditional on
\((L_i^0,L_i^1)\),
\[
    \mathbb E[L_i^{1-U_i}-L_i^{U_i}]
    =
    \mathbb E[\bar\delta_i(\bar P_i,U_i)].
\]
Apply Lemma~\ref{lem:master_holder} without conditioning, with
\[
    A=U_i,\qquad
    T=\bar P_i,\qquad
    \phi(\bar p,U_i)=\bar\delta_i(\bar p,U_i).
\]
For fixed \(\bar p=(a,b)\),
\[
    \mathbb E_{U_i}[\bar\delta_i(\bar p,U_i)]
    =
    \frac12(b-a)+\frac12(a-b)
    =
    0.
\]
Therefore Lemma~\ref{lem:master_holder} gives
\eqref{eq:samplewise_binarized_loss_pair_weighted}. Since
\[
    |\bar\delta_i((a,b),0)|
    =
    |\bar\delta_i((a,b),1)|
    =
    |b-a|,
\]
we have
\[
    \|\bar\delta_i(\bar P_i,U_i)\|_{\alpha',U_i}
    =
    |\bar L_i^1-\bar L_i^0|,
\]
which gives \eqref{eq:samplewise_binarized_loss_pair_simplified}. The simplified
finite-state bound \eqref{eq:samplewise_binarized_loss_pair_valpha} follows
from \(|\bar L_i^1-\bar L_i^0|\le1\). For losses in \([0,M]\), apply the same
argument to \(L_i^b/M\) and multiply the resulting bound by \(M\).
\end{proof}

Since \(\bar D_i=\bar L_i^1-\bar L_i^0\) is a deterministic function of
\(\bar P_i\), and \(\bar P_i\) is an independent post-processing of
\(P_i=(L_i^0,L_i^1)\), Lemma~\ref{lem:valpha_dpi} gives
\begin{align}
    V_\alpha(U_i;\bar D_i)
    \le
    V_\alpha(U_i;\bar P_i)
    \le
    V_\alpha(U_i;P_i).
    \label{eq:binarized_pair_dpi_chain}
\end{align}


\begin{revision}
\subsection{Conditional coordinatewise versions used in the experiments}
\label{sec:conditional_samplewise_experimental}

\begin{lemma}[Fair-binary-selector $V_\alpha$ invariance]
\label{lem:fair_binary_valpha}
Let $U_i\sim\mathrm{Bern}(1/2)$ be independent of $C$, and let
$p_u(t\mid c)$ be the conditional density or mass of $T$ given $U_i=u,C=c$.
For every $\alpha\ge1$,
\begin{align}
 \lambda_\alpha(t;U_i\mid C=c)
 &=\frac{|p_0(t\mid c)-p_1(t\mid c)|}{p_0(t\mid c)+p_1(t\mid c)},
 \label{eq:fair_binary_local_lambda}\\
 V_\alpha(U_i;T\mid C=c)
 &=\frac12\int |p_0(t\mid c)-p_1(t\mid c)|\,dt.
 \label{eq:fair_binary_valpha}
\end{align}
The local factor is defined to be zero where both densities vanish; the
integral is a sum for discrete $T$. Multiplying its integrand by any
nonnegative statistic weight $w(t)$ gives
$\mathbb E_{T\mid c}[w(T)\lambda_\alpha(T;U_i\mid c)]$, which is also
independent of $\alpha$.
\end{lemma}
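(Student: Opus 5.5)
The plan is to compute the local factor directly from the definition \eqref{eq:lambda_alpha_def}. We then integrate against the conditional marginal of $T$ as in \eqref{eq:valpha_via_lambda}.

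Fix $c$. Since $U_i\sim\mathrm{Bern}(1/2)$ is independent of $C$, the conditional law of $U_i$ given $C=c$ is uniform on $\{0,1\}$. Hence the conditional marginal density of $T$ is
\[
p(t\mid c)=\tfrac12\bigl(p_0(t\mid c)+p_1(t\mid c)\bigr).
\]
At any $t$ where $p_0+p_1>0$, the two centered likelihood ratios are
\[
\frac{p_0}{p}-1=\frac{p_0-p_1}{p_0+p_1},\qquad \frac{p_1}{p}-1=\frac{p_1-p_0}{p_0+p_1}.
\]
These have equal absolute value. The $L_\alpha(P_{U_i})$ norm of a random variable with constant modulus equals that modulus for every $\alpha\ge1$, since
\[
\bigl(\tfrac12 x^\alpha+\tfrac12 x^\alpha\bigr)^{1/\alpha}=x .
\]
This gives \eqref{eq:fair_binary_local_lambda}. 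Where $p_0+p_1=0$, the point carries no $P_{T\mid c}$-mass, so the stated convention $\lambda_\alpha=0$ is harmless there.

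Next, we integrate the local factor against $p(t\mid c)\,dt$. Multiplying $\frac{|p_0-p_1|}{p_0+p_1}$ by $\frac12(p_0+p_1)$ cancels the denominator and yields $\frac12|p_0-p_1|$. Integrating over $t$ gives \eqref{eq:fair_binary_valpha}, with the integral read as a sum for discrete $T$.

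The weighted statement follows from the same cancellation:
\[
\mathbb E_{T\mid c}\bigl[w(T)\lambda_\alpha(T;U_i\mid c)\bigr]=\tfrac12\int w(t)\,|p_0-p_1|\,dt .
\]
No $\alpha$ appears in this expression, so it is also independent of $\alpha$.

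The only delicate point is measure-theoretic. We should interpret $p_0,p_1$ as densities with respect to a common dominating measure; $P_{T\mid c}$ itself is such a measure. This guarantees that the likelihood ratios are Radon--Nikodym derivatives, consistent with the convention of Appendix~\ref{app:scalable_bounds}. Under that convention the computation is pointwise and exact, so I do not expect a genuine obstacle.
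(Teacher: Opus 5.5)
Your proposal is correct and follows the paper's proof step for step. The paper also writes the marginal as $(p_0+p_1)/2$ and notes that the two centered ratios are $\pm(p_0-p_1)/(p_0+p_1)$, so their $L_\alpha$ norm under the fair selector is their common modulus; it then integrates against $p(t\mid c)$ to cancel the denominator, which gives both identities and the weighted statement, while your remarks on the dominating measure and the null set $\{p_0+p_1=0\}$ are valid details the paper leaves implicit.
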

\begin{proof}
The marginal is $p(t\mid c)=(p_0+p_1)/2$. The two centered density ratios
are $(p_0-p_1)/(p_0+p_1)$ and its negative, so their $L_\alpha$ norm under
the fair selector is the absolute value of either one. Integrating against
$p(t\mid c)$ proves both identities and the weighted statement.
\end{proof}

\begin{corollary}[Conditional variants of T9 and T11--T13]
\label{cor:conditional_samplewise_valpha}
Under the supersample protocol, let $0\le\ell\le M$ and let $c_i$ be fixed
constants. The conditional single-loss and binarized bounds used in the
experiments are
\begin{align}
 |\overline{\mathrm{gen}}|
 &\le \frac1n\sum_i\mathbb E_{\tilde S}\sum_{b=0}^1
 \mathbb E_{L_i^b\mid\tilde S}
 [|L_i^b-c_i|\lambda_\alpha(L_i^b;U_i\mid\tilde S)],
 &&\text{(T9)}\label{eq:t9_conditional_weighted}\\
 |\overline{\mathrm{gen}}|
 &\le \frac{M}{2n}\sum_i\mathbb E_{\tilde S}
 \sum_{b=0}^1 V_\alpha(U_i;\bar L_i^b\mid\tilde S),
 &&\text{(T11)}\label{eq:t11_conditional_bounded}\\
 |\overline{\mathrm{gen}}|
 &\le \frac{M}{n}\sum_i\mathbb E_{\tilde S}
 \mathbb E_{\bar D_i\mid\tilde S}
 [|\bar D_i|\lambda_\alpha(\bar D_i;U_i\mid\tilde S)],
 &&\text{(T12)}\label{eq:t12_conditional_weighted}\\
 |\overline{\mathrm{gen}}|
 &\le \frac{M}{n}\sum_i\mathbb E_{\tilde S}
 \mathbb E_{\bar P_i\mid\tilde S}
 [|\bar L_i^1-\bar L_i^0|\lambda_\alpha(\bar P_i;U_i\mid\tilde S)],
 &&\text{(T13)}\label{eq:t13_conditional_weighted}
\end{align}
Here $\bar L_i^b\mid L_i^b\sim\mathrm{Bern}(L_i^b/M)$, with independent
binarization noise, $\bar D_i=\bar L_i^1-\bar L_i^0$, and
$\bar P_i=(\bar L_i^0,\bar L_i^1)$. Taking $c_i=M/2$ and replacing
$|L_i^b-c_i|$ by $M/2$ yields the bounded T9 form.
T8 and T10 already have conditional forms in
Theorems~\ref{th:samplewise_loss_difference_valpha} and
\ref{th:samplewise_loss_pair_valpha}.
\end{corollary}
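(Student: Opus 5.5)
The plan is to derive all four displays from Lemma~\ref{lem:master_holder}, always with conditioning variable $C=\tilde S$ and with $A=U_i$ a fair bit. The coordinatewise decomposition
\[
\overline{\mathrm{gen}}=\frac1n\sum_i \mathbb E[(1-2U_i)D_i],\qquad D_i=L_i^1-L_i^0,
\]
lets us bound each coordinate separately by the triangle inequality. For every coordinate we must check two things. First, the chosen test function is centered in $U_i$ for each fixed value of the statistic. Second, its $L_{\alpha'}(U_i)$ norm is what appears as the weight. Laws such as $P_{T\mid U_i,\tilde S}$ are taken after marginalizing $U_{-i}$, $W$, and the binarization noise. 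Since $U_i$ is independent of $\tilde S$, $P_{U_i\mid\tilde S}$ is $\mathrm{Bern}(1/2)$, and Lemma~\ref{lem:fair_binary_valpha} shows the resulting quantities are $\alpha$-free. This plays no role in validity.

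\textbf{T12 and T13.} Condition on $\tilde S$ and note that the binarization noise is independent of $U_i$ given $(L_i^0,L_i^1,\tilde S)$. With $\mathbb E[\bar L_i^b\mid L_i^b]=L_i^b/M$ this gives
\[
\mathbb E[(1-2U_i)D_i\mid\tilde S]=M\,\mathbb E[(1-2U_i)\bar D_i\mid\tilde S].
\]
Apply the weighted form \eqref{eq:master_holder_weighted} with $T=\bar D_i$ and $\phi(d,u)=(1-2u)d$. This function is centered and has norm $|d|$, which yields T12. T13 is identical with $T=\bar P_i$ and $\phi=\bar\delta_i$, whose norm is $|\bar L_i^1-\bar L_i^0|$. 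Both arguments repeat the proofs of Theorems~\ref{th:samplewise_binarized_loss_difference_valpha} and \ref{th:samplewise_binarized_loss_pair_valpha} with $C=\tilde S$ added.

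\textbf{T9 and T11.} These need a conditional single-loss identity. Here we cannot use the exchangeability swap from Theorem~\ref{th:samplewise_single_loss_valpha}, because $\tilde S$ is fixed. Instead, write
\[
L_i^{1-U_i}-L_i^{U_i}=(1-2U_i)(L_i^1-L_i^0)=(1-2U_i)(L_i^1-c_i)-(1-2U_i)(L_i^0-c_i),
\]
which holds for any constant $c_i$. Each of the two terms is handled by Lemma~\ref{lem:master_holder} with $T=L_i^b$ and $\phi_b(\ell,u)=\pm(1-2u)(\ell-c_i)$. These are centered in $u$ and have norm $|\ell-c_i|$. Summing over $b$ gives T9, after which the bounded form follows from $|L_i^b-M/2|\le M/2$.

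For T11, replace $L_i^b$ by its binarization using the conditional-independence argument above. Take $c=1/2$ in binarized units, so that $|\bar\ell-1/2|=1/2$. Multiplying back by $M$ then gives weight $M/2$ per side, and hence the factor $\frac{M}{2n}\sum_b V_\alpha(U_i;\bar L_i^b\mid\tilde S)$.

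\textbf{Expected obstacle.} The delicate step is the conditional-independence bookkeeping. The identity $\mathbb E[(1-2U_i)f(\bar L)\mid\tilde S]=\mathbb E[(1-2U_i)\mathbb E[f(\bar L)\mid L,\tilde S]\mid\tilde S]$ requires the binarization noise to be independent of $(U,\tilde S,W)$ given the losses. With that in place, the centering hypothesis \eqref{eq:master_centering} must hold for every $t$ under $P_{U_i\mid\tilde S}$, which is fair. We would state this explicitly before applying the lemma.
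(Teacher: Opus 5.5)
Your proposal is correct and follows essentially the same route as the paper. For T9 and T11 it uses the two-sided decomposition $(1-2U_i)(L_i^1-c_i)-(1-2U_i)(L_i^0-c_i)$ at fixed $\tilde S$, with Lemma~\ref{lem:master_holder} applied to each side; for T12 and T13 it uses the conditional-independence replacement $\mathbb E[(1-2U_i)D_i\mid\tilde S]=M\,\mathbb E[(1-2U_i)\bar D_i\mid\tilde S]$ followed by the weighted lemma, which is exactly the paper's argument (including your correct observation that the exchangeability swap is unavailable once $\tilde S$ is fixed).
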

\begin{proof}
At fixed $\tilde S=s$, decompose the expected coordinate gap as
$\mathbb E[(2U_i-1)(L_i^0-c_i)\mid s]
 +\mathbb E[(1-2U_i)(L_i^1-c_i)\mid s]$.
Apply Lemma~\ref{lem:master_holder} separately to each term with $A=U_i$,
$C=\tilde S$, and $T=L_i^b$. The centered sign has absolute value one,
so its loss-weighted $L_{\alpha'}$ norm is $|L_i^b-c_i|$.
The triangle inequality and averaging over $s,i$ give T9.
Independent binarization preserves the expected gap after scaling by $M$.
The same two-sided argument with center $1/2$ gives T11; applying the
lemma to $M(1-2U_i)\bar D_i$ or
$M(1-2U_i)(\bar L_i^1-\bar L_i^0)$ gives T12 and T13.
\end{proof}
\end{revision}

\section{Experimental Details and Methodology}
\label{app:simulation}

\rev{This appendix documents the conditional, coordinatewise plug-in
estimators in Figures~\ref{fig:ratio_to_error}--\ref{fig:mnist_noise_ratio}.
We augment the BinaryJS pipeline of \citet{dong2025exactly} with the
sample-wise and binarized $V_\alpha$ quantities in
Theorems~\ref{th:samplewise_loss_difference_valpha}--\ref{th:samplewise_binarized_loss_pair_valpha}
and the conditional variants in
Corollary~\ref{cor:conditional_samplewise_valpha}. All methods use the same
supersamples, selector masks, trained models, and loss arrays.}

\paragraph{Supersample protocol.}
For each sample size
\[
    n\in\{75,250,1000,4000\},
\]
we draw a supersample
\[
    \tilde S=\{(\tilde Z_{i,0},\tilde Z_{i,1})\}_{i=1}^n.
\]
A selector vector
\[
    U=(U_1,\dots,U_n)\in\{0,1\}^n
\]
selects the training sample
\[
    S_U=\{\tilde Z_{i,U_i}\}_{i=1}^n,
\]
and the complementary examples
\[
    S_{\bar U}=\{\tilde Z_{i,1-U_i}\}_{i=1}^n
\]
form the ghost sample. For each \(n\), we use \(5\) independent supersample
seeds. For each fixed supersample seed, we use \(30\) independent selector
seeds. Thus, each fixed supersample yields \(30\) independently trained models,
one for each selector vector.
\paragraph{Dataset and preprocessing.}
The experiment uses the MNIST handwritten digit dataset~\citep{lecun1998mnist},
restricted to the digits \(4\) and \(9\) for classification task with
\(\texttt{num\_classes}=2\). After filtering MNIST to labels \(4\) and \(9\),
the dataset wrapper maps the labels to the binary labels used by the experiment.
Inputs are transformed using \texttt{ToTensor()} and normalized with mean
\(0.456\) and standard deviation \(0.224\). No data augmentation, ImageNet
resizing, or label noise is used.

\paragraph{Model architecture.}
The learning algorithm is a convolutional neural network trained on \(S_U\).
The CNN has four convolutional layers followed by two fully connected layers:
\[
\begin{aligned}
&\mathrm{Conv}(1,32,k=4,s=2,p=1) + \rev{\mathrm{BatchNorm} + \mathrm{ReLU}},\\
&\mathrm{Conv}(32,32,k=4,s=2,p=1) + \rev{\mathrm{BatchNorm} + \mathrm{ReLU}},\\
&\mathrm{Conv}(32,64,k=3,s=2,p=0) + \rev{\mathrm{BatchNorm} + \mathrm{ReLU}},\\
&\mathrm{Conv}(64,256,k=3,s=1,p=0) + \rev{\mathrm{BatchNorm} + \mathrm{ReLU}},\\
&\mathrm{Flatten},\quad
\mathrm{FC}(256,128) + \mathrm{ReLU},\quad
\mathrm{FC}(128,2).
\end{aligned}
\]
Here \(\mathrm{Conv}(c_{\mathrm{in}},c_{\mathrm{out}},k,s,p)\) denotes a
convolutional layer with \(c_{\mathrm{in}}\) input channels,
\(c_{\mathrm{out}}\) output channels, kernel size \(k\), stride \(s\), and
padding \(p\).

\paragraph{Optimization and determinism.}
The optimizer is Adam with fixed learning rate \(10^{-3}\), batch size \(128\),
weight decay \(0\), and no learning-rate schedule. Models are trained for
\(200\) epochs using standard, unclipped cross-entropy. We evaluate the final
checkpoint, corresponding to \(\texttt{epoch199.mdl}\). 
Deterministic mode is enabled, with zero data-loading workers and fixed training
seed \(42\).

\paragraph{Losses and empirical generalization gap.}
For each trained model \(W\), we evaluate predictions on the entire supersample
of \(2n\) points. From these predictions we compute both the \(0\)-\(1\) loss
and the clipped cross-entropy loss\rev{, defined for evaluation as
$\ell_{\mathrm{CE}}(w,z)=\min\{-\log p_w(y\mid x),1\}$, so that $M=1$
for both losses}. For each pair \(i\), define
\[
    L_i^b=\ell(W,\tilde Z_{i,b}),
    \qquad b\in\{0,1\}.
\]
The empirical training and ghost losses are
\[
    L_{\mathrm{train}}
    =
    \frac1n\sum_{i=1}^n L_i^{U_i},
    \qquad
    L_{\mathrm{ghost}}
    =
    \frac1n\sum_{i=1}^n L_i^{1-U_i}.
\]
The empirical selected-versus-ghost gap is
\[
    \widehat{\mathrm{gap}}
    =
    L_{\mathrm{ghost}}-L_{\mathrm{train}}
    =
    \frac1n\sum_{i=1}^n
    \left(
        L_i^{1-U_i}-L_i^{U_i}
    \right).
\]
\rev{For each fixed supersample seed $s$, we average the selected-versus-ghost
gap over its selector runs to obtain $\widehat{\mathrm{gap}}_s$, and denote
the coordinatewise-averaged increment by $\widehat B_s$. The figures plot
the mean of $\widehat B_s/|\widehat{\mathrm{gap}}_s|$ over supersample seeds,
with SEM across those seeds. Ratio $1$ denotes agreement with the observed
gap; ratios require a nonzero denominator.}

\paragraph{Bounds included in the main experiment.}
\rev{For each fixed supersample, we estimate a distinct conditional table at
each coordinate from its $30$ MNIST (or $40$ CIFAR-10) selector runs.
We evaluate each contribution before averaging across coordinates; pooling
coordinates before evaluating $V_\alpha$ would target a different quantity.
The six plotted curves use the bounded form for T8, the weighted forms for
T9, T10, T12, and T13, and the two-sided binarized form for T11.}

\begin{itemize}
    \item \textbf{Theorem~\ref{th:samplewise_loss_difference_valpha}:}
    the sample-wise loss-difference bound, based on
    \[
        D_i=L_i^1-L_i^0.
    \]
    This is denoted as T8 in the figure.

    \item \textbf{Theorem~\ref{th:samplewise_single_loss_valpha}:}
    \rev{the conditional two-sided single-loss bound in
    Corollary~\ref{cor:conditional_samplewise_valpha}, using both $L_i^0$
    and $L_i^1$ with center $c_i=1/2$. This is denoted T9.}

    \item \textbf{Theorem~\ref{th:samplewise_loss_pair_valpha}:}
    the sample-wise loss-pair bound, based on
    \[
        P_i=(L_i^0,L_i^1).
    \]
    This is denoted as T10.

    \item \textbf{Theorem~\ref{th:samplewise_binarized_single_loss_valpha}:}
    the sample-wise binarized single-loss bound. For losses in \([0,1]\), we
    define
    \[
        \bar L_i^b\mid L_i^b\sim\mathrm{Bern}(L_i^b).
    \]
    \rev{The two-sided conditional form in
    Corollary~\ref{cor:conditional_samplewise_valpha} is denoted T11.}

    \item \textbf{Theorem~\ref{th:samplewise_binarized_loss_difference_valpha}:}
    the sample-wise binarized loss-difference bound, based on
    \[
        \bar D_i=\bar L_i^1-\bar L_i^0\in\{-1,0,1\}.
    \]
    This is denoted as T12.

    \item \textbf{Theorem~\ref{th:samplewise_binarized_loss_pair_valpha}:}
    the sample-wise binarized loss-pair bound, based on
    \[
        \bar P_i=(\bar L_i^0,\bar L_i^1)\in\{0,1\}^2.
    \]
    This is denoted as T13.
\end{itemize}


\paragraph{Empirical \(V_\alpha\) estimation.}
\begin{revision}
For fixed supersample $s$ and coordinate $i$, we estimate
$\hat p_i(t\mid u,s)$ by relative frequencies among selector runs with
$U_i=u$. Both selector values must be observed. We use the known prior
$P(U_i=u\mid s)=1/2$, rather than its empirical frequency, and set
$\hat p_i(t\mid s)=\frac12[\hat p_i(t\mid0,s)+\hat p_i(t\mid1,s)]$.
By Lemma~\ref{lem:fair_binary_valpha}, the local factor and estimate are
\[
 \hat\lambda_{\alpha,i}(t\mid s)
 =\frac{|\hat p_i(t\mid0,s)-\hat p_i(t\mid1,s)|}
        {\hat p_i(t\mid0,s)+\hat p_i(t\mid1,s)},
 \qquad
 \widehat V_{\alpha,i}(s)
 =\sum_t\hat p_i(t\mid s)\hat\lambda_{\alpha,i}(t\mid s).
\]
The factor is set to zero on empty states. For a statistic weight $w_i$,
the weighted contribution is
$\sum_t\hat p_i(t\mid s)w_i(t)\hat\lambda_{\alpha,i}(t\mid s)$.
For binned continuous losses, the code averages the original unbinned
magnitude within each bin and selector group, with weight $1/2$ for each
selector group. These are plug-in approximations to the unbinned weighted
quantities; the finite-state binarized variants avoid this discretization.
\end{revision}

\paragraph{Discretization.}
For \(0\)-\(1\) loss, the relevant variables are finite-valued and exact support
coding is used. For continuous clipped cross-entropy quantities, we use
histogram coding with \(30\) bins by default. For scalar quantities, we bin the
scalar values directly. For loss pairs, we apply row coding after binning the
two coordinates. If the empirical support is small, exact support coding is used
automatically.

The binarized bounds do not require Monte Carlo binarization. Instead, we use
the exact expected finite-state joint tables. For Theorem~\ref{th:samplewise_binarized_single_loss_valpha},
we form the expected \(2\times 2\) table for
\((U_i,\bar L_i^b)\). For Theorem~\ref{th:samplewise_binarized_loss_difference_valpha},
we form the expected \(2\times 3\) table for
\((U_i,\bar D_i)\). For Theorem~\ref{th:samplewise_binarized_loss_pair_valpha},
we form the expected \(2\times 4\) table for
\((U_i,\bar P_i)\). This makes the binarized estimators deterministic given the
loss arrays and selectors.

\paragraph{Symmetrization for single-loss bounds.}
\rev{The unconditional one-sided T9/T11 theorems use exchangeability after
averaging over the supersample. At a fixed supersample, we instead use the
conditional two-sided formulas in
Corollary~\ref{cor:conditional_samplewise_valpha}. The implemented estimate
is the half-sum of the two single-side estimates (each carrying the
one-sided theorem's factor of two). This gives the sum of the two weighted
conditional contributions for T9, and $M/2$ times the sum of the two
conditional $V_\alpha$ terms for T11.}

\paragraph{Choice of \(\alpha\).}
\rev{The implementation evaluates
$\alpha\in\{1.005,1.1,1.25,1.5,2.0,3.0,5.0\}$.
For the conditional coordinatewise estimators with the known fair
selector prior, Lemma~\ref{lem:fair_binary_valpha} implies that
all entries give the same value, including for the
magnitude-weighted expressions. We therefore report $\alpha=1.005$
as a fixed representative.
This invariance is specific to the fair binary-selector setting
and does not imply $\alpha$-invariance of the parameter-space or
full-vector quantities studied in the BLR experiments.}

\paragraph{Interpretation of the ratio figure.}
\rev{For $0$--$1$ loss, T9--T13 remain close to ratio $1$, whereas
the bounded T8 curve is more conservative. For clipped cross-entropy,
T8 and T9 yield larger ratios than T10--T13. T8 replaces the local
loss-difference magnitude $|D_i|$ by the worst-case bound $M=1$.
T9 controls the two centered single-loss contributions separately,
which can lose cancellation between the paired losses. In contrast,
T10 retains the paired magnitude $|L_i^1-L_i^0|$, and the binarized
variants use their finite-state constructions. T10--T13 remain close
to the observed gap across the displayed sample sizes and settings.}

\begin{revision}
\subsection{Additional deep-learning experiments}
\label{app:additional_deep_experiments}

\paragraph{MNIST with random label noise.}
We repeat the clean MNIST $4$-vs.-$9$ protocol after independently flipping
the binary labels of both the selected and ghost examples with probability
$\delta\in\{0.05,0.10,0.15\}$, following the noisy settings considered by
\citet{dong2025exactly}. Labels are fixed within each supersample across
selector runs. Figure~\ref{fig:mnist_noise_ratio} shows T9--T13 near ratio
$1$ for $0$--$1$ loss, while T8 has ratios of roughly $1.6$--$1.9$.
For clipped cross-entropy, T10--T13 remain close to the gap at all three
noise levels, while T8 and T9 are more conservative; T9 is not uniformly
tighter than T8, as seen at $\delta=0.05$ and $n=4000$.

\begin{figure}[t]
    \centering
    \includegraphics[width=0.9\linewidth]{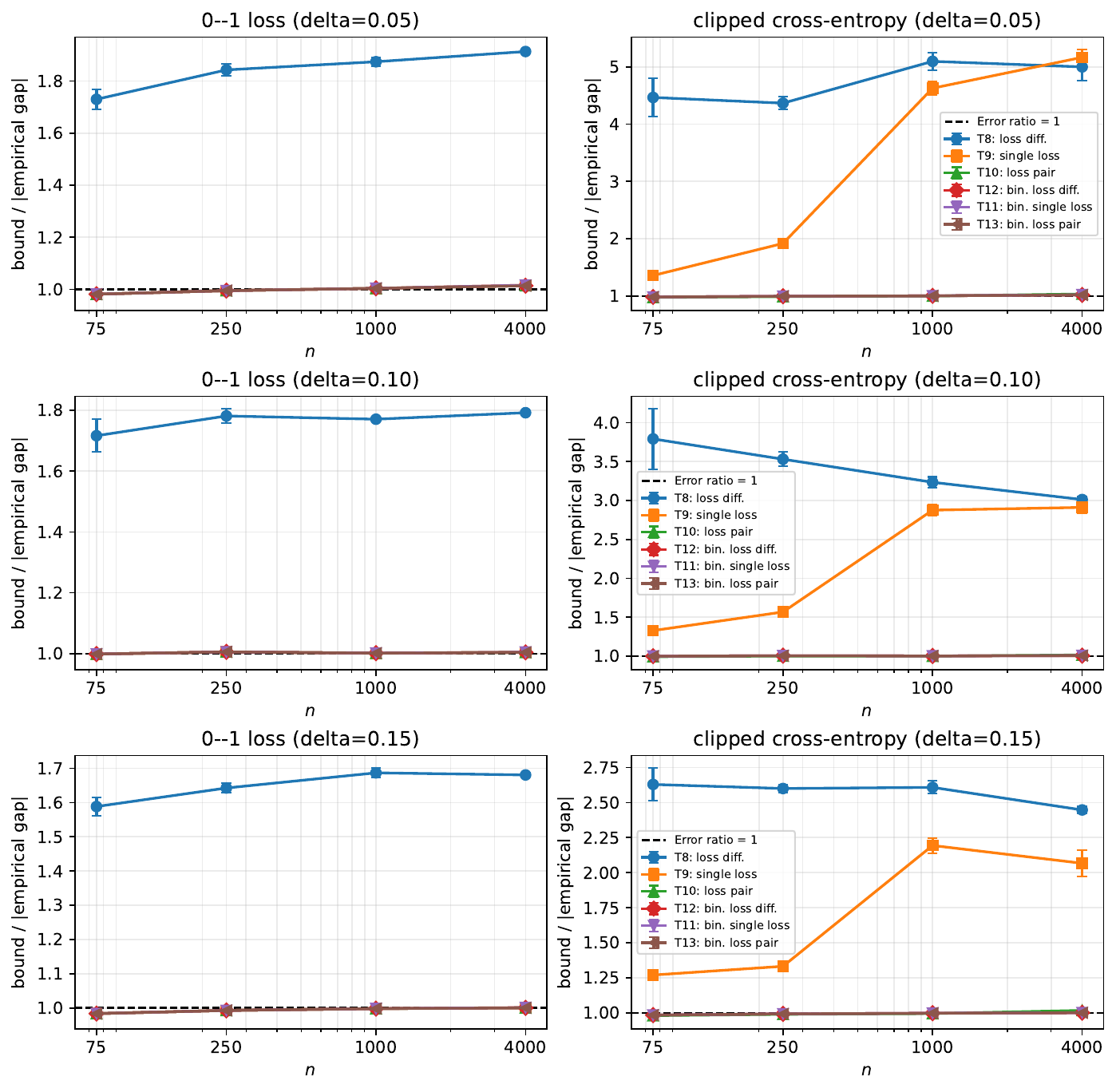}
    \caption{\rev{Additional MNIST $4$-vs.-$9$ results under random label noise.
    Each panel reports the bound-to-empirical-gap ratio for $0$--$1$ loss
    (left) and clipped cross-entropy (right). Rows correspond to
    $\delta=0.05,0.10,0.15$ from top to bottom. Error bars show SEM over
    five supersample seeds.}}
    \label{fig:mnist_noise_ratio}
\end{figure}

\paragraph{CIFAR-10.}
For Figure~\ref{fig:cifar10_resnet_ratio}, we use torchvision ResNet-50
with ImageNet-1K V1 pretrained weights, replace the final layer by a
$10$-class head, and fine-tune all parameters. For each
$n\in\{1000,5000,20000\}$ we use supersample seeds $0,1$ and selector
seeds $0,\ldots,39$. A seed-dependent $80/20$ split of the CIFAR-10 training
set precedes sampling $2n$ points without replacement from the training
partition. Training uses $40$ epochs of SGD with constant learning rate
$0.01$, momentum $0.9$, batch size $64$, weight decay $0$, and
unclipped cross-entropy. We evaluate the final-epoch models.
Training augmentation consists of random horizontal flips and random
$32\times32$ crops with padding $4$. Inputs are converted to tensors,
normalized using the standard ImageNet channel statistics, and then
bilinearly resized to $224\times224$.
All bound and gap evaluations use the same fixed supersample images and
labels with deterministic preprocessing and no random augmentation. The
six estimators, clipping rule, and representative $\alpha$ are as above.
\end{revision}
\section{Additional Results on Bayesian Linear Regression}
\label{app:simulation_blr}

This appendix documents the BLR experiment used
to compare the unconditional parameter-space bounds and the
supersample-based conditional bounds from
Section~\ref{sec:scaling_high_dim}. The discussion here is specific to the
moderate-dimensional run with $d \in \{1,2,5,10\}$ and fixed sample size
$n=100$, and Figures~\ref{fig:blr_extended_conditional} and
\ref{fig:blr_extended_conditional_alpha} report the resulting bounds and the
selected $\alpha^\star$ values.

\subsection{Common BLR Setup}
In this BLR experiment, the data are generated from
\[
    y_i = \mathbf{x}_i^\top \mathbf{w}_{\mathrm{true}} + \epsilon_i,
    \qquad
    \mathbf{x}_i \sim \mathcal{N}(0, I_d),
    \qquad
    \epsilon_i \sim \mathcal{N}(0, \sigma_{\mathrm{noise}}^2),
\]
with $\mathbf{w}_{\mathrm{true}} = 0.8\,\mathbf{1}_d$,
$\sigma_{\mathrm{noise}}^2 = 0.16$, and Gaussian prior
$p(\mathbf{w}) = \mathcal{N}(0, \lambda^{-1} I_d)$ with $\lambda = 2.0$.
Conjugacy gives the posterior
$p(\mathbf{w}\mid S)=\mathcal{N}(\boldsymbol{\mu}_n,\boldsymbol{\Sigma}_n)$,
where
\begin{align}
    \boldsymbol{\Sigma}_n
    &=
    \left(\lambda I_d + \beta \mathbf{X}^\top \mathbf{X}\right)^{-1},
    &
    \boldsymbol{\mu}_n
    &=
    \beta \boldsymbol{\Sigma}_n \mathbf{X}^\top \mathbf{y},
\end{align}
and $\beta = 1/\sigma_{\mathrm{noise}}^2$.

All bounds use the same clipped squared loss
\[
    \ell(\mathbf{w}, z)
    =
    \min\!\left((y-\mathbf{w}^\top \mathbf{x})^2, c\right),
    \qquad c = 4.0.
\]
Since $\ell \in [0,c]$, Hoeffding's lemma gives the loss-scale
\begin{align}
    \sigma_{\mathrm{loss}}^2 = \frac{c^2}{4} = 4.0,
\end{align}
The same clipping level is used throughout, with each bound using its
corresponding theoretical loss prefactor. 
The BLR
implementation enforces the following fairness rules throughout:
\begin{itemize}
    \item MI, CMI, and $V_\alpha$ are evaluated on matched draws from the same
    data-generating process.
    \item The Bayesian prior is not reused as a surrogate for the marginal
    law $p_W$; unconditional marginals are estimated by independent posterior
    mixtures.
    \item In the supersample experiments, the mixture used to estimate
    $p(W\mid \tilde S)$ is built from selections of $U$ that are independent of
    the selections used to evaluate the conditional expectations.
\end{itemize}

\subsection{Monte Carlo Estimators}
\paragraph{Empirical generalization.}
For each sampled dataset, we draw one posterior sample
$\mathbf{W} \sim p(\mathbf{w}\mid S)$, compute the empirical clipped loss on the
training sample, and compare it against a fixed holdout estimate of the
population loss based on $10{,}000$ fresh examples. Reported empirical gaps are
averages over independent datasets, with standard errors computed from the same
replicates.

\paragraph{MI and CMI.}
The MI curve uses the classical bound of \citet{xu2017information},
$\sqrt{2\sigma_{\mathrm{loss}}^2 I(W;S)/n}$, where
$I(W;S)$ is estimated by Monte Carlo from
$\log p(W\mid S)-\log p_W(W)$ and $p_W$ is an independent posterior mixture.
The CMI curve uses the supersample construction of
\citet{steinke2020}:
\[
c\sqrt{\frac{2}{n} I(W;U\mid \tilde S)}.
\]
For each supersample $\tilde S$, the conditional marginal
$p(W\mid \tilde S)=\mathbb{E}_U[p(W\mid S_U)]$ is approximated by a Gaussian
mixture over independent selector draws, and the nested expectations are
evaluated with log-sum-exp stabilization.

\paragraph{Unconditional parameter-space $V_\alpha$.}
In the present experiment, all unconditional parameter-space curves are
evaluated on the grid $d \in \{1,2,5,10\}$ at fixed $n=100$. The
parameter-space profile $V_\alpha(S;W)$ is estimated by posterior-mixture Monte
Carlo for every tested $\alpha$, and the $\alpha=2$ slice is used for the fixed
$V_2$ comparison. The reported parameter-space curves are:
\begin{itemize}
    \item Parameter fixed $V_2$ ($\alpha=2$) (Theorem~\ref{th:variance_bound}),
    \item Parameter Cor.~2 $V_\alpha$
    (Corollary~\ref{cor:subgaussian}), and
    \item Parameter finite-candidate approx.\ $V_\alpha$
    (Theorem~\ref{th-main}), included only as a diagnostic.
\end{itemize}

\paragraph{Conditional $V_\alpha(U;W\mid \tilde S)$.}
The conditional experiment estimates $V_\alpha(U;W\mid \tilde S)$ on the same
supersamples used for CMI. For $\alpha>1$, with
$q=\alpha'=\alpha/(\alpha-1)$, the rigorous theorem-valid prefactor is the
Khinchine prefactor for the supersample Rademacher average,
\[
    \frac{K_q c}{\sqrt{n}},
    \qquad
    K_q =
    \begin{cases}
        1, & 1 \le q \le 2,\\
        \sqrt{q-1}, & q \ge 2.
    \end{cases}
\]
Hence $\alpha=2$ gives the exact familiar prefactor $c/\sqrt{n}$, while the
$\alpha=1$ endpoint is treated separately with the bounded-loss norm
$\|\widehat{\Delta}_{\tilde S}(w,U)\|_{\infty,U} \le c$.

\paragraph{Finite-candidate approximations and split alpha-selection.}
The experiment also reports Conditional finite-candidate approx.\
$V_\alpha$ (Theorem~\ref{th:conditional_valpha}) and Parameter
finite-candidate approx.\ $V_\alpha$ (Theorem~\ref{th-main}), in which the
theorem-side supremum is replaced by a maximum over a finite candidate set
built from posterior means, marginal-mixture samples, prior samples, and
deterministic anchors. These finite-candidate approximation curves are used as
diagnostics for the conservativeness of the analytic prefactor bounding: when
they lie substantially below the corresponding theorem-valid curves, the gap
indicates that much of the looseness comes from the bounded prefactor or
supremum control rather than from the estimated dependence term itself. They
are still not theorem-valid upper bounds unless the finite candidate maximum is
externally certified to dominate the true supremum.

For every $\alpha$-dependent curve in the extended comparison, $\alpha^\star$ is
selected on an independent Monte Carlo pool and then evaluated on a disjoint
pool. The plotted uncertainty bands are standard errors on that evaluation pool
at the selected $\alpha^\star$. This split-selection protocol is used for the
Conditional theorem-valid $V_\alpha$
(Theorem~\ref{th:conditional_valpha}), Conditional finite-candidate approx.\
$V_\alpha$ (Theorem~\ref{th:conditional_valpha}), Parameter Cor.~2 $V_\alpha$
(Corollary~\ref{cor:subgaussian}), and Parameter finite-candidate approx.\
$V_\alpha$ (Theorem~\ref{th-main}).

\subsection{Reported Figures and Interpretation}
Figure~\ref{fig:blr_extended_conditional} shows
the resulting bounds at $n=100$ over
$d \in \{1,2,5,10\}$, while
Figure~\ref{fig:blr_extended_conditional_alpha} reports the selected
$\alpha^\star$ values for the split-selected curves.

\begin{figure*}[t]
    \centering
    \includegraphics[width=0.85\textwidth]{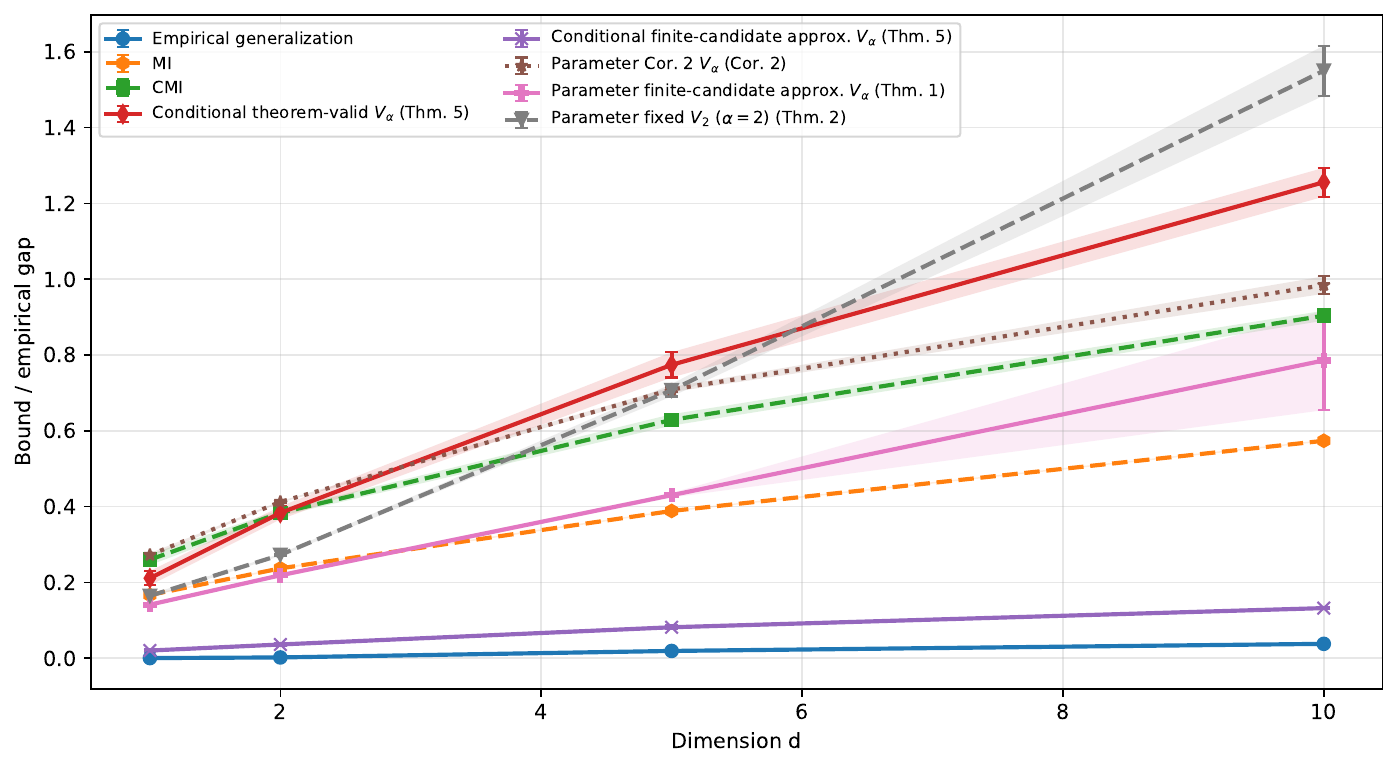}
    \caption{BLR bound comparison for the reported experiment. The figure includes Conditional theorem-valid $V_\alpha$ (Theorem~\ref{th:conditional_valpha}), Conditional finite-candidate approx.\ $V_\alpha$ (Theorem~\ref{th:conditional_valpha}), Parameter Cor.~2 $V_\alpha$ (Corollary~\ref{cor:subgaussian}), Parameter finite-candidate approx.\ $V_\alpha$ (Theorem~\ref{th-main}), and Parameter fixed $V_2$ ($\alpha=2$) (Theorem~\ref{th:variance_bound}), together with the MI and CMI baselines.}
    \label{fig:blr_extended_conditional}
\end{figure*}

\begin{figure*}[t]
    \centering
    \includegraphics[width=0.85\textwidth]{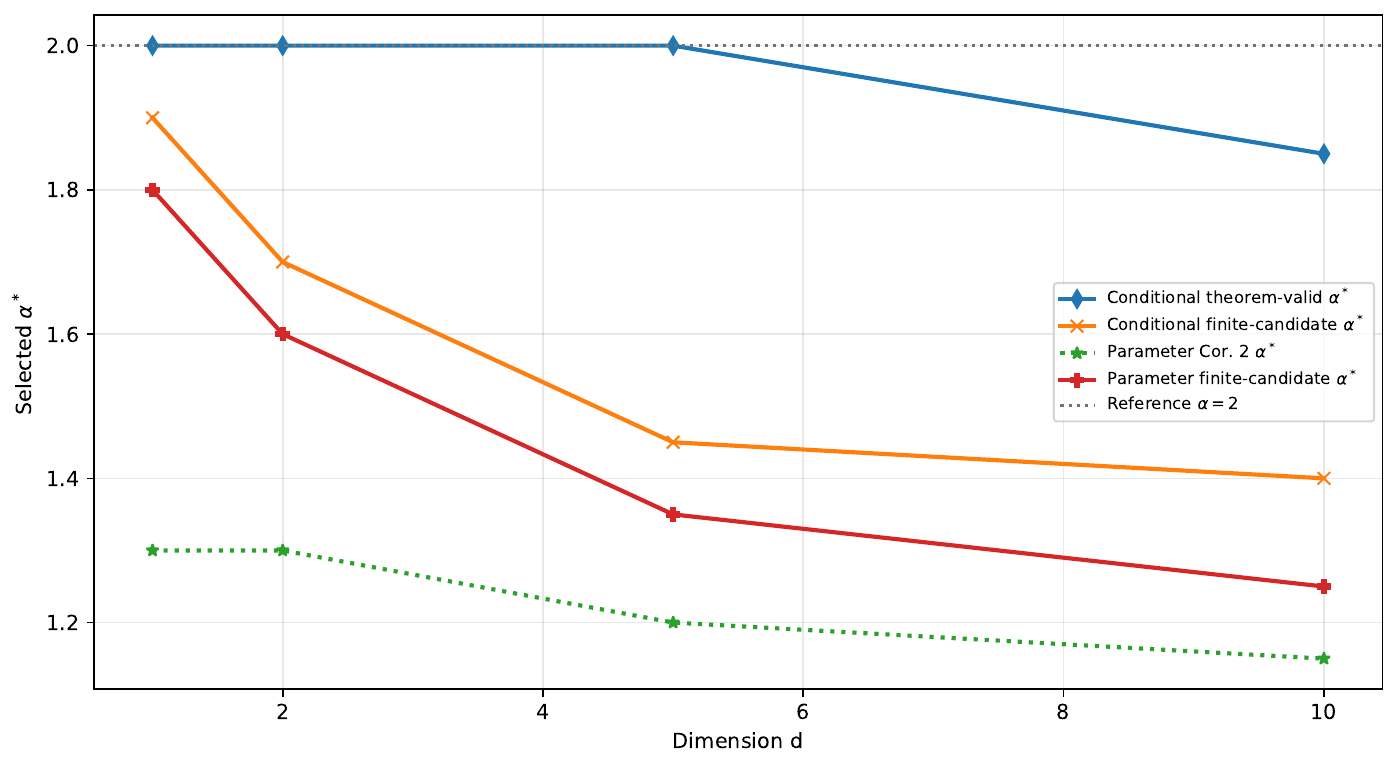}
    \caption{Selected $\alpha^\star$ values for the reported BLR experiment. Conditional theorem-valid $\alpha^\star$ stays at $2$ for $d=1,2,5$ and shifts slightly below $2$ only at $d=10$, whereas Conditional finite-candidate $\alpha^\star$, Parameter Cor.~2 $\alpha^\star$, and Parameter finite-candidate $\alpha^\star$ move below $2$ earlier.}
    \label{fig:blr_extended_conditional_alpha}
\end{figure*}

Several patterns are stable across dimensions. First, the estimated CMI
is smaller than the estimated MI, but the standard bounded-loss CMI bound is
numerically larger because its prefactor is twice that of the Hoeffding-based MI
bound. Second, among the theorem-valid $V_\alpha$ curves, the conditional
formulation is dimension-free but numerically conservative in this regime. Third, the
parameter-space side illustrates the main trade-off behind our framework. The
fixed $V_2$ curve is competitive in the lowest dimension, but it becomes loose
as the parameter dimension grows. Optimizing
the pure Corollary~\ref{cor:subgaussian} bound over $\alpha$ reduces this
high-dimensional growth.

The finite-candidate diagnostics give substantially smaller numerical values than the Monte Carlo estimates of the theorem-based $V_\alpha$ expressions. In particular, the conditional finite-candidate
approximation increases only from about $0.020$ at $d=1$ to about $0.132$ at
$d=10$, and the parameter finite-candidate approximation also improves
considerably over the parameter-space theorem-valid curves in moderate and high
dimension. In our view, this gap is mainly evidence that the theorem-valid
prefactor control is conservative in this BLR regime, rather than evidence that
the underlying dependence term itself is poorly captured. 

The selected $\alpha^\star$ values are shown in
Figure~\ref{fig:blr_extended_conditional_alpha}. This plot clarifies how the interpolation
behaves across dimensions.

The $\alpha^\star$ trajectories support the central interpretation of the
paper. The theorem-valid conditional curve remains close to the variance-like
choice $\alpha=2$ (it selects $2.0, 2.0, 2.0, 1.85$ as $d$ grows), whereas the
conditional finite-candidate approximation shifts to
$1.9, 1.7, 1.45, 1.4$. On the parameter side, the pure Corollary~\ref{cor:subgaussian}
curve selects $1.3, 1.3, 1.2, 1.15$, and the parameter finite-candidate
approximation selects $1.8, 1.6, 1.35, 1.25$. Thus, as the ambient dimension
increases, the best-performing parameter-space formulations consistently move
away from the rigid $\alpha=2$ regime toward smaller $\alpha$. This is exactly
the flexibility that the $V_\alpha$ family is meant to expose.

Table~\ref{tab:blr_appendix_config} summarizes the simulation budgets used for
these final appendix figures.

\begin{table}[t]
\centering
\caption{Configuration used to generate Figures~\ref{fig:blr_extended_conditional}
and~\ref{fig:blr_extended_conditional_alpha}.}
\label{tab:blr_appendix_config}
\begin{tabular}{@{}llc@{}}
\toprule
\textbf{Parameter} & \textbf{Description} & \textbf{Value} \\
\midrule
$\sigma_{\mathrm{noise}}^2$ & Observation noise variance & $0.16$ \\
$\lambda$ & Gaussian prior precision & $2.0$ \\
$c$ & Clipping level in the squared loss & $4.0$ \\
$\sigma_{\mathrm{loss}}^2$ & Subgaussian scaling constant & $4.0$ \\
$\mathbf w_{\mathrm{true}}$ & Ground-truth coefficient in each coordinate & $0.8$ \\
$d$ & Dimensions evaluated & $[1,2,5,10]$ \\
$n$ & Training sample size & $100$ \\
\midrule
$M_{\mathrm{data}}$ & Independent datasets for empirical averages & $256$ \\
$M_{\mathrm{ref}}$ & Reference datasets for mixture estimation & $256$ \\
$N_{\mathrm{holdout}}$ & Holdout size for population loss estimation & $10{,}000$ \\
$N_W$ & Weight samples for parameter-space $V_\alpha$ estimates & $384$ \\
$M_{\mathrm{sup}}$ & Supersamples per conditional estimate & $20$ \\
$M_{\mathrm{sel}}/M_{\mathrm{eval}}$ & Split-selection supersamples & $10/10$ \\
$K_{\mathrm{mix}}$ & Marginal mixture components & $32$ \\
$J_U$ & Selector draws per CMI estimate & $24$ \\
\midrule
$R_{\mathrm{param}}$ & Parameter-profile repeats (selection/evaluation) & $4/4$ \\
$R_{\mathrm{proxy}}$ & Finite-candidate proxy repeats (selection/evaluation) & $4/4$ \\
$N_{\mathrm{cand}}$ & Total parameter proxy candidates & $226$ \\
$\alpha$ grid & Optimization range & $1.05$ to $3.0$ in steps of $0.05$ \\
\bottomrule
\end{tabular}
\end{table}


\section{Additional Results on Canonical Channel Models}
\label{app:canonical_channels}

To complement the continuous Bayesian linear regression experiments in the main text, we provide additional simulations on three classical, discrete channel models: the Binary Symmetric Channel (BSC), the Binary Erasure Channel (BEC), and the Z-Channel. 

These settings are highly instructive because they allow for the \textit{exact analytical computation} of both the true generalization error and the theoretical bounds, entirely eliminating any variance or approximation error from Monte Carlo estimation.

\paragraph{Setup.} 
For these experiments, we consider a minimal learning setting where the dataset size is $n = 1$. The instance space and the hypothesis space are binary: $\mathcal{Z} = \mathcal{W} = \{0, 1\}$.  In our setup, the learning kernel $P_{W|S}$ is chosen to behave like the canonical channel under consideration (BSC, BEC, or Z-channel). Concretely, the channel takes the dataset $S$ as input, assumed to be uniformly distributed over $\mathcal{Z}$, and produces the algorithm’s output $W$ according to the corresponding channel transition probabilities. For the BSC and Z-channel we use binary output support. For the BEC we enlarge the output support to $\{0,1,e\}$, where $e$ denotes the erasure symbol; in the squared-loss comparison we map $e$ to the neutral prediction $0.5$. We define the loss function as the squared error $\ell(w, z) = (w - z)^2$. 
Because both $w$ and $z$ are restricted to $\{0, 0.5, 1\}$, the loss is strictly bounded in $[0, 1]$. By Hoeffding's Lemma, the loss is therefore $\frac{1}{4}$-subgaussian (i.e., $\sigma^2 = 0.25$). For visual clarity in the comparisons below, the constant leading term $\sqrt{\sigma^2/n}$ is factored out and omitted from the plotted curves.

All information quantities in these comparisons use natural logarithms (nats). The results, shown in Figure~\ref{fig:canonical}, plot the performance of the $V_2$ bound against the classical MI bound across the full parameter range of each respective channel.

\begin{figure}[h!]
    \centering
    \includegraphics[width=\linewidth]{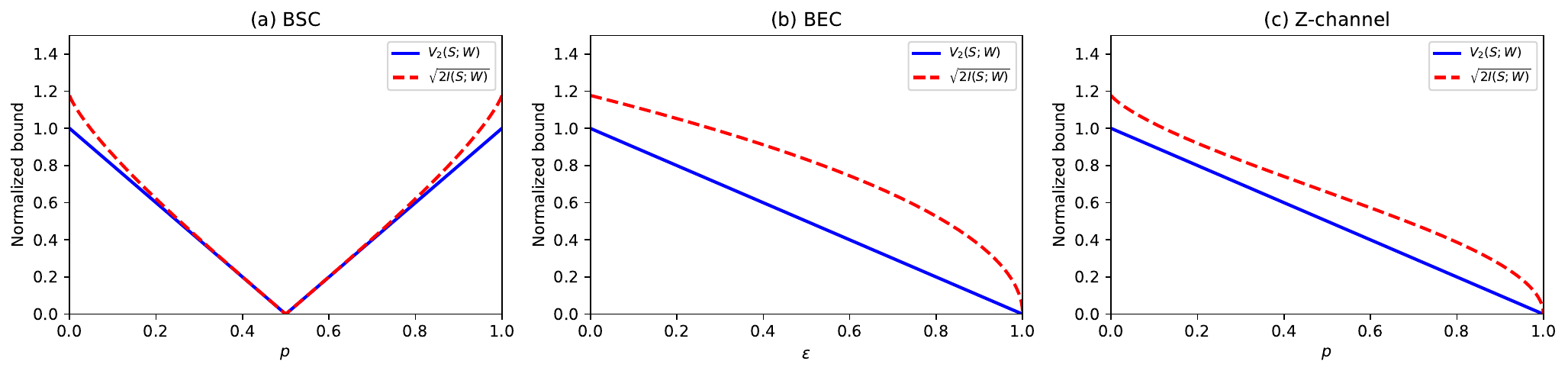}
    \caption{Normalized proposed (blue solid) and mutual-information
    (red dashed) generalization bounds at $\alpha=2$ for (a) the BSC,
    (b) the BEC, and (c) the Z-channel. The common factor
    $\sqrt{\sigma^2/n}=1/2$ is omitted.}
    \label{fig:canonical}
\end{figure}

For the three canonical channels shown here, the proposed $V_2$ bound is no larger than the MI bound and is strictly smaller away from equality cases. These examples illustrate the possible gains in discrete settings.

\subsection{Illustration of the \texorpdfstring{$\alpha$}{alpha} Trade-off}
\label{app:alpha_tradeoff}

We now provide an explicit illustration of the trade-off described in Remark~\ref{remark:tradeoff}. While the previous experiments fix $\alpha = 2$, here we vary $\alpha$ to examine how the two competing terms in Theorem~\ref{th-main} interact.

\paragraph{Setup.}
We consider a simple Z-channel model. The dataset $S$ consists of a single sample drawn from a Bernoulli($q$) distribution with parameter $q=0.8$, and the algorithm output $W$ is generated via a Z-channel with transition probability $p=0.3$. As in the previous experiments, we use the squared loss $\ell(w,z) = (w - z)^2$.
\enlargethispage{2\baselineskip}
\paragraph{Trade-off behavior.}
Figure~\ref{fig:trade-off} plots the two components of our bound as functions of $\alpha$: the information term $V_\alpha(S;W)$ and the loss sensitivity term given by the $L_{\alpha'}$-norm. As predicted by Remark~\ref{remark:tradeoff}, these terms exhibit opposing monotonic behavior: $V_\alpha(S;W)$ increases with $\alpha$, while the loss sensitivity term decreases due to the conjugacy relation between $\alpha$ and $\alpha'$.

Their product, which constitutes our generalization bound, is also shown in the figure. Notably, the bound is minimized at an intermediate value of $\alpha$ (in this instance, $\alpha = 2$), illustrating that the tightest guarantee is achieved by balancing these two effects rather than optimizing either term individually.


\rev{The minimum plotted $V_\alpha$ bound (approximately $0.224$ at $\alpha\approx2$) is below the MI bound (approximately $0.314$). The two curves can cross away from the minimizing $\alpha$.}
\begin{figure}[h]
    \centering
    \includegraphics[width=0.85\linewidth]{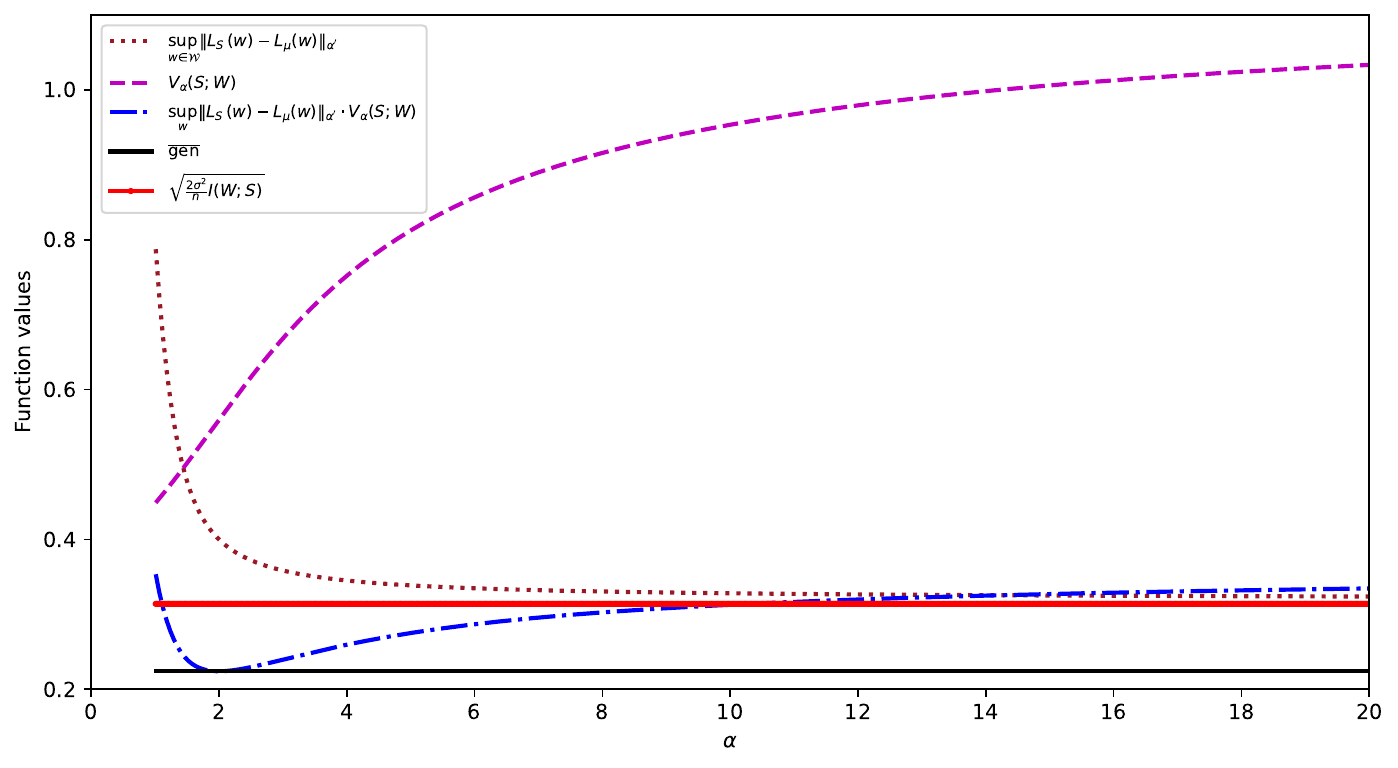}
    \caption{Illustration of the trade-off in Remark~\ref{remark:tradeoff}. As $\alpha$ increases, the information term $V_\alpha(S;W)$ grows while the loss sensitivity term $\|\cdot\|_{\alpha'}$ shrinks. Their product (our bound) is minimized at an intermediate value, providing a tighter guarantee than the classical mutual information bound.}
    \label{fig:trade-off}
\end{figure}

This example illustrates the trade-off described in Remark~\ref{remark:tradeoff}: balancing the loss-sensitivity and dependence terms yields the smallest bound at an intermediate value of $\alpha$ in the considered channel setting.


\end{document}